\documentclass{article}
\usepackage{fullpage,cite}
\usepackage{mysty,bm}
\usepackage[capitalise]{cleveref}
\usepackage{amssymb}

\newcommand{\argmin}{\operatorname{argmin}}
\newcommand{\rank}{\operatorname{rank}}

\newcommand{\mcal}{\mathcal}
\newcommand{\scr}{\mathscr}
\newcommand{\bb}{\mathbb}

\newcommand{\mbf}{\mathbf}
\newcommand{\bnm}{\big\|}
\newcommand{\tsp}{\widetilde{\scr{P}}_{\mathbb{T}_l}}
\newcommand{\wtd}{\widetilde}
\newcommand{\wht}{\widehat}

\graphicspath{{./figure/}}


\begin{document}

\title{Fast and Provable Tensor-Train Format Tensor Completion via Precondtioned Riemannian Gradient Descent}
\author{
Fengmiao Bian\thanks{Department of Mathematics, The Hong Kong University of Science and Technology, Hong Kong, China.
 E-mail: mafmbian@ust.hk} \and
		Jian-Feng Cai\thanks{Department of Mathematics, The Hong Kong University of Science and Technology, Hong Kong, China. E-mail: jfcai@ust.hk} \and
		Xiaoqun Zhang\thanks{School of Mathematical Sciences and Institute of Natural Sciences, Shanghai Jiao Tong University, Shanghai China. E-mail: xqzhang@sjtu.edu.cn} \and
  Yuanwei Zhang\thanks{School of Mathematical Sciences, Shanghai Jiao Tong University, Shanghai China. E-mail: sjtuzyw@sjtu.edu.cn}
		}
\date{}
\maketitle

\begin{abstract}
Low-rank tensor completion aims to recover a tensor from partially observed entries, and it is widely applicable in fields such as quantum computing and image processing. Due to the significant advantages of the tensor train (TT) format in handling structured high-order tensors, this paper investigates the low-rank tensor completion problem based on the TT-format. We proposed a preconditioned Riemannian gradient descent algorithm (PRGD) to solve low TT-rank tensor completion and establish its linear convergence. Experimental results on both simulated and real datasets demonstrate the effectiveness of the PRGD algorithm. On the simulated dataset, the PRGD algorithm reduced the computation time by two orders of magnitude compared to existing classical algorithms. In practical applications such as hyperspectral image completion and quantum state tomography, the PRGD algorithm significantly reduced the number of iterations, thereby substantially reducing the computational time.
\end{abstract}

\begin{keywords}
Low-rank tensor completion, tensor train decomposition, preconditioned Riemannian gradient descent, quantum state tomography, hyperspectral image completion
\end{keywords}

\begin{AMS}
{15A23 $\cdot$ 15A83 $\cdot$ 68Q25 $\cdot$ 90C26 $\cdot$ 90C53}
\end{AMS}

\section{Introduction}
Tensors, with their inherent ability to express complex interactions within high-dimensional data, have become indispensable tools in the fields of data science and engineering. They are widely used, for instance, in recommendation systems (such as movie ratings, user-product-time tensors \cite{HK, DGGG}), image processing (such as hyperspectral images, wavelength-image tensors \cite{foster2022colour}), and financial mathematics (such as stock prices, time scale-price tensors \cite{GKS20}). However, the storage and acquisition of high-dimensional tensor data present significant challenges. Consequently, a core issue is how to accurately recover the target tensor from incomplete observational data. Recovering a target tensor directly from only a subset of its entries, without imposing any structural assumptions, is an ill-posed inverse problem.  To address this challenge, a low-rank structure is commonly assumed, effectively constraining the data to lie in an underlying low-dimensional space, rendering the low-rank tensor completion problem tractable. There are many existing algorithms for solving tensor completion problems. Formally, tensor completion is merely an extension of matrix completion, but the multilinear nature of tensors introduces significant computational challenges. In addressing low-rank matrix completion, since directly solving the non-convex problem of finding the lowest rank is NP-hard, a classic approach is to relax the rank into the nuclear norm of the matrix. However, this convex relaxation method is not feasible for low-rank tensor completion, primarily because the nuclear norm of tensors, although also a convex function, is usually NP-hard to compute \cite{HL2013}. This computational difficulty exists in many tensor-related convex functions, such as tensor operator norms. Consequently, convex relaxation techniques for tensor completion are often computationally infeasible. Even so, some advanced results from matrix completion can be applied to tensor completion by disregarding the inherent tensor structure and reformulating the tensor completion problem as a matrix completion problem. This allows for the application of matrix completion algorithms, such as \cite{CR2009, CT2010, CCS2010, R2011, XY2021, CFMY, CCZ, SL2016}. While these methods are easy to implement, they result in high computational costs due to the transformation of tensors into large-sized matrices. Therefore, the effective approach to tensor completion is to directly exploit the low-rank tensor structure itself.

Compared to dealing with full-size tensors, tensor decomposition \cite{KB} allows us to exploit the low-rank structure inherent in tensor completion problems, thereby reducing the number of parameters in the search space and saving storage space. The rank of a tensor depends on the specific tensor decomposition employed, with common decompositions including CP decomposition \cite{H1927}, Tucker decomposition \cite{T1966}, and Tensor Train (TT) decomposition \cite{oseledets2011tensor}.  TT-format decomposition first appeared in pioneering works in physics, such as quantum dynamics for very large systems \cite{V03, V04, GVWC}. The TT-format decomposition combines the advantages of CP and Tucker decompositions, allowing for more effective handling of higher-order tensors, especially in representing matrix product states in quantum computing.  Moreover, compared to Tucker decomposition, the model complexity of TT-format decomposition grows linearly with the order of the tensor, offering higher storage and computational efficiency. The concept of TT-rank was also introduced by \cite{oseledets2011tensor}. Therefore, this paper focuses on the low-rank tensor completion problem based on TT-format decomposition. Within this framework, the low-rank tensor completion problem can be formulated as follows:
\begin{equation}
\label{model: Low TT}
\min_{\mcal{T}\in \bb{R}^{d_1\times\cdots \times d_m}} f (\mcal{T}) := \frac{1}{2} \Big\langle \mcal{T} - \mcal{T}^{*}, \scr{P}_{\Omega}(\mcal{T} - \mcal{T}^*) \Big\rangle ~~ \text{s.t.} ~~ \textmd{rank}_{tt} (\mcal{T}) = \mbf{r},
\end{equation}
where $\mcal{T}^*$ is the target tensor to be recovered, $\Omega$ is a subset of indices for observed entries, $\textmd{rank}_{tt}(\mcal{T})$ refers to the TT-rank of $\mcal{T}$ which will be introduced later, and  $\scr{P}_{\Omega}$ is the sampling operator defined by 
$$
\scr{P}_{\Omega}(\mcal{T})(i_1, i_2, \dots, i_m)=
\begin{cases}
\mcal{T}(i_1, i_2, \dots, i_n), ~& \textmd{if}~(i_1, i_2, \dots, i_n) \in \Omega,\\
0, ~&\textmd{otherwise.}
\end{cases}
$$
Currently, some convex optimization algorithms are employed to solve low TT-rank tensor completion problems. For instance, \cite{BPTD} utilizes the nuclear norm based on unfolding matrices and demonstrated the outstanding performance of matrix decomposition methods in image restoration problems. Based on this work, \cite{DHJZY} further enhances the effects on image and video restoration by applying total variation regularization. Beyond convex optimization methods, some non-convex algorithms, particularly those involving tensor decomposition, are also used to solve low TT-rank tensor completion problems. For example, \cite{WAS} explores low TT-rank tensor completion problems by combining TT-SVD spectral initialization with alternating minimization techniques. \cite{YZGC} adopts a gradient descent algorithm with random initialization to solve tensor completion problems in TT-format decomposition. \cite{KBDYW} introduces an efficient heuristic initialization method that can solve image and video restoration problems very effectively. However, these works mainly focus on algorithm design and have limited or no theoretical foundation.  Notably, \cite{IMH} proposes a novel relaxation method based on the Schatten norm, providing theoretical guarantees. Additionally, \cite{ZZZW} employs a fast higher-order orthogonal iteration algorithm, determining the statistically optimal convergence rate for TT-format tensor SVD.

Since tensors with a fixed TT rank form a manifold \cite{holtz2012manifolds}, many manifold-based optimization algorithms, particularly those based on Riemannian optimization, can be easily adapted to the low TT-rank tensor completion. Unlike the projected gradient descent algorithm, Riemannian optimization algorithms consider the target tensor as a point on a Riemannian manifold and employ Riemannian gradient descent (RGD) to minimize the objective function.  The application of Riemannian gradient descent algorithms on matrices and tensors has been extensively studied, see \cite{S2016, WCCL2016, WCCL2020, KSV, CLX2022}. Similarly, the low TT-rank tensor completion problem can be reformulated as an unconstrained problem on a Riemannian manifold and solved using the RGD algorithm. Recent \cite{cai2022tensor} have shown that if the initialization is sufficiently good, RGD can achieve linear convergence in low TT-rank tensor completion.  Moreover, when the sample size is nearly optimal, the iteration error produced by the RGD algorithm contracts at a rate less than one. This indicates that RGD is one of the most effective algorithms for solving low TT-rank tensor completion problems. However, as \cite{bian2023preconditioned} illustrates, Riemannian Gradient Descent (RGD) initially requires linearizing the objective function $f(\mcal{T})$ in the tangent space, thus the convergence speed depends on the condition number of the sensing operator in the tangent space, i.e., the effectiveness of RGD highly depends on the metric used on the tangent space. The Preconditioned Riemannian Gradient Descent (PRGD) algorithm \cite{bian2023preconditioned} significantly enhances the performance of RGD by by replacing the standard metric with a simpler, adaptive metric, making it up to ten times faster than RGD in some large-scale low-rank completion problems. Inspired by this result, one naturally wonders whether this preconditioned method could also be applicable to tensor completion problems.

This article effectively answers this question by choosing an asuitable metric for the manifold composed of tensors with fixed TT-rank, which improves the performance of the RGD algorithm. We chose an adaptive, data-driven metric on the tangent space for the RGD algorithm, resulting in a new preconditioned Riemannian gradient descent (PRGD) algorithm. To validate the efficiency of the PRGD algorithm, we conducted numerical experiments on both synthetic and real datasets. The synthetic experiments show that PRGD can enhance RGD performance by two orders of magnitude. Additionally, in practical applications such as hyperspectral image completion and quantum state tomography, the efficacy of PRGD has been confirmed, achieving several tens of times faster than RGD. Moreover, we have provided a convergence guarantee for the PRGD algorithm, demonstrating that PRGD can linearly converge to the target tensor with the same initialization as RGD \cite{cai2022tensor}.

The main contributions of this article are summarized as follows:
\begin{itemize}
\item {\it Data-driven metric and provable theory.} We unfold the gradient $\mathcal{G}_l$ into matrices for each mode and weight the gradient using the 2-norm of the rows of the unfolded matrices. Thus, an adaptive data-driven metric is naturally constructed on the tangent space, and a new preconditioned Riemannian gradient descent (PRGD) algorithm is proposed. This weighting matrix is simple and easy to compute. Theoretically, we demonstrate that under good initialization conditions ($\| \mcal{T}_0 - \mcal{T}^* \|_F = o(1) \|\mcal{T}^* \|_F$), the PRGD algorithm can converge linearly to the target tensor. 
\item {\it Numerical experiments.} To validate the effectiveness of the PRGD algorithm, we conducted experiments on both synthetic and real datasets. On synthetic data, the PRGD algorithm significantly reduces the number of iterations and decreases the computation time by two orders of magnitude compared to the RGD algorithm.  We further applied the PRGD algorithm to practical problems such as hyperspectral image completion and quantum state tomography. Experimental results indicate that the PRGD algorithm also significantly reduces both the number of iterations and computation time, thereby effectively enhancing computational efficiency. This further validates that selecting a suitable metric on the tangent space can enhance the performance of the RGD algorithm, and the adaptive data-driven metric proposed by the PRGD algorithm is indeed an effective metric.
\end{itemize}

The rest of this paper is organized as follows. In section \ref{section: Preliminary}, we recall the basics of TT-format tensors, its decomposition and TT-SVD. In section \ref{sec:algorithm}, we describe our algorithm in detail. In section \ref{sec:convergence},  we present theoretical recovery guarantees for our proposed algorithm. In section \ref{section: numerical experiments}, we demonstrate the efficiency of our PRGD algorithm and highlight its superior performance compared to existing methods. In section \ref{sec:conclusion}, we conclude the paper.

{\bf Notations} In this paper, we use the calligraphic letters $(\mcal{T}, \wht{\mcal{T}})$ to denote tensors, the capital letters $(T, \wht{T})$ to denote the TT components, the mathematical scripts $(\scr{P}, \scr{I})$ to denote operators, the blackboard bold-face letters $(\bb{R}, \bb{M})$ to denote sets, the capital case bold-face letters $(\mbf{G}, \mbf{I})$ to denote matrices, and the lower case bold-face letters $(\mbf{x}, \mbf{y})$ to denote vectors. For a positive integer $d$, we denote $[d]:=\{1, \dots, d\}$. The Frobenius norm of tensors or matrices is denoted by $\|\cdot\|_F$. The notations $C_m, C_{m,1}, \dots$ represent constants that may depend on the tensor order $m$. For tensor size $\mbf{d} = (d_1, \dots, d_m)$ and rank $\mbf{r} = (r_1, \dots, r_{m-1})$, we denote $\bar{d}:=\max_{i=1}^m d_i, \bar{r}:=\max_{i=1}^{m-1} r_i$, and $\underline{r}:=\min_{i=1}^{m-1} r_i$. Additionally, we define $d^* := d_1 \cdots d_m$ and $r^*:=r_1\cdots r_{m-1}$. Define $\|\mcal{G}_l \|_{\vee} = \max_{i=1, \dots, m}  \max_{k} \| \scr{M}_{i}(\mcal{G}_l)(k, :)\|_2 $.

\section{Problem and Preliminary of TT-format Tensors}
\label{section: Preliminary}

\subsection{Preliminaries}        
We now briefly review some basic operations frequently used in the TT-format tensor representation. These operations are critical in the TT-format preconditioned RGrad (PRGD) algorithm. Interested readers can refer to \cite{oseledets2011tensor, cai2022provable} for more details.
\vspace{0.3cm}

{\bf TT-format.} For tensor $\mcal{T}\in \bb{R}^{d_1\times \cdots \times d_m}$, TT-format rewrites it into a product of $m$ $3$-order tensor. We call those $3$-order tensor components (or cores) and denoted by $T_1, \dots, T_m$, where $T_i\in \bb{R}^{r_{i-1}\times d_{i}\times r_{i}}$ and $r_0 = r_m = 1$, Those cores construct the tensor $\mcal{T}$ in the following manner that for all $(x_1,\dots, x_m)\in [d_1]\times \dots\times[d_m]$
    \begin{equation}
        \label{equ: TT Core}
        \mcal{T}(x_1, \dots, x_m) = T_1(x_1, :)T_2(:, x_2, :)\cdots T_{m}(:, x_m)
    \end{equation}
    where $T_{k}(:, x_k, :)\in \bb{R}^{r_{k-1}\times r_{k}}$ is the $x_k$-th submatrix of $T_k$ with the second index being fixed are ${x_k}$.\\
    
{\bf Separation and TT rank.}
The $i$-th separation of $\mathcal{T}$, denoted by $\mathcal{T}^{\langle i\rangle}$, is a matrix of size $\left(d_1 \cdots d_i\right) \times\left(d_{i+1} \cdots d_m\right)$ and defined by
    $$
    \mathcal{T}^{\langle i\rangle} = \textbf{reshape}(\mathcal{T}, \prod_{k = 1}^i d_k, \prod_{l = i+1}^{m} d_{l}) .
    $$
 Then, $r_i$ is defined by the rank of $\mathcal{T}^{\langle i\rangle}$. The collection $\boldsymbol{r}=\left(r_1, \cdots, r_{m-1}\right)$ is called the TT-rank of $\mathcal{T}$. For convenience, we denote $\rank_{tt}(\mcal{T}) = \mbf{r}$. As proved by Theorem 1 in  \cite{holtz2012manifolds}, the TT rank is well-defined for any tensor. Meanwhile, through the fast TT-SVD algorithm, we can always obtain the TT decomposition $\mcal{T}=[ T_1, \dots, T_m ]$, where $T_i \in \mathbb{R}^{r_{i-1} \times d_i \times r_i}$.\\

{\bf Unfolding and mode-$i$ product.}
\textit{Unfolding} process is to reorder the element of $\mcal{T}\in\bb{R}^{d_1\times\cdots\times d_m}$ into a matrix. We use $\scr{M}_i$ as the mode-$i$ unfolding operator, then $\scr{M}_i (\mcal{T})$ is matrix in $\bb{R}^{d_i\times \prod_{k\neq i} d_k}$. Given matrix $\mbf{X}\in \bb{R}^{d\times d_i}$, the \textit{mode-$i$ product} of $\mcal{T}$ and $\mbf{X}$ is denoted by $\mcal{Y} = \mcal{T}\times_i \mbf{X}$, which can be simply defined through unfolding operator
$$
\mcal{Y} = \mcal{T}\times_i \mbf{X}\Longleftrightarrow \scr{M}_i(\mcal{Y}) = \mbf{X} \scr{M}_i(\mcal{T}).
$$
 
{\bf Tensor condition number.} For TT-format tensor $\mcal{T}$ with TT-rank $(r_1, \dots, r_{m-1})$, the condition number of $\mcal{T}$ is defined by the smallest singular value among all the matrices obtained from separation, that is,
$$
\kappa(\mcal{T}):=\frac{\overline{\sigma}(\mcal{T})}{\underline{\sigma}(\mcal{T})} := \frac{\max_{i=1}^{m-1} 
\sigma_1(\mcal{T}^{\langle i\rangle})}{\min_{i=1}^{m-1} \sigma_{r_i} (\mcal{T}^{\langle i\rangle})},
$$
where $\sigma_{k}(\cdot)$ returns the $k-$th singular value of a matrix.

{\bf Left and right unfoldings.} For any $3$-order tensor $\mathcal{U} \in \mathbb{R}^{p_1 \times p_2 \times p_3}$, the left and right unfolding linear operators $L: \mathbb{R}^{p_1 \times p_2 \times p_3} \rightarrow \mathbb{R}^{\left(p_1 p_2\right) \times p_3}$, $R: \mathbb{R}^{p_1 \times p_2 \times p_3} \rightarrow \mathbb{R}^{p_1 \times\left(p_2 p_3\right)}$ are defined by
    $$
    L(\mathcal{U})(j x, k)=\mathcal{U}(j, x, k), \quad \text { and } \quad R(\mathcal{U})(j, x k)=\mathcal{U}(j, x, k) .
    $$
We say the component $T_i$ is \textit{left-orthogonal} if $L(T_i)^\top L(T_i)$ is an identity matrix. If all TT components $T_1, \dots, T_{m-1}$ in \eqref{equ: TT Core}
are \textit{left-orthogonal}, then such a TT decomposition is called \textit{left-orthogonal decomposition} of $\mcal{T}$.

{\bf Left and right parts of TT-format tensor.} For $\mcal{T} = [T_1,\dots, T_m]$, define the matrix $T^{\leq i}$ of size $(d_1\cdots d_i)\times r_i$ is the $i$-th left part of $\mcal{T}$
    \[
    T^{\leq i}(x_{1}, \dots, x_{i}, :) = T_1(x_1, :)T(:, x_2, :)\cdots T_i(:, x_i, :)\]
    Similarly, we define the matrix $T^{\geq i}$ of size $r_{i-1} \times\left(d_i \cdots d_m\right)$, know as the $i$-th right part of $\mathcal{T}$, column-wisely by
    $$
    T^{\geq i}\left(:, x_i \cdots x_m\right)=T_i\left(:, x_i, :\right) T_{i+1}\left(:, x_{i+1}, :\right) \cdots T_m\left(:, x_m\right)
    $$
    
    By default, we set $T^{\leq 0}=T^{\geq m+1}=[1]$. With these notations, the $i$-th separation of $\mathcal{T}$ can be factorized as
    $$
    \mathcal{T}^{\langle i\rangle}=T^{\leq i} T^{\geq i+1} .
    $$
    
{\bf TT-SVD.} Given an arbitrary $m$-th order tensor $\mcal{X}\in \bb{R}^{d_1\times\cdots\times d_m}$ and TT-rank $\mbf{r} = (r_1, \dots, r_{m-1})$, one can apply TT-SVD algorithm \cite{oseledets2011tensor} to obtain a TT-format approximation of $\mcal{X}$. Here we restate this algorithm as in \cref{alg: TT-SVD}.

\begin{center}    
\begin{algorithm}[H]
\caption{TT-SVD}
\begin{algorithmic}
\REQUIRE an $m$-order tensor $\mcal{X}\in \mathbb{R}^{d_1\times\cdots \times d_m}$ and target TT rank $\mathbf{r} = \left(r_1, r_2, \cdots, r_{m-1}\right)$.\\
\STATE Set $T^{\leq 0} = [1]$.
\FOR{$i = 1,\cdots, m-1$}
\STATE
$L(T_i)\longleftarrow \text{ the top } r_i \text{ left singular vectors of the matrix } (T^{\leq i-1}\otimes \mbf{I}_{d_i})^\top \mcal{X}^{\langle i\rangle} $

\STATE Set $T^{\leq i} =  (T^{\leq i-1}\otimes \mbf{I}_{d_i})L(T_i)$.
\ENDFOR
\STATE $T_m = (T^{\leq m-1})^\top \mcal{X}^{\langle m-1\rangle}$
\ENSURE:  $\operatorname{SVD}_{\mbf{r}}^{tt}(\mcal{X}) = [T_1, \ldots, T_m]\in \bb{M}_{\mbf{r}}^{tt}$.
\end{algorithmic} 
\label{alg: TT-SVD}
\end{algorithm} 
\end{center}

Usually, the output of TT-SVD is not the best low TT rank-$\mbf{r}$ approximation of $\mcal{X}$ and it is NP-hard to compute the best low TT rank-$\mbf{r}$ approximation of an arbitrary tensor\cite{hillar2013most}.

\section{Methodology}
\label{sec:algorithm}
\subsection{Riemannian Gradient Descent Algorithm}
Riemannian gradient descent algorithm is widely verified to be a powerful method for solving the optimization problem on the Riemannian manifold \cite{cai2022provable, bian2023preconditioned, wang2023implicit, zhang2024single}. Regarding the TT-format tensor completion problem in \eqref{model: Low TT}, the set of fixed TT rank tensors $\mathbb{M}_{\mbf{r}}^{tt}:= \{\mcal{T} \in \mathbb{R}^{d_1 \times d_2 \times \dots \times d_m}: \textmd{rank}_{tt} (\mcal{T}) = \mbf{r} \}$ is verified to be an embedding manifold in $\mathbb{R}^{d_1\times d_2\times \cdots\times d_m}$\cite{holtz2012manifolds}, thus one can apply the following RGD update scheme to solve \eqref{model: Low TT}: 
\begin{equation}\label{equ: RGD update}
    \mcal{T}_{l+1} = \scr{R}_l(\mcal{T}_l - \alpha_l \cdot\operatorname{grad}f(\mcal{T}_l))
\end{equation}
where $\alpha_l$ is the $l-$th stepsize, $\operatorname{grad}f(\mcal{T}_l)$ is the Riemannian gradient at $\mcal{T}_l$ on $\mathbb{M}_{\mbf{r}}^{tt}$ and $\scr{R}_l$ is the retraction map from the tangent space $\mathbb{T}_l$ at $\mcal{T}_l$ to $\mathbb{M}_{\mbf{r}}^{tt}$. The explicit formula of \eqref{equ: RGD update} involves the computation of Riemannian gradient $\operatorname{grad} f(\mathcal{T}_l)$, stepsize $\alpha_l$ and retraction operators $\scr{R}_l$:
\begin{itemize}
    \item \textit{Riemannian gradient and tangent space projection.}
    The Riemannian gradient $\operatorname{grad} f(\mcal{T}_l)$ depends on the metric used in tangent space $\mathbb{T}_l$. The classic RGD algorithm utilizes the Euclidean metric in $\mathbb{R}^{d_1\times\cdots\times d_m}$ as the metric in $\mathbb{T}_l$, thus the Riemannian gradient is obtained via projecting the vanilla gradient $\nabla f(\mcal{T}_l)$ onto the tangent space, e.g. $\operatorname{grad} f(\mcal{T}_l) = \scr{P}_{\mathbb{T}_l}(\nabla f(\mcal{T}_l))$. In computation, the tangent space can be parameterized explicitly under the chosen metric and the projection $\mathscr{P}_{\mathbb{T}_l} (\nabla f(\mcal{T}_l))$ can be obtained by fast recursive computations \cite{cai2022provable, steinlechner2016riemannian}. 
    
    \item \textit{Retraction by TT-SVD.}
    The retraction operator $\scr{R}_l$ is a map from tangent space $\mathbb{T}_l$ to manifold $\mathbb{M}_{\mbf{r}}^{tt}$. In general, the update $\mcal{T}_l - \alpha_l \cdot \mathscr{P}_{\mathbb{T}_l} (\nabla f(\mcal{T}_l))$ is not in $\mathbb{M}_{\mbf{r}}^{tt}$. A common choice of retraction is TT-SVD in \cref{alg: TT-SVD}. It is proved that the output of \cref{alg: TT-SVD} is quasi-optimal TT rank-$\mbf{r}$ approximation of the input tensor, with the projection error controllable \cite{oseledets2011tensor}. Moreover, the TT rank of $\mcal{T}_l - \alpha_l \cdot \mathscr{P}_{\mathbb{T}_l} (\nabla f(\mcal{T}_l))$ is no more than $2\mbf{r}$, indicating the computation cost of TT-SVD on $\mcal{T}_l - \alpha_l \cdot \mathscr{P}_{\mathbb{T}_l} (\nabla f(\mcal{T}_l))$ is very low \cite{steinlechner2016riemannian}.
    \item \textit{Choice of $\alpha_l$.} The optimal choice of $\alpha_l$ is the minimizer of the objective function along the geodesic of the search direction, i.e. $\alpha_l = \argmin_{\alpha} f(\scr{R}_l(\mcal{T}_l - \alpha \cdot \operatorname{grad} f(\mcal{T}_l)))$. However, it is hard to compute such a minimizer due to the nonlinearity of $\scr{R}_t$, instead, the linearized case in which minimizing in the tangent space is commonly adopted:
    $$
    \alpha_l = \argmin_\alpha f(\mcal{T}_l - \alpha \cdot \operatorname{grad} f(\mcal{T}_l)) = \frac{\|\scr{P}_{\mathbb{T}_l} \mcal{G}_l\|_F^2}{\|\scr{P}_{\Omega}\scr{P}_{\mathbb{T}_l} \mcal{G}_l\|_F^2}
    $$
\end{itemize}
It is proved that RGD with suitable initialization can converge linearly to the underlying low TT rank tensor under some mild assumptions \cite{cai2022provable}. Moreover, the contraction factor in linear convergence is independent of the condition number of the target low-rank solution, making RGD one of the most efficient algorithms for low-rank matrix/tensor completion or recovery problems.

\subsection{Data-Driven Metric and Precondition Riemannian Gradient Descent}

For unconstrained optimization problems in Hilbert space, it is widely investigated to design suitable metrics to accelerate the gradient descent algorithm. For example, Newton-type algorithms can be regarded as gradient descent in metrics weighted by the objective function's Hessian (or approximated Hessians). Similarly, one can accelerate RGD by altering the Euclidean metric in tangent space $\mathbb{T}_l$. \cite{bian2023preconditioned} uses this idea and proposes a Preconditioned RGD method for the low-rank matrix recovery problem.

\subsubsection{Data-Driven Metric}
We construct our metric from the gradient for each iterate. The gradient contains the observations of $\mcal{T}^*$ and sampling values $\scr{P}_{\Omega}(\mcal{T}_l)$, making the designed metric a \textit{data-driven metric} adapts to the measurements and algorithmic data. Let $\mcal{G}_l$ be the gradient of objective function $f$ in $\mathbb{R}^{d_1\times d_2\times \cdots\times d_m}$. We define the following matrices:
\begin{equation}
    \mbf{G}_{l,i}:= \epsilon\cdot \mbf{I}_{d_i} + \diag(\scr{M}_i(\mcal{G}_l) \scr{M}_i(\mcal{G}_l)^\top),\ \ \text{for}\ i=1,\dots, m.
\end{equation}
where each $\mbf{G}_{l,i}$ is a diagonal matrix in $ \bb{R}^{d_i\times d_i}$ and the $k$-th element of diagonal matrix $\diag(\scr{M}_i(\mcal{G}_l) \scr{M}_i(\mcal{G}_l)^\top)$ is equal to $\|\scr{M}_i(\mcal{G}_l) (k, :)\|_2^2$ where $\scr{M}_i(\mcal{G}_l) (k, :)$ is also the $k$-th slices of tensor $\mcal{G}_l$ in mode-$i$. Then we denote $\scr{W}_l$ as a weight operation and define a new metric in $\bb{R}^{d_1\times \dots\times d_m}$ as follows: for any tensors $\mcal{X}, \mcal{Y}\in \mathbb{R}^{d_1\times \cdots\times d_m}$,
\begin{equation}
    \label{equ: new metric}
    \langle \mcal{X},\mcal{Y} \rangle_{\scr{W}_l} = \langle \scr{W}_l\mcal{X}, \mcal{Y}\rangle = \langle \mcal{X}\times_{i=1}^m \mbf{G}_{l,i}^{\frac{1}{2 m}} , \mcal{Y}\rangle.
\end{equation}
Here $\langle\cdot, \cdot\rangle$ is the Euclidean inner product in $\bb{R}^{d_1\times \dots \times d_m}$. It is easy to verify that $\langle\cdot, \cdot\rangle_{\scr{W}_l}$ is an inner product and we denote the norm induced by $\langle\cdot, \cdot\rangle_{\scr{W}_l}$ as $\bnm \cdot \bnm_{\scr{W}_l}$. The weight operator $\scr{W}_l$ has the following \textit{component weight property}:

\begin{proposition}\label{prop: Component weighted}
For tensor-train format tensor $\mcal{T} = \left[T_1, \dots, T_m\right]$ with $T_i\in \bb{R}^{r_{i-1}\times d_i\times r_i}$, one has 
$$
\scr{W}_l \mcal{T} = \left[T_1\times_2 \mbf{G}_{l,1}^{\frac{1}{2 m}}, \dots, T_m\times_2 \mbf{G}_{l,m}^{\frac{1}{2 m}} \right]
$$
\end{proposition}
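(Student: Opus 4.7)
The plan is to unpack the definition of $\scr{W}_l$ and push the mode products through the TT decomposition one mode at a time. By \eqref{equ: new metric}, applying $\scr{W}_l$ is a composition of mode products,
$$
\scr{W}_l \mcal{T} \;=\; \mcal{T}\times_1 \mbf{G}_{l,1}^{\frac{1}{2m}}\times_2 \mbf{G}_{l,2}^{\frac{1}{2m}}\cdots\times_m \mbf{G}_{l,m}^{\frac{1}{2m}}.
$$
Since mode-$i$ and mode-$j$ products commute whenever $i\neq j$, I can analyze each factor separately. So the task reduces to the single-mode statement: for every $i\in[m]$ and every matrix $A\in\bb{R}^{d_i\times d_i}$,
\begin{equation}\label{eq:plan-key}
\mcal{T}\times_i A \;=\; \bigl[T_1,\dots,T_{i-1},\,T_i\times_2 A,\,T_{i+1},\dots,T_m\bigr].
\end{equation}
Iterating \eqref{eq:plan-key} once for each $i$ yields the claim.

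To prove \eqref{eq:plan-key}, I would argue entry-wise using the TT definition \eqref{equ: TT Core}. Fix $(x_1,\dots,x_m)$ and write
$$
(\mcal{T}\times_i A)(x_1,\dots,x_m) \;=\; \sum_{y_i=1}^{d_i} A(x_i,y_i)\,T_1(x_1,:)\,T_2(:,x_2,:)\cdots T_i(:,y_i,:)\cdots T_m(:,x_m).
$$
All factors except the $i$-th are independent of the summation index $y_i$, so I can pull the sum inside the matrix product and obtain
$$
(\mcal{T}\times_i A)(x_1,\dots,x_m) \;=\; T_1(x_1,:)\cdots\Bigl(\textstyle\sum_{y_i} A(x_i,y_i)\,T_i(:,y_i,:)\Bigr)\cdots T_m(:,x_m).
$$
The inner bracket is precisely $(T_i\times_2 A)(:,x_i,:)$ by the definition of the mode-$2$ product via the unfolding identity $\scr{M}_2(T_i\times_2 A)=A\,\scr{M}_2(T_i)$. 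This gives \eqref{eq:plan-key}.

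Finally, substituting $A=\mbf{G}_{l,i}^{1/(2m)}$ for each $i$ and composing the $m$ commuting mode products, the resulting TT cores are exactly $T_i\times_2 \mbf{G}_{l,i}^{1/(2m)}$, which is the claimed formula. There is no real obstacle here; the only thing to watch carefully is the bookkeeping between the mode-$i$ index of the full tensor $\mcal{T}$ and the mode-$2$ index of the core $T_i$ (which is the mode that carries the external dimension $d_i$ inside the $r_{i-1}\times d_i\times r_i$ core). Once that correspondence is made explicit through the unfolding identity, the proof is a direct computation.
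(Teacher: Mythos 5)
Your proof is correct, and it takes a somewhat different and slightly more general route than the paper's. The paper exploits the diagonality of the $\mbf{G}_{l,i}$ directly: since each $\mbf{G}_{l,i}$ is diagonal, $\scr{W}_l$ acts entry-wise as a product of scalars $\prod_k \mbf{G}_{l,k}^{1/(2m)}(x_k,x_k)$, and the proof simply redistributes these scalars into the matching TT cores in the chain $T_1(x_1,:)T_2(:,x_2,:)\cdots T_m(:,x_m)$. You instead establish the general single-mode identity $\mcal{T}\times_i A=[T_1,\dots,T_i\times_2 A,\dots,T_m]$ for an arbitrary matrix $A\in\bb{R}^{d_i\times d_i}$ (not just diagonal ones), by pulling the sum over $y_i$ past the cores that do not depend on it and recognizing the bracket as $(T_i\times_2 A)(:,x_i,:)$ via the unfolding identity $\scr{M}_2(T_i\times_2 A)=A\,\scr{M}_2(T_i)$; you then iterate using commutation of mode products for distinct modes. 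Both arguments are short, entry-wise, and correct. What your route buys is a reusable lemma that is independent of the preconditioner's structure (it would apply to any weighting operator built from commuting mode products); what the paper's route buys is maximal brevity, since diagonality collapses $\scr{W}_l$ to a Hadamard-type scaling with no summation at all. There is no gap in your argument.
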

\begin{proof}
    Denote $\bar{\mcal{T}} = \left[\bar{T}_1, \dots, \bar{T}_m\right] = \left[T_1\times_2 \mbf{G}_{l,1}^{\frac{1}{2 m}}, \dots, T_m\times_2 \mbf{G}_{l,m}^{\frac{1}{2 m}} \right]$. By definition, for $(x_1,\dots, x_m)\in [d_1]\times \cdots \times [d_m]$,
    $$
    \begin{aligned}
        \scr{W}_l \mcal{T} (x_1,\dots, x_m) &= \mcal{T} (x_1,\dots, x_m) \prod_{k=1}^m\mbf{G}_{l,k}^{\frac{1}{2 m}}(x_k, x_k)\\
       & = T_1(x_1, :)T_2(:, x_2, :)\cdots T_{m}(:, x_m) \prod_{k=1}^m\mbf{G}_{l,k}^{\frac{1}{2 m}}(x_k, x_k)\\
       & = \mbf{G}_{l,1}^{\frac{1}{2 m}}(x_1, x_1) T_1(x_1, :) \cdots \mbf{G}_{l,m}^{\frac{1}{2 m}}(x_m, x_m)T_{m}(:, x_m)\\
       & = \bar{T}_1(x_1, :)\bar{T}_2(:, x_2, :)\cdots \bar{T}_{m}(:, x_m)\\
       & = \bar{\mcal{T}} (x_1,\dots, x_m)
    \end{aligned}
    $$
\end{proof}

\subsubsection{Preconditioned Riemannian Gradient Descent under $\langle\cdot, \cdot\rangle_{\scr{W}_l}$}
Now we derive the Riemannian gradient $\operatorname{\textbf{grad}}f(\mcal{T}_l) $ under new metric $\langle\cdot, \cdot\rangle_{\scr{W}_l}$. Let $\gamma_l(t)$ be a smooth curve in $\bb{M}_{\mbf{r}}^{tt}$ and satisfies $\gamma_l(0) = \mcal{T}_l$. Denote $\widetilde{\scr{P}}_{\bb{T}_l}$ be the orthogonal projection operator onto $\mathbb{T}_l$ under $\langle \cdot, \cdot\rangle_{\scr{W}_l}$, then
\begin{equation}
    \frac{\mathbf{d}}{\mathbf{d}s} f(\gamma_l(t))\big|_{t = 0} = \langle \dot{\gamma_l}(0), \mathcal{G}_l\rangle = \langle \dot{\gamma_l}(0), \widetilde{\scr{P}}_{\mathbb{T}_l}\scr{W}_l^{-1}\mcal{G}_l\rangle_{\scr{W}_l}
\end{equation}
Therefore, the Riemannian gradient $\operatorname{\textbf{grad}} f(\mcal{T}_l)$ under new metric is 
\begin{equation}
\operatorname{\textbf{grad}}f(\mcal{T}_l) = \widetilde{\scr{P}}_{\bb{T}_l}\scr{W}_l^{-1}\mcal{G}_l = \widetilde{\scr{P}}_{\bb{T}_l} (\mcal{G}_l\times_{i=1}^{m}\mbf{G}_{l,i}^{-\frac{1}{2 m}}).
\end{equation}
\begin{figure}[H]
    \centering
    \includegraphics[width = 0.25\linewidth]{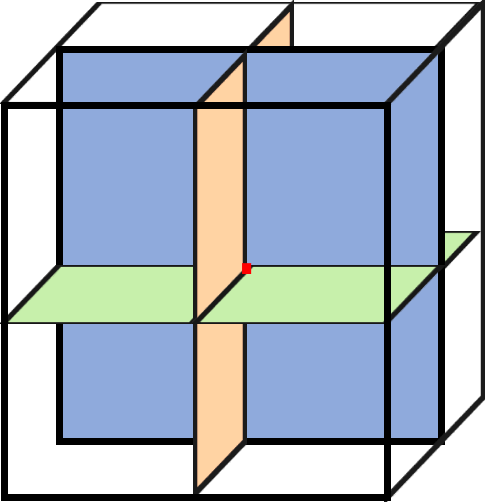}
    \caption{Illustration of scaling operation on $\mcal{G}_l$'s elements.}
    \label{fig: illust-weight}
\end{figure}
Utilizing TT-SVD as the retraction operator, the pseudocode of PRGD is proposed in Algorithm \ref{alg: PRGD}.
\begin{center} 
\begin{algorithm}[H]
\caption{Preconditioned Riemannian Gradient Descent}
\begin{algorithmic} 
\STATE \textbf{Initialization} $\mcal{T}_0\in\bb{M}_{\mbf{r}}^{tt}$, spikiness parameter $\nu$ and maximum iteration numbers $l_{\max}$.\\
\FOR{$l = 0, 1, \dots, l_{\max}$}
\STATE $\mcal{G}_l = \scr{P}_\Omega (\mcal{T}_l - \mcal{T}^*)$.\\
\STATE Compute stepsize $\alpha_l$. \\
\STATE $\mcal{W}_l = \mcal{T}_l - \alpha_l \widetilde{\scr{P}}_{\bb{T}_l}\scr{W}_l^{-1}\mcal{G}_l$. \\
\STATE Set $\overline{\mcal{W}}_l = \operatorname{Trim}_{\xi_l}(\mcal{W}_l)$ with $\xi_l = \frac{10\|\mcal{W}_l\|_F}{9\sqrt{d^*}}\nu$\\
\STATE $\mcal{T}_{l+1} = \operatorname{SVD}_\mbf{r}^{tt}(\overline{\mcal{W}}_l)$
\ENDFOR
\end{algorithmic} 
\label{alg: PRGD}
\end{algorithm}
\end{center}

In Algorithm \ref{alg: PRGD}, we have the same additional procedure $\operatorname{Trim}_{\xi_l}(\mcal{W}_l)$ as RGD in \cite{cai2022provable}. The trimming operator is defined as follows:
\begin{equation}
\label{equ: trim}
\operatorname{Trim}_\zeta(\mathcal{T})=\left\{\begin{array}{l}
\zeta \cdot \operatorname{sign}(\mathcal{T}(\boldsymbol{x})), \quad \text { if }|\mathcal{T}(\boldsymbol{x})| \geq \zeta \\
\mathcal{T}(\boldsymbol{x}), \quad \text { otherwise }
\end{array}\right.
\end{equation}
Due to the same reasons in \cite{cai2022provable}, the trimming step is to guarantee the incoherence property of $\mcal{T}_l$ in each iteration, and it is necessary for our proof of the convergence of PRGD. However, in numerical experiments, the performance of PRGD is almost the same with or without the trimming procedure. So we remark that such a procedure is completely for convenience for technical proof, thus in practice, one can skip this procedure for simplicity.

Regarding the step size $\alpha_l$ in Algorithm \ref{alg: PRGD}, one strategy is computing the steepest stepsize along the search direction in tangent space $\mathbb{T}_l$:
\begin{equation}\label{equ: PRGD stepsize}
    \alpha_l = \argmin_\alpha f(\mcal{T}_l - \alpha \widetilde{\scr{P}}_{\bb{T}_l}\scr{W}_l^{-1}\mcal{G}_l) = \frac{\|\widetilde{\scr{P}}_{\bb{T}_l}\scr{W}_l^{-1}\mcal{G}_t\|_{\scr{W}_l}^2}{\|\scr{P}_{\Omega}\widetilde{\scr{P}}_{\bb{T}_l}\scr{W}_l^{-1}\mcal{G}_l\|_F^2}
\end{equation}
however, the adaptive stepsize $\alpha_l$ in \eqref{equ: PRGD stepsize} needs to compute the explicit form of tangent space projection $\widetilde{\scr{P}}_{\bb{T}_l}\scr{W}_l^{-1}\mcal{G}_l$, which involves considerable computation cost in each iteration. 

Another strategy to avoid this computational cost is keeping the stepsize $\alpha_l$ constant during the iteration. Interestingly, the numerical experiments in Section \ref{section: numerical experiments} show that the performance of the optimal constant stepsize strategy is much better than the adaptive stepsize strategy in practice. This might be confusing because the adaptive stepsize in \eqref{equ: PRGD stepsize} is steepest along the search direction in the tangent space. Here we remark that this might be because the precondition metric sharps the local curvature of the manifold, making the retraction error larger for adaptive stepsize, thus resulting in a worse performance.

\subsection{Tangent Space Parameterization and Computation of $\widetilde{\scr{P}}_\bb{T}$}\label{subsec: Tg Para and Tp Comp}

{\bf New Left Orthogonal Decomposition of $\mcal{T}\in \bb{M}_{\bf{r}}^{tt}$.} Given operator $\scr{W}_l$ and corresponding inner product $\langle\cdot, \cdot\rangle_{\scr{W}_l}$, we need to suitably parameterize the tangent space $\bb{T}_l$ such that we can obtain the close form of $\wtd{\scr{P}}_{\bb{T}_l}$. Before that, we first define the \textit{new left orthogonal} tensor-train decomposition as follows:
\begin{definition}[New left orthogonal decomposition]
    Given $\mcal{T}\in \bb{M}_{\mbf{r}}^{tt}$, we call $\mcal{T} = [\wtd{T}_1, \dots, \wtd{T}_m]$ is a new left orthogonal decomposition if
    $$
    L(\widetilde{T}_i)^\top (\mbf{I}_{r_{i-1}}\otimes \mbf{G}_{l, i}^{\frac{1}{2 m}}) L(\widetilde{T}_i) = \mbf{I}_{r_{i}}, \ \text{for}\ i=1,\dots, m
    $$
    e.g. the left unfolding matrix of $\wtd{T}_i$ is column-wise orthogonal under the inner product weighted by the positive definite matrix $(\mbf{I}_{r_{i-1}}\otimes \mbf{G}_{l, i}^{\frac{1}{2 m}})$.
\end{definition}

Based on the decomposition $\mcal{T} = [\wtd{T}_1, \dots, \wtd{T}_m]$, we denote the corresponding $i$-th left and right parts of $\mcal{T}$ as $\wtd{T}^{\leq i}$ and $\wtd{T}^{\geq i}$ respectively. Then we have the following lemma:

\begin{lemma}\label{lemma: orth left sep}
Let  $\mcal{T} = [\widetilde{T}_1,\dots, \widetilde{T}_m]$ be a new left orthogonal decomposition of $\mcal{T}$. Then $\widetilde{T}^{\leq i}$ is columnwise orthogonal under the inner product weighted by $(\mbf{G}_{l, 1}^{\frac{1}{2 m}}\otimes \mbf{G}_{l, 2}^{\frac{1}{2 m}}\otimes\cdots \otimes\mbf{G}_{l, i}^{\frac{1}{2 m}})$, e.g.,
    \begin{equation}\label{equ: new orthogonal}
    \widetilde{T}^{\leq i\top} (\mbf{G}_{l,1}^{\frac{1}{2 m}}\otimes \mbf{G}_{l,2}^{\frac{1}{2 m}}\otimes\cdots \otimes\mbf{G}_{l,i}^{\frac{1}{2 m}}) \widetilde{T}^{\leq i} = \mbf{I}_{r_i}.
    \end{equation}
\end{lemma}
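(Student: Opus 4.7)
The plan is to prove the claim by induction on $i$, the key engine being the recursive relation $\widetilde{T}^{\leq i+1} = (\widetilde{T}^{\leq i}\otimes \mbf{I}_{d_{i+1}})\, L(\widetilde{T}_{i+1})$ together with the mixed-product property of the Kronecker product.

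For the base case $i=1$, since $r_0=1$ we have $\widetilde{T}^{\leq 1} = L(\widetilde{T}_1)$, so \eqref{equ: new orthogonal} at $i=1$ reads $L(\widetilde{T}_1)^\top \mbf{G}_{l,1}^{\frac{1}{2m}} L(\widetilde{T}_1)=\mbf{I}_{r_1}$, which is exactly the defining condition of a new left orthogonal decomposition at position $1$.

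For the inductive step, I first verify the identity
\begin{equation*}
\widetilde{T}^{\leq i+1} = (\widetilde{T}^{\leq i}\otimes \mbf{I}_{d_{i+1}})\, L(\widetilde{T}_{i+1}),
\end{equation*}
by a direct entrywise check: reading off the row $(x_1,\dots,x_{i+1})$ of the right-hand side yields $\widetilde{T}^{\leq i}(x_1,\dots,x_i,:)\,\widetilde{T}_{i+1}(:,x_{i+1},:)$, which matches the definition of $\widetilde{T}^{\leq i+1}$. Writing $W_i := \mbf{G}_{l,1}^{\frac{1}{2m}}\otimes\cdots\otimes \mbf{G}_{l,i}^{\frac{1}{2m}}$, substitution gives
\begin{equation*}
\widetilde{T}^{\leq i+1\,\top}\,(W_i\otimes \mbf{G}_{l,i+1}^{\frac{1}{2m}})\,\widetilde{T}^{\leq i+1}
= L(\widetilde{T}_{i+1})^\top\, (\widetilde{T}^{\leq i\,\top}\otimes \mbf{I}_{d_{i+1}})\,(W_i\otimes \mbf{G}_{l,i+1}^{\frac{1}{2m}})\,(\widetilde{T}^{\leq i}\otimes \mbf{I}_{d_{i+1}})\, L(\widetilde{T}_{i+1}).
\end{equation*}
Applying $(A\otimes B)(C\otimes D)=(AC)\otimes(BD)$ twice collapses the middle factor to $(\widetilde{T}^{\leq i\,\top} W_i \widetilde{T}^{\leq i})\otimes \mbf{G}_{l,i+1}^{\frac{1}{2m}}$. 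The induction hypothesis replaces the first Kronecker factor with $\mbf{I}_{r_i}$, leaving $L(\widetilde{T}_{i+1})^\top (\mbf{I}_{r_i}\otimes \mbf{G}_{l,i+1}^{\frac{1}{2m}}) L(\widetilde{T}_{i+1})$, which equals $\mbf{I}_{r_{i+1}}$ by the new-left-orthogonality of $\widetilde{T}_{i+1}$.

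The computation is essentially mechanical once the recursive unfolding identity is in place. The only real obstacle is bookkeeping: confirming that the left-part $\widetilde{T}^{\leq i+1}$ indexes its rows in the Kronecker order $(x_1,\dots,x_i)\otimes x_{i+1}$ compatible with both $(\widetilde{T}^{\leq i}\otimes \mbf{I}_{d_{i+1}})$ and $W_i\otimes \mbf{G}_{l,i+1}^{\frac{1}{2m}}$. Once that ordering is pinned down, the mixed-product identity does all the work and the induction closes cleanly.
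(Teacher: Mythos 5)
Your proof is correct and follows essentially the same route as the paper: induction on $i$ via the recursion $\widetilde{T}^{\leq i+1}=(\widetilde{T}^{\leq i}\otimes \mbf{I}_{d_{i+1}})L(\widetilde{T}_{i+1})$ and the Kronecker mixed-product identity, with the base case reducing to the definition of new left orthogonality. The only difference is that you explicitly verify the recursive unfolding identity entrywise, which the paper takes as known.
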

\begin{proof}
    We prove this lemma through the recursive method.
    
    For $i = 1$, $\widetilde{T}^{\leq 1} = L(\widetilde{T}_1) = \widetilde{T}_1\in \mathbb{R}^{d_1\times r_1}$, then 
    $$
    \widetilde{T}^{\leq 1 \top} \mbf{G}_{l,1}^{\frac{1}{2 m}} \widetilde{T}^{\leq 1} = \widetilde{T}^{\leq 1 \top} (\mbf{I}_{r_0}\otimes \mbf{G}_{l,1}^{\frac{1}{2 m}}) \widetilde{T}^{\leq 1} = \mbf{I}_{r_1}
    $$
    Suppose \eqref{equ: new orthogonal} holds for $i = k-1$, then for $i = k$
    $$
    \begin{aligned}
         \widetilde{T}^{\leq k\top} &(\mbf{G}_{l,1}^{\frac{1}{2 m}}\otimes \mbf{G}_{l,2}^{\frac{1}{2 m}}\otimes\cdots \otimes\mbf{G}_{l,k}^{\frac{1}{2 m}}) \widetilde{T}^{\leq k} \\
         &= L(\widetilde{T}_k)^\top (\widetilde{T}^{\leq k-1 \top}\otimes \mbf{I}_{d_k})  (\mbf{G}_{l,1}^{\frac{1}{2 m}}\otimes \mbf{G}_{l,2}^{\frac{1}{2 m}}\otimes\cdots \otimes\mbf{G}_{l,k}^{\frac{1}{2 m}}) (\widetilde{T}^{\leq k-1}\otimes \mbf{I}_{d_k})L(\widetilde{T}_k)\\
         &= L(\widetilde{T}_k)^\top \left((\widetilde{T}^{\leq k-1 \top}(\mbf{G}_{l,1}^{\frac{1}{2 m}}\otimes \mbf{G}_{l,2}^{\frac{1}{2 m}}\otimes\cdots \otimes\mbf{G}_{l,k-1}^{\frac{1}{2 m}})\widetilde{T}^{\leq k-1}) \otimes \mbf{G}_{l,k}^{\frac{1}{2 m}}\right) L(\widetilde{T}_k)\\
         & = L(\widetilde{T}_k)^\top (\mbf{I}_{r_{k-1}}\otimes \mbf{G}_{l,k}^{\frac{1}{2 m}}) L(\widetilde{T}_k)\\
        & = \mbf{I}_{r_k}
    \end{aligned}
    $$
Thus, \eqref{equ: new orthogonal} holds for $i=1, \dots, m$.
\end{proof}

There is one essential question that remains to be answered: 
$$\textit{Given } \mcal{T}\in \bb{M}_{\mbf{r}}^{tt}, \textit{ how to obtain the new left orthogonal decomposition of } \mcal{T}?
$$
We first set $\wht{\mcal{T}}:=\scr{W}_{l}^{\frac{1}{2}}\mcal{T}$ and obtain the left orthogonal TT decomposition  $\wht{\mcal{T}} = [\wht{T}_1, \dots, \wht{T}_m]$ from the classical procedure, e.g. TT-SVD \cref{alg: TT-SVD}. Then we apply the inversion operation $\scr{W}_l^{-\frac{1}{2}}$ on $\wht{\mcal{T}}$,

\begin{equation}
\label{equ: new left orthogonal}
\mcal{T} = \scr{W}_l^{-\frac{1}{2}}\wht{\mcal{T}} = \left[\wtd{T}_1, \dots, \wtd{T}_m\right]:=\left[\wht{T}_1\times_2 \mathbf{G}_{l, 1}^{-\frac{1}{4 m}}, \dots, \wht{T}_m\times_2 \mathbf{G}_{l, m}^{-\frac{1}{4 m}} \right].
\end{equation}
It is easy to verify that $\wtd{T}_i$ is left orthogonal under $\langle\cdot, \cdot\rangle_{\mbf{I}_{r_{i-1}}\otimes \mbf{G}_{l, i}^{\frac{1}{2 m}}}$.

$$
 L(\widetilde{T}_i)^\top (\mbf{I}_{r_{i-1}}\otimes \mbf{G}_{l,i}^{\frac{1}{2 m}}) L(\widetilde{T}_i) =  L(\wht{T}_i)^\top L(\wht{T}_i) = \mbf{I}_{r_i}
$$

\noindent{\bf Tangent Space Parameterization.} Now we turn to parameterize $\bb{T}_l$ under $\langle\cdot, \cdot\rangle_{\scr{W}_l}$. According to \cite{holtz2012manifolds}, for $\mcal{T} = [\widetilde{T}_1,\dots, \widetilde{T}_m] \in \mathbb{M}_{\bf{r}}^{tt}$ new left orthogonal decomposition. We can parameterize the tangent space at $\mcal{T}$ based on gauge sequence $\mbf{I}_{r_{i-1}}\otimes \mbf{G}_{l,i}^{\frac{1}{2 m}}, i\in [m]$. For any $\mcal{X}\in \bb{T}$, there exist a sequence of tensors $X_1,\dots, X_m$ with $X_i\in \bb{R}^{r_{i-1}\times d_i\times r_i}$ such that 
$$
\mcal{X} = \sum_{i=1}^m \delta \mcal{X}_i\ \ \text{where}\ \ \delta \mcal{X}_i = [\widetilde{T}_1,\dots, \widetilde{T}_{i-1}, X_i, \widetilde{T}_{i+1}, \dots, \widetilde{T}_m]
$$
with $[X_i, \widetilde{T}_i]_{(\mbf{I}_{r_{i-1}}\otimes \mbf{G}_{l,i}^{\frac{1}{2 m}})}:= L(X_i)^T (\mbf{I}_{r_{i-1}}\otimes \mbf{G}_{l,i}^{\frac{1}{2 m}}) L_(\widetilde{T}_i)  = \mbf{0}, i\in [m]$. Then for an arbitrary tensor $\mcal{A}\in \bb{R}^{d_1\times \dots\times d_m}$, the orthogonal projection of $\mcal{A}$ onto $\bb{T}$ under $\langle\cdot, \cdot\rangle_{\scr{W}_l}$ is denoted by $\widetilde{\scr{P}}_{\bb{T}}(\mcal{A})$:

\begin{equation}\label{projection}
\widetilde{\scr{P}}_{\bb{T}}(\mcal{A}) = \sum_{i=1}^m \delta \mcal{A}_i\ \text{where}\ \delta \mcal{A}_i = [\widetilde{T}_1, \dots, \widetilde{T}_{i-1}, A_i, \widetilde{T}_{i+1}, \dots, \widetilde{T}_m]
\end{equation}
for some tensor $A_i$ of size $r_{i-1}\times d_i \times r_i$ satisfying $[\widetilde{T}_i, A_i]_{(\mbf{I}_{r_{i-1}}\otimes \mbf{G}_{l,i}^{\frac{1}{2 m}})} = \mbf{0}$. We have the following lemma to indicate that $\delta \mcal{A}_i$'s are orthogonal to each other under precondition metric.
\begin{lemma}\label{lemma: component orthogonal}
    For an arbitrary $\mcal{A}$ of size $d_1\times \dots \times d_m$, the components of $\delta \mcal{A}_i$'s of $\widetilde{\scr{P}}_{\bb{T}}(\mcal{A})$ satisfy $\langle\delta \mcal{A}_i , \delta \mcal{A}_j\rangle_{\scr{W}_l} = 0$ for all $1\leq i\neq j\leq m$
\end{lemma}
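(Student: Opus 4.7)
By symmetry of the two indices I may assume $i<j$. The strategy is to express the weighted inner product through the $i$-th separation, at which point the new left-orthogonality supplied by \cref{lemma: orth left sep} together with the gauge condition on $A_i$ will force the product to vanish.

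First I would use the definition $\langle \delta \mcal{A}_i, \delta \mcal{A}_j\rangle_{\scr{W}_l}=\langle \delta \mcal{A}_i\times_{k=1}^{m}\mbf{G}_{l,k}^{\frac{1}{2m}},\,\delta \mcal{A}_j\rangle$. Since each $\mbf{G}_{l,k}$ is diagonal, applying $\times_k \mbf{G}_{l,k}^{\frac{1}{2m}}$ to a tensor acts on the $i$-th separation by left/right multiplication with the Kronecker factors $\mbf{G}^{\leq i}:=\mbf{G}_{l,1}^{\frac{1}{2m}}\otimes\cdots\otimes \mbf{G}_{l,i}^{\frac{1}{2m}}$ and $\mbf{G}^{>i}$ (defined analogously on indices $i+1,\dots,m$). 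Using the standard factorization $\mcal{X}^{\langle i\rangle}=X^{\leq i}X^{\geq i+1}$, together with the observation that $\delta \mcal{A}_j$ uses $\wtd{T}_k$ at every position $k\leq i$ because $j>i$, I get
\[
\delta \mcal{A}_i^{\langle i\rangle}=(\wtd{T}^{\leq i-1}\otimes \mbf{I}_{d_i})\,L(A_i)\,\wtd{T}^{\geq i+1},\qquad \delta \mcal{A}_j^{\langle i\rangle}=\wtd{T}^{\leq i}\,R^{(j)},
\]
for some right factor $R^{(j)}$ built from the remaining cores. The weighted inner product then collapses to a trace whose central factor is $L(A_i)^\top (\wtd{T}^{\leq i-1\top}\otimes \mbf{I}_{d_i})\,\mbf{G}^{\leq i}\,\wtd{T}^{\leq i}$.

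The key cancellation goes as follows. Splitting $\mbf{G}^{\leq i}=\mbf{G}^{\leq i-1}\otimes \mbf{G}_{l,i}^{\frac{1}{2m}}$ and $\wtd{T}^{\leq i}=(\wtd{T}^{\leq i-1}\otimes \mbf{I}_{d_i})L(\wtd{T}_i)$, Kronecker-product algebra rewrites the central factor as
\[
L(A_i)^\top\bigl[(\wtd{T}^{\leq i-1\top}\,\mbf{G}^{\leq i-1}\,\wtd{T}^{\leq i-1})\otimes \mbf{G}_{l,i}^{\frac{1}{2m}}\bigr]L(\wtd{T}_i).
\]
Applying \cref{lemma: orth left sep} at level $i-1$ reduces the inner Kronecker factor to $\mbf{I}_{r_{i-1}}$, leaving $L(A_i)^\top(\mbf{I}_{r_{i-1}}\otimes \mbf{G}_{l,i}^{\frac{1}{2m}})L(\wtd{T}_i)$, which is exactly the matrix forced to vanish by the gauge condition $[\wtd{T}_i,A_i]_{(\mbf{I}_{r_{i-1}}\otimes \mbf{G}_{l,i}^{\frac{1}{2m}})}=\mbf{0}$. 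Hence the trace is zero regardless of the shape of $R^{(j)}$ or $\wtd{T}^{\geq i+1}$.

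The main obstacle is purely bookkeeping: one must split the Kronecker weight $\mbf{G}^{\leq i}$ at exactly the right index so that the level-$(i-1)$ orthogonality of \cref{lemma: orth left sep} aligns with $\wtd{T}^{\leq i-1}$, while the leftover $\mbf{G}_{l,i}^{\frac{1}{2m}}$ lines up with the second slot of $\wtd{T}_i$ to produce the weight appearing in the gauge condition. Once that alignment is set up, no estimates are required and the cancellation is immediate.
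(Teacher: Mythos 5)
Your proposal is correct and follows essentially the same route as the paper: reduce to the $i$-th separation, use the factorization $\wtd{T}^{\leq i}=(\wtd{T}^{\leq i-1}\otimes\mbf{I}_{d_i})L(\wtd{T}_i)$ together with \cref{lemma: orth left sep} to collapse the left Kronecker weight, and then kill the central factor via the gauge condition $L(A_i)^\top(\mbf{I}_{r_{i-1}}\otimes\mbf{G}_{l,i}^{\frac{1}{2m}})L(\wtd{T}_i)=\mbf{0}$. The only cosmetic difference is that the paper splits the weight symmetrically as $\mbf{G}_{l,k}^{\frac{1}{4m}}$ on each argument and works with the Euclidean-orthogonal hatted cores $\wht{T}_i$, whereas you keep the full $\mbf{G}_{l,k}^{\frac{1}{2m}}$ on one side and invoke the weighted orthogonality directly; the underlying computation is identical.
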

\begin{proof}
    Denote $\wht{T}_i = \wtd{T}_i\times_2 \mbf{G}_{l,i}^{\frac{1}{4 m}}, \wht{A}_i = \wtd{A}_i\times_2 \mbf{G}_{l,i}^{\frac{1}{4 m}}$ for all $i\in [m]$. Then without loss of generalization, assume $i\leq j$ we have 
    $$
    \begin{aligned}
    \langle \delta\mcal{A}_i, \delta\mcal{A}_j\rangle_{\scr{W}_l} & =  \langle \delta\mcal{A}_i\times_{i = 1}^{m} \mbf{G}_{l,i}^{\frac{1}{4 m}}, \delta\mcal{A}_j\times_{i = 1}^{m} \mbf{G}_{l,i}^{\frac{1}{4 m}}\rangle\\
    & = \langle (\wht{T}^{\leq i-1}\otimes \mbf{I}_{d_i}) L(\wht{A}_i) \wht{T}^{\geq i+1}, (\wht{T}^{\leq i-1}\otimes \mbf{I}_{d_i})L(\wht{T}_i) (\wht{\delta A_j})^{\geq i+1} \rangle\\
    & = \langle L(\wht{A}_i) \wht{T}^{\geq i+1}, L(\wht{T}_i) (\wht{\delta A}_j)^{\geq i+1}\rangle\\
    & = 0
    \end{aligned}
    $$
    where $\left(\delta\mcal{A}_j\times_{i = 1}^{m} \mbf{G}_{l,i}^{\frac{1}{4 m}}\right)^{\langle i \rangle} = (\wht{T}^{\leq i-1}\otimes \mbf{I}_{d_i})L(\wht{T}_i) (\wht{\delta A_j})^{\geq i+1}$ for some $(\wht{\delta A_j})^{\geq i+1}$ for simplicity. The third equality is due to $\wht{T}^{\leq i-1} = (\mbf{G}_{l,1}^{\frac{1}{4 m}}\otimes \mbf{G}_{l,2}^{\frac{1}{4 m}}\otimes \cdots \otimes\mbf{G}_{l,i-1}^{\frac{1}{4 m}}) \wtd{T}^{\leq i-1}$ and Lemma \ref{lemma: orth left sep}. The last equality is due to the fact that $L(\wht{A}_i)^\top L(\wht{T}_i)$ is an all-zeros matrix.
\end{proof}

\vspace{0.3cm}
\noindent\textbf{The close form of $\widetilde{\scr{P}}_\bb{T}$.} 
\begin{lemma}\label{lemma: close of P_tilde}
    For tensor $\mcal{T}\in \mathbb{M}_{\mbf{r}}^{tt}$, denote $\widetilde{\scr{P}}_\bb{T}$ as the tangent space projection operator of $\mcal{T}$ under $\langle \cdot, \cdot\rangle_{\scr{W}_l}$. Let $\wht{\mcal{T}}=\scr{W}^{\frac{1}{2}}_l(\mcal{T})$ and denote $\wht{\bb{T}}$ be the tangent space of $\wht{\mcal{T}}$, then
    $$
    \widetilde{\scr{P}}_\bb{T} = \scr{W}_l^{-\frac{1}{2}} \scr{P}_{\wht{\bb{T}}}\scr{W}_l^{\frac{1}{2}}.
    $$
\end{lemma}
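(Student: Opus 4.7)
The plan is to reduce the weighted projection $\widetilde{\scr{P}}_{\bb{T}}$ to the standard Euclidean projection $\scr{P}_{\wht{\bb{T}}}$ by using the change of variables $\wht{\mcal{X}} := \scr{W}_l^{1/2}\mcal{X}$. The first task is to observe that $\scr{W}_l$ acts element-wise by multiplication by $\prod_k \mbf{G}_{l,k}^{1/(2m)}(x_k,x_k) > 0$ (positivity follows from the $\epsilon\mbf{I}_{d_i}$ term in the definition of $\mbf{G}_{l,i}$), so $\scr{W}_l$ is a self-adjoint positive-definite linear operator and $\scr{W}_l^{1/2}$ is well-defined. Consequently
$$
\langle \mcal{X},\mcal{Y}\rangle_{\scr{W}_l} \;=\; \langle \scr{W}_l\mcal{X},\mcal{Y}\rangle \;=\; \langle \scr{W}_l^{1/2}\mcal{X}, \scr{W}_l^{1/2}\mcal{Y}\rangle,
$$
so $\scr{W}_l^{1/2}$ is an isometry from $(\bb{R}^{d_1\times\cdots\times d_m},\langle\cdot,\cdot\rangle_{\scr{W}_l})$ onto $(\bb{R}^{d_1\times\cdots\times d_m},\langle\cdot,\cdot\rangle)$.

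Next, I would invoke the component-weight property (\cref{prop: Component weighted}, applied with exponent $\frac{1}{4m}$ in place of $\frac{1}{2m}$) to see that $\scr{W}_l^{1/2}$ maps any TT-format tensor to another TT-format tensor with the same TT-rank, since each weight matrix $\mbf{G}_{l,i}^{1/(4m)}$ is invertible and contracts along a single mode of the cores. Hence $\wht{\mcal{T}} = \scr{W}_l^{1/2}\mcal{T}\in \bb{M}_{\mbf{r}}^{tt}$ and, because $\scr{W}_l^{1/2}$ is a linear diffeomorphism of the manifold onto itself, its differential at $\mcal{T}$ (which equals $\scr{W}_l^{1/2}$ itself) is a linear bijection $\bb{T}\to\wht{\bb{T}}$; equivalently, $\scr{W}_l^{-1/2}$ sends $\wht{\bb{T}}$ onto $\bb{T}$.

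With these two ingredients, the identity follows from checking the defining properties of the weighted projection. For any $\mcal{A}\in \bb{R}^{d_1\times\cdots\times d_m}$, the candidate $\scr{W}_l^{-1/2}\scr{P}_{\wht{\bb{T}}}\scr{W}_l^{1/2}\mcal{A}$ lies in $\bb{T}$ by the previous step, and for every $\mcal{X}\in \bb{T}$ the isometry and the Euclidean optimality of $\scr{P}_{\wht{\bb{T}}}$ give
$$
\langle \mcal{A} - \scr{W}_l^{-1/2}\scr{P}_{\wht{\bb{T}}}\scr{W}_l^{1/2}\mcal{A},\, \mcal{X}\rangle_{\scr{W}_l} \;=\; \langle \scr{W}_l^{1/2}\mcal{A} - \scr{P}_{\wht{\bb{T}}}\scr{W}_l^{1/2}\mcal{A},\, \scr{W}_l^{1/2}\mcal{X}\rangle \;=\; 0,
$$
because $\scr{W}_l^{1/2}\mcal{X}\in \wht{\bb{T}}$. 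Uniqueness of the orthogonal projection under $\langle\cdot,\cdot\rangle_{\scr{W}_l}$ then forces $\widetilde{\scr{P}}_{\bb{T}} = \scr{W}_l^{-1/2}\scr{P}_{\wht{\bb{T}}}\scr{W}_l^{1/2}$.

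The main obstacle I anticipate is the second step: confirming that $\scr{W}_l^{1/2}$ sends $\bb{T}$ \emph{exactly} onto $\wht{\bb{T}}$, not merely into some subspace. This is where the component-weight property is essential, since it shows both that the TT-rank is preserved and, together with the invertibility of $\scr{W}_l^{1/2}$, that the induced map between tangent spaces is a bijection. Once this structural point is established, the rest of the argument is a two-line verification of the variational characterization of the weighted projection.
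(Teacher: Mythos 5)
Your proof is correct, and it rests on the same underlying idea as the paper's — conjugating by the isometry $\scr{W}_l^{1/2}$ to turn the weighted projection into the Euclidean one — but the execution is genuinely different and notably more streamlined. The paper proves the identity constructively: it invokes the mutual $\scr{W}_l$-orthogonality of the components $\delta\mcal{A}_i$ (\cref{lemma: component orthogonal}) to split the projection into $m$ independent least-squares problems over the cores, transforms each into the standard Frobenius-norm problem for $\wht{\mcal{A}}=\scr{W}_l^{1/2}\mcal{A}$, and then cites the known closed-form solution \eqref{equ: tsp in nm} before mapping back with $\scr{W}_l^{-1/2}$. You instead argue abstractly: $\scr{W}_l^{1/2}$ is a self-adjoint positive-definite map that is an isometry between the two inner products, it preserves TT-rank (each separation $\mcal{T}^{\langle i\rangle}$ is multiplied on both sides by invertible Kronecker products of the diagonal factors $\mbf{G}_{l,k}^{1/(4m)}$), hence it is a linear diffeomorphism of $\bb{M}_{\mbf{r}}^{tt}$ whose differential carries $\bb{T}$ bijectively onto $\wht{\bb{T}}$; the identity then follows from the variational characterization and uniqueness of the orthogonal projection. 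You correctly identified the surjectivity of $\scr{W}_l^{1/2}\colon\bb{T}\to\wht{\bb{T}}$ as the one structural point that must not be glossed over, and your justification of it is sound. What your route buys is a shorter proof that needs neither \cref{lemma: component orthogonal} nor the explicit formula for $L(\wht{A}_i)$; what the paper's route buys is exactly those explicit core-level formulas, which are not incidental — they are reused in \cref{appendixB} to bound $\|L(\wht{X}_i)\|_F$ in the estimates of $I_2$ and $I_3$. So your argument suffices for the lemma as stated, but the paper's computational detour is doing double duty for the convergence analysis.
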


\begin{proof}
    Due to the orthogonality property of Lemma \ref{lemma: component orthogonal}, determining $\delta \mcal{A}_i$ is equivalent to solve the following individual optimization problem 

\begin{equation}\label{equ: component minimal}
    \min _{A_i}\left\|\mcal{A}-\delta \mcal{A}_i\right\|_{\scr{W}_l}, \ \text { s.t. } \delta \mcal{A}_i=\left[\widetilde{T}_1, \dots, A_i, \dots, \widetilde{T}_m\right] \text { and } [\widetilde{T}_i, A_i]_{(\mbf{I}_{r_{i-1}}\otimes \mbf{G}_{l,i}^{\frac{1}{2 m}})} = \mbf{0}
\end{equation}

By the definition of $\|\cdot\|_{\scr{W}_l}$, denote $\wht{\mcal{A}} = \scr{W}_l^{\frac{1}{2}}(\mcal{A})$ and $\wht{\mcal{T}} = \scr{W}_l^{\frac{1}{2}}(\mcal{T})$, then $[\wht{T}_1, \dots, \wht{T}_m]:=[T_1\times_2 \mbf{G}_{l, 1}^{\frac{1}{4m}}, \dots, T_m\times_2 \mbf{G}_{l, m}^{\frac{1}{4m}}]$ is the left orthogonal decomposition of $\wht{\mcal{T}}$ under $\langle\cdot, \cdot\rangle$. So \eqref{equ: component minimal} is equivalent to 


\begin{equation}\label{equ: transformed component minimal}
    \min _{\widehat{A}_i}\left\|\widehat{\mcal{A}}-\widehat{\delta \mcal{A}}_i\right\|_{\mbf{F}}, \ \text { s.t. } \widehat{\delta \mcal{A}_i}=\left[\widehat{T}_1, \dots, \widehat{A}_i, \dots, \widehat{T}_m\right] \text { and } [\widehat{T}_i, \widehat{A}_i] = \mbf{0}
\end{equation}
here $\wht{A}_i = A_i\times \mbf{G}_{l, i}^{\frac{1}{4m}}$ and this procedure is actually computing $\scr{P}_{\wht{\bb{T}}}\scr{W}_l^{\frac{1}{2}}(\mcal{A})$. Referring to \cite{lubich2015time, cai2022provable}, we get the following close form solution of \eqref{equ: transformed component minimal}:
\begin{equation}\label{equ: tsp in nm}
L\left(\widehat{A}_i\right)=
\left(\mbf{I}_{r_{i-1}d_i}- L\left(\widehat{T}_i\right) L\left(\widehat{T}_i\right)^{\top}\right)\left(\widehat{T}^{\leq i-1} \otimes \mbf{I}_{d_i}\right)^{\top} \widehat{\mcal{A}}^{\langle i\rangle}\left(\widehat{T}^{\geq i+1}\right)^{\top}\left(\widehat{T}^{\geq i+1}\left(\widehat{T}^{\geq i+1}\right)^{\top}\right)^{-1}
\end{equation}
for $i \in[m-1]$  and $L\left(\widehat{A}_m\right) = \left(\widehat{T}^{\leq m-1} \otimes \mbf{I}_{d_m}\right)^{\top} \widehat{\mcal{A}}^{\langle m\rangle}.$ Then one can obtain $A_i = \widehat{A}_i\times_2 \mbf{G}_{l, i}^{-\frac{1}{4 m}}$ for $i =1, \dots, m$. Conclude all above, we get:
$$
    \widetilde{\scr{P}}_\bb{T}(\mcal{A}) = \scr{W}_l^{-\frac{1}{2}} \scr{P}_{\wht{\bb{T}}}\scr{W}_l^{\frac{1}{2}}(\mcal{A}).
$$
\end{proof}
\begin{remark}
We claim that the PRGD iterate on $\mcal{T}_l$ is almost the RGD iterate on $\wht{\mcal{T}}_l$. Plugging Lemma \ref{lemma: close of P_tilde} into the iterate scheme of PRGD gives that $\mcal{T}_{l+1} = \scr{R}_l(\scr{W}_l^{-\frac{1}{2}}(\widehat{\mcal{T}}_l - \alpha_l \cdot \scr{P}_{\wht{\bb{T}}_l} \wht{\mcal{G}_l}))$, where $\wht{\mcal{G}}_l:=\scr{W}_l^{-\frac{1}{2}}\mcal{G}_l$ is the gradient of the objective function on $\wht{\mcal{T}}_l$.
Moreover, if one chooses $\scr{R}_l = \scr{W}_l^{-\frac{1}{2}}\operatorname{SVD}_{\mbf{r}}^{tt}\scr{W}_l^{\frac{1}{2}}$ as retraction operator, then the iterate scheme of PRGD on $\mcal{T}_l$ became $\wht{\mcal{T}}_{l+1} = \scr{W}_{l+1}^{\frac{1}{2}}\scr{W}_l^{-\frac{1}{2}}\operatorname{SVD}_{\mbf{r}}^{tt} (\widehat{\mcal{T}}_l - \alpha_l \cdot \scr{P}_{\wht{\bb{T}}_l} \wht{\mcal{G}_l})$ where $\wht{\mcal{T}}_{l+1}:= \scr{W}_{l+1}^{\frac{1}{2}} (\mcal{T}_{l+1})$, indicating that it is almost the RGD iterate on $\wht{\mcal{T}}_l$.
\end{remark}

\begin{remark}[Computational Complexity of PRGD]\label{rmk: Computational Complexity}
Lemma \ref{lemma: close of P_tilde} indicates that, compared with RGD, PRGD involves two additional operations $\scr{W}_l^{\frac{1}{2}}$ and $\scr{W}_l^{-\frac{1}{2}}$. However, we claim the computational cost of these two operations is insignificant. For the first operation $\scr{W}_l^{-\frac{1}{2}} (\mcal{G}_l)$, since $\mcal{G}_l = \scr{P}_{\Omega} (\mcal{T}_l - \mcal{T}^*)$ is sparse tensor and $\mbf{G}_{l, i}$ in $\scr{W}_l$ is diagonal matrix, this computational cost is $O(m |\Omega|)$. For the second operation, since $\wtd{\scr{P}}_{\bb{T}_l}\scr{W}_l^{-\frac{1}{2}}(\mcal{G}_l)$ can be represented by sum of $m$ rank-$\mbf{r}$ tensor. Thus with Proposition \ref{prop: Component weighted}, the computational cost with this part is $O(m \bar{d} \bar{r}^2)$. As for the retraction part, the operation $\scr{W}_l$ doesn't change the TT-rank of the given tensor, so $\mcal{W}_l$ has the rank at most $2 \mbf{r}$, making the computation cost of retraction the same with RGD.
\end{remark}

\section{Recovery Guarantee}
\label{sec:convergence}
In this section, we will present the convergence guarantee of the PRGD (i.e., Algorithm \ref{alg: PRGD}) and provide the proof of the main theorem. We first present the main theorem on exact matrix recovery and related lemmas.

\begin{theorem}\label{lemma:convergence-guarantee}
Assume that $\mcal{T}^*$ is of size $d_1 \times d_2 \times \cdots \times d_m$ with a TT-rank $\mbf{r} = (r_1, \dots, r_m)$ whose spkiness is bounded by $Spiki(\mcal{T}^*) \leq \nu$ and the initilization $\mcal{T}_0$ satifies 
$$
\|\mcal{T}_0 - \mcal{T}^*\|_F \leq \frac{\underline{\sigma}}{Cm \kappa_0 \bar{r}^{1/2}} \quad \textmd{and} \quad Incoh(\mcal{T}_0) \leq 2 \kappa_0^2 \nu
$$
where $C > 0$ is a sufficiently large absolute constant. Set the stepsize $\alpha_l = 1.001 \epsilon_l^{\frac{1}{2}} p^{-1}$. Then, there exists an absolute constant $C_m > 0$ depending only on $m$ such that if the sample size $n$ satisfies 
$$
n \geq C_m \rho_l^{2m+6} \bar{d} \log^{2m+4}(\bar{d}) \kappa_0^{8} \mu^{m} (r^*)^2 r_{m-1}^2 \sum_{i=1}^{m-1} r_i^{-1} + C_m \rho_l^{m+3} \sqrt{d^*} \log^{m+2}(\bar{d}) \kappa_0^4 \mu^{\frac{m}{2}} r^* r_{m-1} \sum_{i=1}^{m-1} r_i^{-\frac{1}{2}},
$$
then with probability at least $1 - (m+4)\bar{d}^{-m}$, the sequence $\{ \mcal{T}_l \}_{l=1}^{\infty}$ generated by Algorithm \ref{alg: PRGD} satisfy 
$$
 \| \mcal{T}_l - \mcal{T}^* \|_F^2 \leq 0.3574 \|\mcal{T}_{l-1} - \mcal{T}^*\|_F^2
$$
for all $l = 1, 2, \cdots$.
\end{theorem}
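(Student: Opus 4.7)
The plan is to proceed by induction on $l$, maintaining the two invariants $\|\mcal{T}_l-\mcal{T}^*\|_F \leq \underline{\sigma}/(Cm\kappa_0\bar{r}^{1/2})$ and $\operatorname{Incoh}(\mcal{T}_l)\leq 2\kappa_0^2\nu$, and decomposing the one-step error of Algorithm~\ref{alg: PRGD} into three pieces corresponding to the preconditioned Riemannian gradient update $\mcal{W}_l$, the trimming $\overline{\mcal{W}}_l$, and the TT-SVD retraction $\mcal{T}_{l+1}$. The key conceptual tool, provided by Lemma~\ref{lemma: close of P_tilde} and the remark after it, is that the PRGD iterate on $\mcal{T}_l$ is essentially an RGD iterate on the rescaled tensor $\wht{\mcal{T}}_l=\scr{W}_l^{1/2}\mcal{T}_l$, so the analysis reduces to an RGD-type contraction in the rescaled coordinates plus a controlled perturbation from the preconditioner.

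For the retraction and trimming pieces, since $\mcal{T}^*\in \bb{M}_{\mbf{r}}^{tt}$ is feasible and $\overline{\mcal{W}}_l$ has TT-rank at most $2\mbf{r}$, I would invoke the quasi-optimality of TT-SVD (Algorithm~\ref{alg: TT-SVD}) to obtain $\|\mcal{T}_{l+1}-\overline{\mcal{W}}_l\|_F \leq \sqrt{m-1}\,\|\mcal{T}^*-\overline{\mcal{W}}_l\|_F$, and a refined near-manifold argument yields $\|\mcal{T}_{l+1}-\mcal{T}^*\|_F^2\leq (1+\delta_m)\|\overline{\mcal{W}}_l-\mcal{T}^*\|_F^2$ with $\delta_m$ small under the induction distance. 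For the trim, since $\mcal{T}^*$ has spikiness $\leq\nu$ and the threshold $\xi_l=\tfrac{10\|\mcal{W}_l\|_F}{9\sqrt{d^*}}\nu$ strictly exceeds every entry of $\mcal{T}^*$ (the $10/9$ factor gives room, using $\|\mcal{W}_l\|_F\approx \|\mcal{T}^*\|_F$ from induction), the trim is the identity on $\mcal{T}^*$ and contractive on $\mcal{W}_l-\mcal{T}^*$, giving $\|\overline{\mcal{W}}_l-\mcal{T}^*\|_F \leq \|\mcal{W}_l-\mcal{T}^*\|_F$.

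The heart of the proof is bounding $\|\mcal{W}_l-\mcal{T}^*\|_F$ in terms of $\|\mcal{T}_l-\mcal{T}^*\|_F$. I would write
\begin{equation*}
\mcal{W}_l - \mcal{T}^* = (\mcal{T}_l - \mcal{T}^*) - \alpha_l\, \wtd{\scr{P}}_{\bb{T}_l}\scr{W}_l^{-1}\scr{P}_{\Omega}(\mcal{T}_l - \mcal{T}^*),
\end{equation*}
split $\mcal{T}_l-\mcal{T}^* = \wtd{\scr{P}}_{\bb{T}_l}(\mcal{T}_l-\mcal{T}^*) + (\scr{I}-\wtd{\scr{P}}_{\bb{T}_l})(\mcal{T}_l-\mcal{T}^*)$, and use Lemma~\ref{lemma: close of P_tilde} to rewrite $\wtd{\scr{P}}_{\bb{T}_l}=\scr{W}_l^{-1/2}\scr{P}_{\wht{\bb{T}}_l}\scr{W}_l^{1/2}$. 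Passing to the rescaled variables reduces the tangential term to a standard RGD contraction on $\wht{\bb{T}}_l$, driven by the choice of $\alpha_l$ so that $\alpha_l p\,\scr{W}_l^{-1}$ inverts $p^{-1}\scr{P}_{\Omega}$ up to a factor strictly less than $1$ on $\wht{\bb{T}}_l$; this is a preconditioned restricted-isometry property (RIP), established by matrix-Bernstein under the stated sample-size bound together with the incoherence invariant. The normal component $(\scr{I}-\wtd{\scr{P}}_{\bb{T}_l})(\mcal{T}_l-\mcal{T}^*)$ is second-order in $\|\mcal{T}_l-\mcal{T}^*\|_F/\underline{\sigma}$ by the curvature of $\bb{M}_{\mbf{r}}^{tt}$, hence dominated by the tangent contraction under the initialization assumption.

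The main obstacle will be establishing the preconditioned RIP: the weight $\scr{W}_l$ is built from $\mcal{G}_l=\scr{P}_{\Omega}(\mcal{T}_l-\mcal{T}^*)$, so it is statistically dependent on the very random operator whose concentration I must control. I expect to handle this either by conditioning on a high-probability event that $\scr{W}_l$ is close to its population-mean diagonal, or by a leave-one-out decoupling that treats $\scr{W}_l$ as deterministic up to negligible error. A secondary obstacle is propagating the spikiness/incoherence invariant through the trim-then-TT-SVD composition to certify $\operatorname{Incoh}(\mcal{T}_{l+1})\leq 2\kappa_0^2\nu$; Proposition~\ref{prop: Component weighted} should help by giving componentwise control of $\scr{W}_l$ on the TT-cores. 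Combining the three stage bounds while carefully tracking the $m$-dependent constants from TT-SVD quasi-optimality and the Bernstein tail bounds should produce the numerical contraction factor $0.3574$.
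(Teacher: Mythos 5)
Your skeleton --- reduce the one-step error to a bound on $\|\mcal{W}_l - \mcal{T}^*\|_F$, use Lemma~\ref{lemma: close of P_tilde} to pass to the rescaled tangent space $\wht{\bb{T}}_l$, and control the retraction and trim stages via TT-SVD quasi-optimality and entrywise arguments --- matches the paper. But the heart of your argument, a preconditioned restricted-isometry property for $\scr{W}_l^{-1/2}\scr{P}_\Omega\scr{W}_l^{-1/2}$ restricted to $\wht{\bb{T}}_l$, established by matrix Bernstein and decoupled from the data-dependent $\scr{W}_l$ by conditioning or a leave-one-out construction, is genuinely different from what the paper does, and the paper's route is considerably simpler. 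Rather than concentrating a weighted sampling operator, the paper inserts the scalar $p\,\epsilon_l^{-1/2}\scr{I}$ and expands $\|\mcal{W}_l - \mcal{T}^*\|_{\scr{W}_l}^2$ into the three terms $I_1, I_2, I_3$ of \eqref{eq:i123}. Because every diagonal entry of $\scr{W}_l$ lies in $[\epsilon_l^{1/2},(\epsilon_l+\|\mcal{G}_l\|_\vee^2)^{1/2}]$ and the tuning $\epsilon_l=\|\mcal{G}_l\|_\vee^2$ forces $\rho_l=\mu_l/\nu_l\le\sqrt{2}$, the preconditioner is within a constant factor of a scalar, so the deviation $\scr{W}_l^{-1}-\epsilon_l^{-1/2}\scr{I}$ is bounded \emph{deterministically} by \eqref{ineq:W-minus-I} and contributes only an $O(p\,\nu_l^{-1}\|\mcal{T}_l-\mcal{T}^*\|_F^2)$ perturbation. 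The only randomness that must be concentrated is the unweighted $\scr{P}_\Omega - p\,\scr{I}$, and the cross terms $\langle(\scr{P}_\Omega - p\,\scr{I})\wtd{\mcal{Y}}_{i,j},\,\delta\mcal{T}_i\rangle$ are handled by the uniform high-probability events $\bm{\mcal{E}}_3,\bm{\mcal{E}}_5$ from \cite{cai2022provable} (via Lemma~\ref{Lema13} and Lemma~\ref{coro12}), which hold simultaneously for \emph{all} TT-format tensors of the given rank. That uniformity sidesteps entirely the statistical dependence between $\scr{W}_l$, $\mcal{T}_l$, and $\Omega$ that you flag as your main obstacle; no leave-one-out is needed --- which is fortunate, because $\mcal{T}_l$ (and hence $\scr{W}_l$) depends on the full sampling history through every previous iterate, so a clean leave-one-out decoupling here would be quite delicate. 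One secondary point: the paper's proof of the theorem applies Lemma~\ref{bound-ttsvd} directly to $\operatorname{SVD}_{\mbf{r}}^{tt}(\mcal{W}_l)$ and defers the trim step and the propagation of the incoherence invariant to the analogous arguments in \cite{cai2022provable}; your proposal to carry these as explicit induction invariants is more self-contained, but it elaborates steps the paper treats as borrowed.
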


\subsection{Key Lemmas}
For the recovery guarantee of exact tensor, the weighted inner product $\langle \cdot, \cdot \rangle_{\scr{W}_l}$ defined in \eqref{equ: new metric} and some lemmas introduced by this inner product play a crucial role. Here, we first present a lemma to demonstrate the equivalence between the weighted norm $\| \cdot \|_{\scr{W}_l}$  and the Frobinus norm $\| \cdot \|_F$. 

\begin{lemma}\label{norm-equivalence}
For any $\mcal{Z} \in \bb{R}^{d_1 \times d_2 \times \cdots \times d_m}$ and $l \in \bb{N}$, we have
\begin{equation}
\label{eq:equivalence} 
\nu_l \| \mcal{Z} \|_{F}^2 \leq \| \mcal{Z} \|_{\scr{W}_l}^2 \leq \mu_l \| \mcal{Z} \|_{F}^2,
\end{equation}
where $\nu_l = \epsilon_l^{\frac{1}{2}}$ and $\mu_l = (\epsilon_l + \| \mcal{G}_l \|_{\vee}^2 )^{\frac{1}{2}}$.
\end{lemma}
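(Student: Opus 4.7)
The plan is to prove the norm equivalence by expanding the weighted norm entrywise and then bounding each of the diagonal weights $\mathbf{G}_{l,k}^{1/(2m)}(x_k,x_k)$ using the definition of $\mathbf{G}_{l,k}$.

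First, I would unroll the definition. By \eqref{equ: new metric}, the weight operator acts as an elementwise multiplication along each mode, so
\begin{equation*}
\|\mcal{Z}\|_{\scr{W}_l}^2 \;=\; \langle \mcal{Z}\times_{i=1}^m \mbf{G}_{l,i}^{\frac{1}{2m}},\, \mcal{Z}\rangle \;=\; \sum_{x_1=1}^{d_1}\cdots\sum_{x_m=1}^{d_m} \mcal{Z}(x_1,\dots,x_m)^2 \prod_{k=1}^m \mbf{G}_{l,k}^{\frac{1}{2m}}(x_k,x_k).
\end{equation*}
Since each $\mbf{G}_{l,k}$ is diagonal, its $1/(2m)$-th matrix power is obtained by taking the $1/(2m)$-th power of its diagonal entries. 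So the task reduces to bounding each $\mbf{G}_{l,k}(x_k,x_k)$ above and below.

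Next, I would use the explicit formula $\mbf{G}_{l,k} = \epsilon_l\,\mbf{I}_{d_k} + \diag(\scr{M}_k(\mcal{G}_l)\scr{M}_k(\mcal{G}_l)^\top)$. Its $(x_k,x_k)$ entry is $\epsilon_l + \|\scr{M}_k(\mcal{G}_l)(x_k,:)\|_2^2$, which is bounded below by $\epsilon_l$ and above by $\epsilon_l + \|\mcal{G}_l\|_\vee^2$ thanks to the definition of $\|\mcal{G}_l\|_\vee$ in the notations section. Taking $1/(2m)$-th powers gives
\begin{equation*}
\epsilon_l^{\frac{1}{2m}} \;\le\; \mbf{G}_{l,k}^{\frac{1}{2m}}(x_k,x_k) \;\le\; (\epsilon_l + \|\mcal{G}_l\|_\vee^2)^{\frac{1}{2m}}.
\end{equation*}

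Finally, I would take the product over $k=1,\dots,m$ of these $m$ identical bounds to obtain
\begin{equation*}
\epsilon_l^{\frac{1}{2}} \;\le\; \prod_{k=1}^m \mbf{G}_{l,k}^{\frac{1}{2m}}(x_k,x_k) \;\le\; (\epsilon_l + \|\mcal{G}_l\|_\vee^2)^{\frac{1}{2}},
\end{equation*}
and plug this pointwise bound back into the entrywise sum expression for $\|\mcal{Z}\|_{\scr{W}_l}^2$. Factoring the bounds out of the sum yields $\nu_l\|\mcal{Z}\|_F^2 \le \|\mcal{Z}\|_{\scr{W}_l}^2 \le \mu_l\|\mcal{Z}\|_F^2$ with the claimed constants.

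There is no real obstacle: the whole argument is a direct elementwise estimate that exploits the diagonal structure of $\mbf{G}_{l,k}$ and the fact that the weight is a product of exactly $m$ factors, each raised to the power $1/(2m)$, so the exponents combine neatly to $1/2$. The proof should take only a few lines.
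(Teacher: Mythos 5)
Your proposal is correct and follows essentially the same route as the paper: both reduce the weighted norm to an entrywise sum with a product of diagonal weights, bound each diagonal entry of $\mbf{G}_{l,k}$ between $\epsilon_l$ and $\epsilon_l + \|\mcal{G}_l\|_\vee^2$, and note that the $m$ exponents $1/(2m)$ combine to $1/2$. The only cosmetic difference is that the paper first symmetrizes the inner product as $\|\mcal{Z}\times_{i=1}^m \mbf{G}_{l,i}^{1/(4m)}\|_F^2$ before bounding, which is equivalent for diagonal positive weights.
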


\begin{proof}
According to the definition of $\langle\cdot, \cdot\rangle_{\scr{W}_l}$ in \eqref{equ: new metric}, we have
$$
\begin{aligned}
\| \mcal{Z} \|_{\scr{W}_l}^2 &= \langle \mcal{Z}\times_{i=1}^m \mbf{G}_{l,i}^{\frac{1}{4 m}}, \mcal{Z}\times_{i=1}^m \mbf{G}_{l,i}^{\frac{1}{4 m}} \rangle
\\
&= \| \mcal{Z} \times_{i=1}^m \mbf{G}_{l,i}^{\frac{1}{4m}} \|_{F}^2,
\end{aligned}
$$
where $\mbf{G}_{i, l}:= \epsilon_l \mbf{I}_{d_i} + \diag(\scr{M}_i(\mcal{G}_l) \scr{M}_i(\mcal{G}_l)^\top)$ for $i = 1, \dots, m$. Since $\mbf{G}_{i,l}$ is a diagonal matrix and each diagonal element satisfies
$$
\epsilon_l < | \mbf{G}_{i,l} (j, k) | < \epsilon_l + \| \mcal{G}_l \|_{\vee}^2,
$$
then we have
$$
\epsilon_l^{\frac{1}{2}} \| \mcal{Z} \|_{F}^2 \leq \| \mcal{Z} \|_{\scr{W}_l}^2 \leq (\epsilon_l + \| \mcal{G}_l \|_{\vee}^2 )^{\frac{1}{2}} \| \mcal{Z} \|_{F}^2. 
$$
Denote $\nu_l = \epsilon_l^{\frac{1}{2}} $ and $\mu_l = (\epsilon_l + \| \mcal{G}_l \|_{\vee}^2 )^{\frac{1}{2}}$, it can be rewritten as
$$
\nu_l \| \mcal{Z} \|_{F}^2 \leq \| \mcal{Z} \|_{\scr{W}_l}^2 \leq \mu_l \| \mcal{Z} \|_{F}^2.
$$
This completes the proof.
\end{proof}

Next, we present some lemmas required by the proof of the main theorem. Their proofs are provided in the Appendix \ref{appendixA}.


\begin{lemma}\label{lemma:bound of tildet}
Let $\mcal{T}$ be a tensor of rank $(r_1, \dots, r_{m-1})$ satisfying $Incoh(\mcal{T}) \leq \sqrt{\mu}$. Consider its left orthogonal decomposition $\mcal{T} = \left[\widetilde{T}_1, \dots, \widetilde{T}_m\right]$ generated by \eqref{equ: new left orthogonal}. Then the following bounds hold
        $$
    \max_{x_i}\|\wtd{T}_i (:, x_i, :)\|_F\leq\left\{\begin{array}{cc}
        \nu_l^{-\frac{1}{2m}}  \rho_l \sqrt{\frac{ \mu r_i}{d_i}} & i\in [m-1], \\
         \nu_l^{-\frac{1}{2m}} \rho_l  \mu_l^{\frac{1}{2}}\sigma_{\max} (\mcal{T})\sqrt{  \frac{\mu r_{m-1}}{d_m}} & i = m,
    \end{array}\right.
 $$
    and

    $$
    \quad
    \|\wtd{T}_i\|_F \leq \left\{\begin{array}{cc}
       \nu_l^{-\frac{1}{2m}} \sqrt{r_i}  &  i\in [m-1],\\
         \nu_l^{-\frac{1}{2m}} \mu_l^{\frac{1}{2}}\sqrt{\underline{r}} \sigma_{\max}(\mcal{T}) &  i = m.
    \end{array}\right. 
    $$
\end{lemma}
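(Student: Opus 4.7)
The plan is to reduce both bounds to standard bounds on a classical left-orthogonal TT decomposition, using the relation
\begin{equation*}
\widetilde{T}_i = \widehat{T}_i \times_2 \mathbf{G}_{l,i}^{-\frac{1}{4m}}, \qquad i=1,\ldots,m,
\end{equation*}
coming from \eqref{equ: new left orthogonal}, where $[\widehat{T}_1,\ldots,\widehat{T}_m]$ is the standard left-orthogonal TT decomposition of $\widehat{\mcal{T}}:=\scr{W}_l^{1/2}\mcal{T}$. The first step is to exploit the fact that $\mathbf{G}_{l,i}$ is diagonal with entries bounded below by $\epsilon_l$, so that mode-2 multiplication by $\mathbf{G}_{l,i}^{-1/(4m)}$ scales the $x_i$-slice of $\widehat{T}_i$ by the single scalar $\mathbf{G}_{l,i}^{-1/(4m)}(x_i,x_i) \le \epsilon_l^{-1/(4m)} = \nu_l^{-1/(2m)}$. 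This immediately gives the pointwise contractions
\begin{equation*}
\|\widetilde{T}_i(:,x_i,:)\|_F \le \nu_l^{-\frac{1}{2m}} \|\widehat{T}_i(:,x_i,:)\|_F, \qquad \|\widetilde{T}_i\|_F \le \nu_l^{-\frac{1}{2m}} \|\widehat{T}_i\|_F,
\end{equation*}
so the task reduces to bounding slices and Frobenius norms of $\widehat{T}_i$.

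For the Frobenius norm bound, since $L(\widehat{T}_i)$ has orthonormal columns for $i \in [m-1]$, I immediately obtain $\|\widehat{T}_i\|_F^2 = r_i$, yielding the claimed $\nu_l^{-1/(2m)}\sqrt{r_i}$. For $i = m$, I use the factorization $\widehat{\mcal{T}}^{\langle m-1\rangle} = \widehat{T}^{\leq m-1}\widehat{T}^{\geq m}$ together with columnwise orthonormality of $\widehat{T}^{\leq m-1}$ to get $\|\widehat{T}_m\|_F = \|\widehat{\mcal{T}}^{\langle m-1\rangle}\|_F \le \sqrt{r_{m-1}}\,\sigma_{\max}(\widehat{\mcal{T}}) \le \sqrt{\underline{r}}\,\mu_l^{1/2}\sigma_{\max}(\mcal{T})$, where the last step invokes Lemma \ref{norm-equivalence} to transfer the spectral norm from $\widehat{\mcal{T}}$ back to $\mcal{T}$.

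For the slice bounds, I invoke the standard left-orthogonal TT slice bound (e.g.\ as used in \cite{cai2022provable}) applied to $\widehat{\mcal{T}}$, which yields $\max_{x_i}\|\widehat{T}_i(:,x_i,:)\|_F \le \sqrt{\mu(\widehat{\mcal{T}})\, r_i/d_i}$ for $i \in [m-1]$ and an analogous $\sigma_{\max}(\widehat{\mcal{T}})$-weighted version for $i=m$. The key then is to transfer the incoherence hypothesis $\mathrm{Incoh}(\mcal{T})\le\sqrt{\mu}$ from $\mcal{T}$ to $\widehat{\mcal{T}}$: since the two differ only by mode-$i$ multiplications by the diagonal matrices $\mathbf{G}_{l,i}^{1/(4m)}$ whose entries lie in $\bigl[\epsilon_l^{1/(4m)}, (\epsilon_l+\|\mcal{G}_l\|_\vee^2)^{1/(4m)}\bigr]$, the ratio of these bounds to appropriate powers gives exactly the distortion factor absorbed into $\rho_l$. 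Combined with Lemma \ref{norm-equivalence} to relate $\sigma_{\max}(\widehat{\mcal{T}})$ to $\sigma_{\max}(\mcal{T})$ for the $i=m$ case, this yields the stated $\nu_l^{-1/(2m)}\rho_l\sqrt{\mu r_i/d_i}$ and $\nu_l^{-1/(2m)}\rho_l\mu_l^{1/2}\sigma_{\max}(\mcal{T})\sqrt{\mu r_{m-1}/d_m}$.

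The main obstacle I anticipate is the incoherence-transfer step. Tracking how the per-mode diagonal scalings $\mathbf{G}_{l,j}^{1/(4m)}$ propagate through the separation matrices $\mcal{T}^{\langle i\rangle}$ is delicate because both the column-orthonormal factor $\widehat{T}^{\leq i}$ and the right factor are perturbed simultaneously, and row norms of separations must be controlled via Kronecker products of these diagonal weights. I expect this to collapse cleanly to multiplication of the original incoherence bound by a single power of $(\mu_l/\nu_l)^{1/2}$-type quantity, which is precisely the $\rho_l$ factor appearing in the statement; verifying that the exponent matches across both the $i<m$ and $i=m$ cases is the one technical point that will require care.
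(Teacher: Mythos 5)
Your proposal follows essentially the same route as the paper: reduce to the standard left-orthogonal decomposition $[\wht{T}_1,\dots,\wht{T}_m]$ of $\wht{\mcal{T}}=\scr{W}_l^{1/2}\mcal{T}$, apply the classical slice/Frobenius bounds for incoherent left-orthogonal TT tensors (the paper invokes Lemma~\ref{Lema14} from~\cite{cai2022provable}), then undo the diagonal rescaling $\wtd{T}_i=\wht{T}_i\times_2\mbf{G}_{l,i}^{-\frac{1}{4m}}$ and pay the factor $\nu_l^{-\frac{1}{2m}}$ per slice. The incoherence-transfer step you flagged as delicate is exactly what the paper handles by separately bounding $\operatorname{Spiki}(\wht{\mcal{T}})\le\rho_l^{1/2}\operatorname{Spiki}(\mcal{T})$, $\sigma_{\max}(\wht{\mcal{T}})\le\mu_l^{1/2}\sigma_{\max}(\mcal{T})$, and $\sigma_{\min}^{-1}(\wht{\mcal{T}})\le\nu_l^{-1/2}\sigma_{\min}^{-1}(\mcal{T})$, which together give $\operatorname{Incoh}(\wht{\mcal{T}})\le\rho_l\operatorname{Incoh}(\mcal{T})$; this is the $\rho_l$ you correctly anticipate absorbing the mode-wise distortions.

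One concrete slip in your $i=m$ Frobenius-norm argument: you write $\|\wht{T}_m\|_F=\|\wht{\mcal{T}}^{\langle m-1\rangle}\|_F\le\sqrt{r_{m-1}}\,\sigma_{\max}(\wht{\mcal{T}})\le\sqrt{\underline{r}}\,\mu_l^{1/2}\sigma_{\max}(\mcal{T})$, but $\sqrt{r_{m-1}}\le\sqrt{\underline{r}}$ is false in general since $\underline{r}=\min_i r_i$. The correct route is to note $\|\wht{T}_m\|_F=\|\wht{\mcal{T}}\|_F=\|\wht{\mcal{T}}^{\langle i\rangle}\|_F\le\sqrt{r_i}\,\sigma_{\max}(\wht{\mcal{T}})$ for every $i$, and then optimize over $i$ to obtain $\sqrt{\underline{r}}$, which is what Lemma~\ref{Lema14} (applied to $\wht{\mcal{T}}$, as the paper does) delivers directly. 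Also, the bound $\sigma_{\max}(\wht{\mcal{T}})\le\mu_l^{1/2}\sigma_{\max}(\mcal{T})$ is not a direct consequence of Lemma~\ref{norm-equivalence} (which concerns the Frobenius/weighted-Frobenius norm equivalence); it follows from the fact that $\wht{\mcal{T}}^{\langle i\rangle}$ is obtained from $\mcal{T}^{\langle i\rangle}$ by left- and right-multiplication by Kronecker products of diagonal matrices whose entries lie in $[\nu_l^{1/(2m)},\mu_l^{1/(2m)}]$, and the $m$ factors of $\mu_l^{1/(2m)}$ aggregate to $\mu_l^{1/2}$. These are local repairs rather than structural problems; with them, your argument matches the paper's.
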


\begin{lemma}\label{lemma: tilde delta norm}
Let $\mcal{T}$ and $\mcal{T}^*$ be tensors of rank $(r_1, r_2, \dots, r_{m-1})$ with $Incoh(\mcal{T}) \leq \sqrt{\mu}$. Consider the left orthogonal decomposition $\mcal{T} =[\wtd{T}_1, \wtd{T}_2, \dots, \wtd{T}_m]$ and $\mcal{T}^* = [\wtd{T}_1^*, \wtd{T}_2^*, \dots, \wtd{T}_m^*]$, respectively, generated by \eqref{equ: new left orthogonal}. Denote $\wtd{\Delta}_i = \wtd{T}_i - \wtd{T}_i^*$, then the following bounds hold
$$
\|\wtd{\Delta}_i\|_F\leq \left\{\begin{array}{ll}
    20 \rho_l^{\frac{3}{2}} \nu_l^{-\frac{1}{2m}} \sigma_{\min}^{-1}(\mcal{T}^*) \sqrt{r_i} \kappa_0^2 \|\mcal{T} - \mcal{T}^*\|_F, & i\in [m-1], \\
    3 \rho_l^{\frac{1}{2}} \nu_l^{-\frac{1}{2m}} \mu_l^{\frac{1}{2}} \kappa_0 \sqrt{r_{m-1}} \|\mcal{T} - \mcal{T}^*\|_F, & i=m,
\end{array}\right.
$$
and 
$$
\max_{x_i}\|\wtd{\Delta}_i(:, x_i, :)\|_F\leq \left\{\begin{array}{ll}
 2 \rho_l \nu_l^{-\frac{1}{2m}}\sqrt{\mu r_i d_i^{-1}}, & i\in [m-1], \\
   2 \rho_l \nu_l^{-\frac{1}{2m}} \mu_l^{\frac{1}{2}}\sqrt{ \mu r_{m-1} d_m^{-1}} \sigma_{\max}(\mcal{T}^*), & i=m.
\end{array}\right.
$$
\end{lemma}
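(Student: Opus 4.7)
The overall strategy is to reduce the weighted claim to the analogous unweighted result (known from \cite{cai2022provable}) by lifting everything to the reweighted tensors $\wht{\mcal{T}} := \scr{W}_l^{\frac{1}{2}} \mcal{T}$ and $\wht{\mcal{T}}^* := \scr{W}_l^{\frac{1}{2}} \mcal{T}^*$, and then translating the resulting bounds back to $\mcal{T}$, $\mcal{T}^*$. The key observation is that by construction \eqref{equ: new left orthogonal}, $\wtd{T}_i = \wht{T}_i \times_2 \mbf{G}_{l,i}^{-\frac{1}{4m}}$ where $[\wht{T}_1,\dots,\wht{T}_m]$ is the \emph{standard} left-orthogonal TT decomposition of $\wht{\mcal{T}}$, and analogously for the starred quantities. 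Hence $\wtd{\Delta}_i = \wht{\Delta}_i \times_2 \mbf{G}_{l,i}^{-\frac{1}{4m}}$ with $\wht{\Delta}_i := \wht{T}_i - \wht{T}_i^*$, and since $\mbf{G}_{l,i}^{-\frac{1}{4m}}$ is diagonal with entries in $[\mu_l^{-\frac{1}{2m}},\, \nu_l^{-\frac{1}{2m}}]$, one obtains the elementwise scaling bound $\|\wtd{\Delta}_i\|_F \leq \nu_l^{-\frac{1}{2m}} \|\wht{\Delta}_i\|_F$ and $\|\wtd{\Delta}_i(:,x_i,:)\|_F \leq \nu_l^{-\frac{1}{2m}} \|\wht{\Delta}_i(:,x_i,:)\|_F$.

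For the Frobenius bound, I would invoke the unweighted analogue of the present lemma from \cite{cai2022provable}, applied to $\wht{\mcal{T}},\wht{\mcal{T}}^*$, which yields $\|\wht{\Delta}_i\|_F \lesssim \sqrt{r_i}\, \sigma_{\min}^{-1}(\wht{\mcal{T}}^*)\, \kappa(\wht{\mcal{T}}^*)^2\, \|\wht{\mcal{T}} - \wht{\mcal{T}}^*\|_F$ for $i \in [m-1]$, and the corresponding ``last-core'' bound for $i = m$. To descend back, I will use three translation inequalities: (i) $\|\wht{\mcal{T}}-\wht{\mcal{T}}^*\|_F = \|\mcal{T}-\mcal{T}^*\|_{\scr{W}_l} \le \mu_l^{\frac{1}{2}}\|\mcal{T}-\mcal{T}^*\|_F$ by Lemma~\ref{norm-equivalence}; (ii) the min--max characterization of singular values applied to each separation $\wht{\mcal{T}}^{*\langle i\rangle}$, which is $(\mbf{G}_{l,1}^{\frac{1}{4m}}\otimes\cdots\otimes\mbf{G}_{l,i}^{\frac{1}{4m}}) \mcal{T}^{*\langle i\rangle} (\mbf{G}_{l,i+1}^{\frac{1}{4m}}\otimes\cdots\otimes\mbf{G}_{l,m}^{\frac{1}{4m}})$, giving $\sigma_{\min}(\wht{\mcal{T}}^*) \geq \nu_l^{\frac{1}{2}} \sigma_{\min}(\mcal{T}^*)$ and $\sigma_{\max}(\wht{\mcal{T}}^*) \leq \mu_l^{\frac{1}{2}} \sigma_{\max}(\mcal{T}^*)$; (iii) consequently $\kappa(\wht{\mcal{T}}^*) \leq (\mu_l/\nu_l)^{\frac{1}{2}} \kappa_0$. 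Combining these with the scaling bound produces a factor $\nu_l^{-\frac{1}{2m}} \cdot \nu_l^{-\frac{1}{2}} \cdot (\mu_l/\nu_l) \cdot \mu_l^{\frac{1}{2}} = \rho_l^{\frac{3}{2}} \nu_l^{-\frac{1}{2m}}$ out in front (for the appropriate definition $\rho_l = \mu_l/\nu_l$), matching the claimed $i\in[m-1]$ bound. For $i=m$, the unweighted lemma carries only one power of $\kappa$ (and no $\sigma_{\min}^{-1}$), so one power of $(\mu_l/\nu_l)^{\frac{1}{2}}$ cancels, leaving $\rho_l^{\frac{1}{2}}$ together with the explicit $\mu_l^{\frac{1}{2}}$ from translating $\|\wht{\mcal{T}}-\wht{\mcal{T}}^*\|_F$.

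The slice bound follows from a direct triangle inequality $\|\wtd{\Delta}_i(:,x_i,:)\|_F \leq \|\wtd{T}_i(:,x_i,:)\|_F + \|\wtd{T}_i^*(:,x_i,:)\|_F$ and a term-by-term application of the slice bounds already proved in Lemma~\ref{lemma:bound of tildet}, which gives the stated $2\rho_l\nu_l^{-\frac{1}{2m}}\sqrt{\mu r_i/d_i}$ (resp.\ $2\rho_l\nu_l^{-\frac{1}{2m}}\mu_l^{\frac{1}{2}}\sqrt{\mu r_{m-1}/d_m}\,\sigma_{\max}(\mcal{T}^*)$). Here one needs an incoherence hypothesis on $\mcal{T}^*$ matching that assumed on $\mcal{T}$; this is obtained from the spikiness bound $Spiki(\mcal{T}^*)\le\nu$ in Theorem~\ref{lemma:convergence-guarantee} together with standard conversions between spikiness and TT-incoherence, so in the present lemma both $\mcal{T}$ and $\mcal{T}^*$ satisfy the $\sqrt{\mu}$ incoherence needed to quote Lemma~\ref{lemma:bound of tildet} twice.

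The main obstacle is bookkeeping rather than conceptual: one must carefully account for the powers of $\nu_l$, $\mu_l$, and $\rho_l$ that arise from the four places where reweighting enters (the mode-$2$ scaling of $\wht{T}_i - \wht{T}_i^*$, the $\|\cdot\|_{\scr{W}_l}\leftrightarrow\|\cdot\|_F$ equivalence, and the singular-value/condition-number inflation of $\wht{\mcal{T}}^*$ relative to $\mcal{T}^*$), and verify that they combine exactly to the powers $\rho_l^{3/2}$ and $\rho_l^{1/2}$ claimed. A subsidiary concern is making the min--max bound for $\sigma_{\min}(\wht{\mcal{T}}^*)$ rigorous: since $\wht{\mcal{T}}^{*\langle i\rangle}$ is a two-sided diagonal rescaling of $\mcal{T}^{*\langle i\rangle}$, the cleanest route is via the Courant--Fischer variational characterization $\sigma_{r_i}(BAC)\ge \sigma_{\min}(B)\sigma_{r_i}(A)\sigma_{\min}(C)$ for the lower bound and the dual for the upper bound, which is elementary but must be invoked separately at each separation index. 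Once these weight-tracking computations are in hand the result follows mechanically.
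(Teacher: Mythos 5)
Your proposal is correct and follows essentially the same route as the paper: lift to $\wht{\mcal{T}}=\scr{W}_l^{1/2}\mcal{T}$ and $\wht{\mcal{T}}^*=\scr{W}_l^{1/2}\mcal{T}^*$, invoke the unweighted core-perturbation bounds of \cite{cai2022provable} (Lemma~\ref{lemma: delta norm}), and translate back via the diagonal scaling $\wtd{\Delta}_i=\wht{\Delta}_i\times_2\mbf{G}_{l,i}^{-\frac{1}{4m}}$ together with the norm equivalence and the singular-value inflation $\sigma_{\min}(\wht{\mcal{T}}^*)\ge\nu_l^{1/2}\sigma_{\min}(\mcal{T}^*)$, $\sigma_{\max}(\wht{\mcal{T}}^*)\le\mu_l^{1/2}\sigma_{\max}(\mcal{T}^*)$; your power-counting ($\rho_l^{3/2}$ for $i\in[m-1]$, $\rho_l^{1/2}\mu_l^{1/2}$ for $i=m$) matches the paper's. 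The only cosmetic difference is that you obtain the slice bound by a triangle inequality applied to Lemma~\ref{lemma:bound of tildet}, while the paper quotes the slice bound of Lemma~\ref{lemma: delta norm} on the hatted tensors directly — these are the same estimate.
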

\subsection{Proof of Main Theorem}
We begin by providing the proof of Theorem \ref{lemma:convergence-guarantee}.
\begin{proof}[Proof of Theorem \ref{lemma:convergence-guarantee}]
By Algorithm \ref{alg: PRGD} and the equivalence between the Frobenius norm and the $\| \cdot \|_{\mcal{W}_l}$ norm, we have
$$
\begin{aligned}
\|\mcal{T}_{l+1} - \mcal{T}^*\|^2_{F} &= \|\operatorname{SVD}_\mbf{r}^{tt}(\mcal{W}_l) - \mcal{T}^*\|_{F}^2\\
&\leq \|\mcal{W}_l - \mcal{T}^*\|_{F}^{2} + \frac{600m}{\underline{\sigma}} \|\mcal{W}_l - \mcal{T}^*\|_{F}^{3}\\
&\leq \frac{1}{\nu_l}\|\mcal{W}_l - \mcal{T}^*\|_{\scr{W}_l}^2+ \frac{1}{\nu_l^{\frac{3}{2}}} \frac{600m}{\underline{\sigma}} \|\mcal{W}_l - \mcal{T}^*\|_{\scr{W}_l}^{3},
\end{aligned}
$$
where the first inequality is obtained from Lemma \ref{bound-ttsvd}. Next, we consider the estimation of  $\|\mcal{W}_l - \mcal{T}^*\|_{\scr{W}_l}^2$.
\begin{equation}\label{eq:i123}
\begin{aligned}
    \|\mcal{W}_l - \mcal{T}^*\|_{\scr{W}_l}^2 &= \|\mcal{T}_l - \alpha_l \tsp\scr{W}_l^{-1}\scr{P}_{\Omega}(\mcal{T}_l - \mcal{T}^*) - \mcal{T}^*\|_{\scr{W}_l}^2\\
    &= \| \scr{W}_l^{\frac{1}{2}}(\mcal{T}_l - \mcal{T}^*) - \alpha_l \scr{W}_l^{\frac{1}{2}} \tsp  \scr{W}_l^{-1} \scr{P}_{\Omega}(\mcal{T}_l - \mcal{T}^*) \|_F^2\\
    &= \| \scr{W}_l^{\frac{1}{2}}(\mcal{T}_l - \mcal{T}^*) - \alpha_l \scr{P}_{\wht{\bb{T}}_l} \scr{W}_l^{-\frac{1}{2}} \scr{P}_{\Omega}(\mcal{T}_l - \mcal{T}^*) \|_F^2\\
    &=\|\scr{W}_l^{\frac{1}{2}}(\mcal{T}_l - \mcal{T}^*) - \alpha_l p \epsilon_l^{-\frac{1}{2}} \scr{P}_{\wht{\bb{T}}_l}\scr{W}_l^{\frac{1}{2}}(\mcal{T}_l - \mcal{T}^*) - \alpha_l \scr{P}_{\wht{\bb{T}}_l}(\scr{W}_l^{-\frac{1}{2}} \scr{P}_{\Omega}\scr{W}_l^{-\frac{1}{2}} - p \epsilon_l^{-\frac{1}{2}} \scr{I})\scr{W}_l^{\frac{1}{2}}(\mcal{T}_l - \mcal{T}^*) \|_F^2\\
    &= \|\scr{W}_l^{\frac{1}{2}}(\mcal{T}_l - \mcal{T}^*) - \alpha_l p \epsilon_l^{-\frac{1}{2}} \scr{P}_{\wht{\bb{T}}_l}\scr{W}_l^{\frac{1}{2}}(\mcal{T}_l - \mcal{T}^*)\|_F^2\\
    &\quad -2\alpha_l \left\langle\scr{W}_l^{\frac{1}{2}}(\mcal{T}_l - \mcal{T}^*) - \alpha_l p \epsilon_l^{-\frac{1}{2}} \scr{P}_{\wht{\bb{T}}_l}\scr{W}_l^{\frac{1}{2}}(\mcal{T}_l - \mcal{T}^*), \scr{P}_{\wht{\bb{T}}_l}(\scr{W}_l^{-\frac{1}{2}} \scr{P}_{\Omega}\scr{W}_l^{-\frac{1}{2}} - p \epsilon_l^{-\frac{1}{2}} \scr{I})\scr{W}_l^{\frac{1}{2}}(\mcal{T}_l - \mcal{T}^*)   \right\rangle\\
    &\quad + \alpha_l^2 \| \scr{P}_{\wht{\bb{T}}_l}(\scr{W}_l^{-\frac{1}{2}} \scr{P}_{\Omega}\scr{W}_l^{-\frac{1}{2}} - p \epsilon_l^{-\frac{1}{2}} \scr{I})\scr{W}_l^{\frac{1}{2}}(\mcal{T}_l - \mcal{T}^*)  \|_F^2\\
    &= \underbrace{\|\scr{W}_l^{\frac{1}{2}}(\mcal{T}_l - \mcal{T}^*) - \alpha_l p \epsilon_l^{-\frac{1}{2}} \scr{P}_{\wht{\bb{T}}_l}\scr{W}_l^{\frac{1}{2}}(\mcal{T}_l - \mcal{T}^*)\|_F^2}_{I_1} \\
    &\quad + (2\alpha_l^2 p \epsilon_l^{-\frac{1}{2}} - 2 \alpha_l) \underbrace{ \left\langle \scr{P}_{\wht{\bb{T}}_l}\scr{W}_l^{\frac{1}{2}}(\mcal{T}_l - \mcal{T}^*), \scr{P}_{\wht{\bb{T}}_l}(\scr{W}_l^{-\frac{1}{2}} \scr{P}_{\Omega}\scr{W}_l^{-\frac{1}{2}} - p \epsilon_l^{-\frac{1}{2}} \scr{I})\scr{W}_l^{\frac{1}{2}}(\mcal{T}_l - \mcal{T}^*) \right\rangle }_{I_2}\\
    &\quad +  \alpha_l^2 \underbrace{ \| \scr{P}_{\wht{\bb{T}}_l}(\scr{W}_l^{-\frac{1}{2}} \scr{P}_{\Omega}\scr{W}_l^{-\frac{1}{2}} - p \epsilon_l^{-\frac{1}{2}} \scr{I})\scr{W}_l^{\frac{1}{2}}(\mcal{T}_l - \mcal{T}^*) \|_F^2 }_{I_3}.
\end{aligned}
\end{equation}
Using the estimation \eqref{eq:I1}, \eqref{eq:I2} and \eqref{eq:I3} in Appendix \ref{appendixB},  we obtain
$$
\begin{aligned}
\|\mcal{W}_l - \mcal{T}^*\|_{\scr{W}_l}^2 &\leq \left[\big( (1-\eta_l)^2 + (2- \eta_l)\eta_l  \frac{400}{600000}\big) \rho_l \nu_l +   0.71  (2\alpha_l^2 p \epsilon_l^{-\frac{1}{2}} - 2 \alpha_l)  p  + 0.354 \alpha_l^2 p^2\nu_l^{-1} \right] \|\mcal{T}_l - \mcal{T}^*\|_F^2\\
&\leq 0.3571 \epsilon_l^{\frac{1}{2}} \|\mathcal{T}_l - \mcal{T}^*\|_F^2.
\end{aligned}
$$
where we choose $\alpha_l = 1.001 \epsilon_l^{\frac{1}{2}} p^{-1}$, then we know $\eta_l =  \alpha_l p \epsilon_l^{-\frac{1}{2}} = 1.001$. Therefore, we have
\begin{equation}\label{eq:T-T*-norm}
\begin{aligned}
\|\mcal{T}_{l+1} - \mcal{T}^*\|^2_{F}
&\leq \frac{1}{\nu_l}\|\mcal{W}_l - \mcal{T}^*\|_{\scr{W}_l}^2 + \frac{1}{\nu_l^{\frac{3}{2}}} \frac{600m}{\underline{\sigma}} \|\mcal{W}_l - \mcal{T}^*\|_{\scr{W}_l}^{3}\\
&\leq 0.3571 \|\mathcal{T}_l - \mcal{T}^*\|_F^2 + \frac{600m}{\underline{\sigma}} 0.3571^{\frac{3}{2}}\|\mathcal{T}_l - \mcal{T}^*\|_F^3\\
&\leq 0.3571(1 + 0.001\sqrt{0.3571}) \|\mcal{T}_l - \mcal{T}^*\|_F^2\\
&\leq 0.3574 \|\mcal{T}_l - \mcal{T}^*\|_F^2,
\end{aligned}
\end{equation}
where the second inequality follows from the fact that $\| \mcal{T}_l - \mcal{T}^* \|_F \leq \frac{\underline{\sigma}}{600000 m}$ due to the initialization.
\end{proof}

\section{Numerical Experiments}\label{section: numerical experiments}
In this section, we compare our proposed PRGD algorithm with the RGD algorithm \cite{cai2022provable} for the tensor completion problem in the TT format. For a full comparison, we implement two step-size selection strategies, the adaptive step size and optimal constant step size, for both of these two algorithms. 

We set different oversampling(OS) ratios to investigate the recovery ability of the tested algorithms. The OS ratio is the ratio of the number of samples to the dimension of the constraints. For $m$-order tensor $\mcal{T}^*\in \mathbb{R}^{d_1\times \dots\times d_m}$ with TT-rank $(1, r_1,\dots, r_{m-1}, 1)$, the OS ratio is 
$$
\operatorname{OS}: = \frac{|\Omega|}{\operatorname{dim}(\bb{M}_{\mbf{r}}^{tt})} = \frac{|\Omega|}{\sum_{k=1}^m r_{k-1} d_k r_k - \sum_{k=1}^{m-1} r_k^2}.
$$

\subsection{Synthetic Data}

We first test those algorithms on synthetic tensors with size $d\times d\times d$ and TT rank $(r, r)$. The ground truth low TT-rank tensor $\mcal{T}^*$ is constructed by the TT format, where the entries of each component are generated from a uniform distribution in $[0, 1]$. Given sampling size $n = \operatorname{OS}\times \operatorname{dim}(\bb{M}_{\mbf{r}}^{tt})$, the observed entries set $\Omega$ is obtained by sampling uniformly at random. We compared the CPU times (in seconds) and the iteration numbers of different algorithms to obtain the relative error $(\|\mcal{T}_l - \mcal{T}^*\|_F)/\|\mcal{T}^*\|_F\leq 10^{-4}$. For each experiment, We conduct five random trials and report the average results.

\textbf{Sensitivity to the Size of Tensor.} Firstly, we investigate the sensitivity of PRGD to changes in tensor size. Here $\operatorname{OS} = 7$, rank $r= 5$ are fixed and the tensor size $d$ is varied from $100$ to $600$. The results are shown in \cref{subfig: Synthetic size CPU} and \cref{subfig: Synthetic size iteration}.

\begin{figure}[H]
	\centering
	\subfigure[{\tt CPU-time(s) $r=5$, $m=3$, and $\operatorname{OS}=7$}]{ \includegraphics[width=0.43\linewidth]{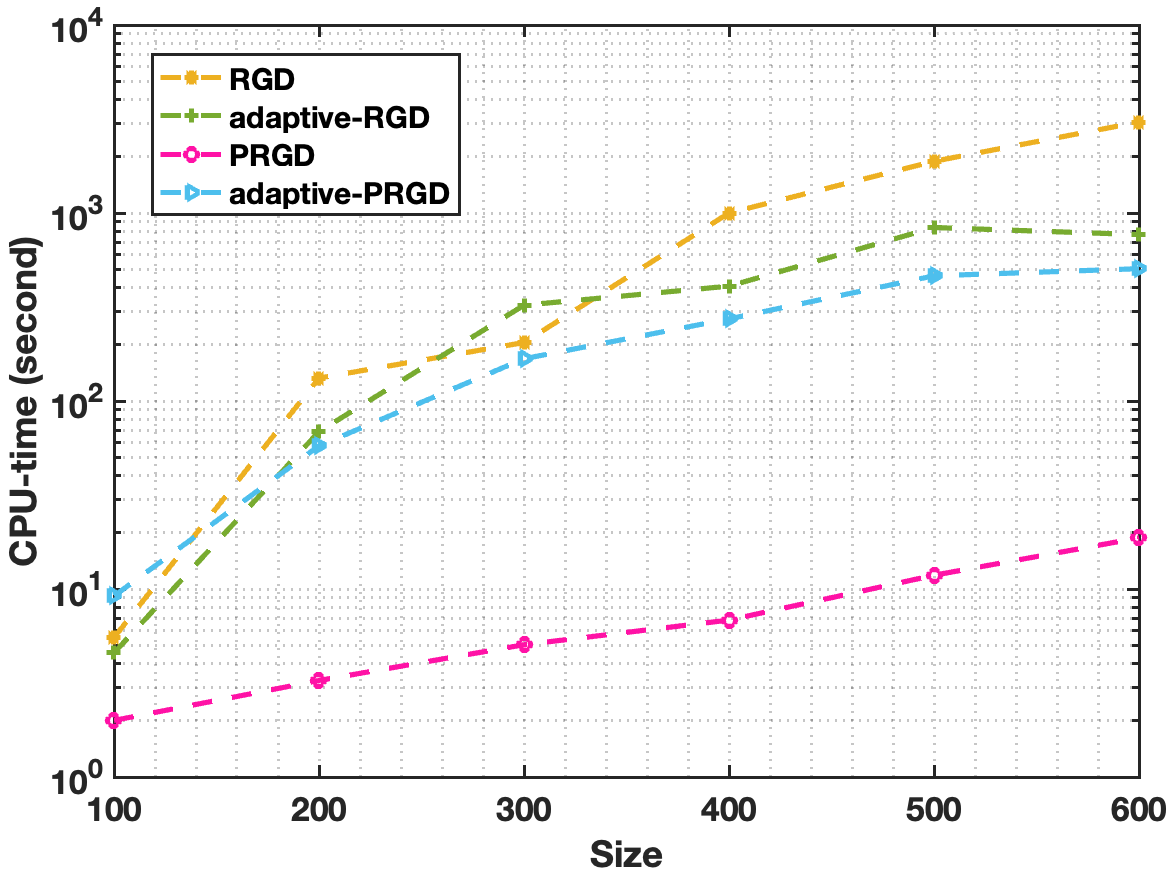} 
    \label{subfig: Synthetic size CPU}} 		
	\subfigure[{\tt $\sharp$iteration $r=5$, $m=3$, and $\operatorname{OS}=7$}]{ \includegraphics[width=0.43\linewidth]{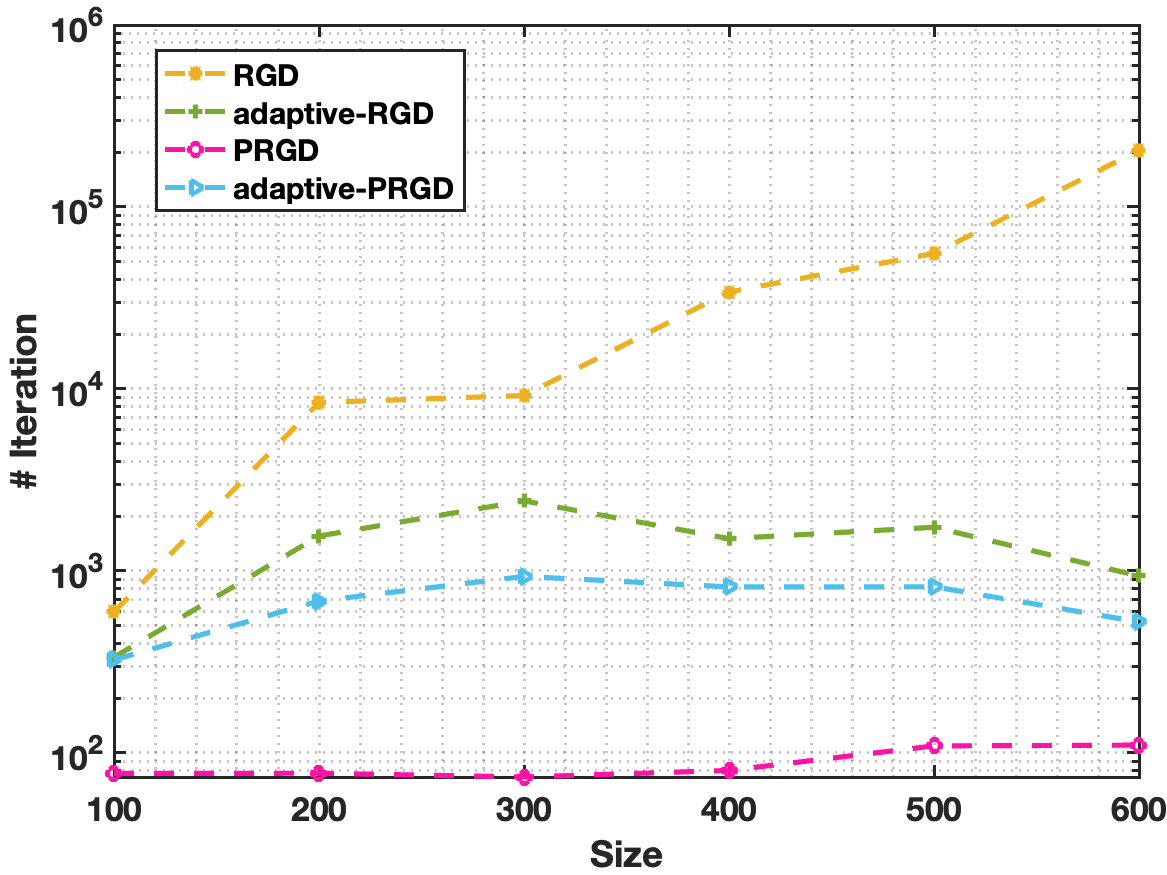}\label{subfig: Synthetic size iteration}} 		\\[-2mm]	
	\caption{ Results of CPU-time (in seconds) and the number of iterations for TT-rank tensor completion on simulated data.}
    \label{res-size}
\end{figure}
\cref{subfig: Synthetic size iteration} shows that PRGD generally requires fewer iterations than RGD, and PRGD with optimal constant step size is much faster than the other algorithms. Moreover, as analyzed in \cref{subsec: Tg Para and Tp Comp}, the additional operations related to data-driven metric in PRGD are much smaller than the dominant part, compared with RGD. As a result, we can observe from \cref{subfig: Synthetic size CPU} that the total computation time of PRGD with optimal constant step size is significantly less than the other algorithms. As for adaptive step size cases, since more computation is needed for step size calculation, RGD and PRGD with adaptive step size usually take more computation time. Notably, when tensor size $d = 600$, the computation time of PRGD with optimal constant step size is three orders of magnitude less than RGD with optimal constant step size.

\textbf{Sensitivity to the Oversampling Ratio.} Secondly, we investigate the sensitivity of PRGD to changes in oversampling ratio $\operatorname{OS}$. We remark that $\operatorname{OS}$ is a suitable criterion to evaluate the difficulty of the low-rank tensor completion problem. A smaller $\operatorname{OS}$ implies a more challenging completion problem since there are fewer observed entries available. We fix tensor size $d = 300$, rank $r = 5$ and vary the $\operatorname{OS}$ from $4$ to $12$. The results are shown in \cref{subfig: Synthetic OS CPU} and \cref{subfig: Synthetic OS iteration}.

\begin{figure}[H]
	\centering
	\subfigure[{\tt CPU-time(s) $r=5$, $m=3$, and $d=300$}]{ \includegraphics[width=0.43\linewidth]{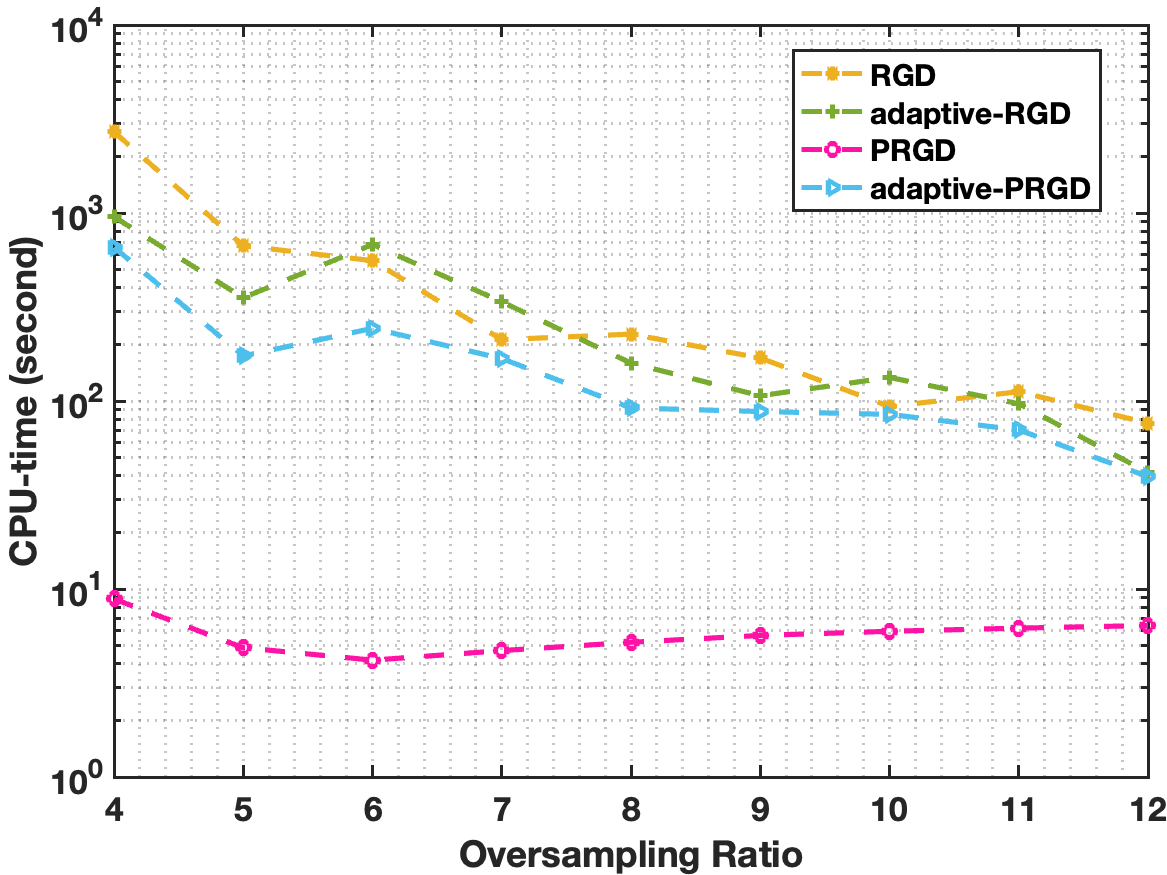} \label{subfig: Synthetic OS CPU}} 		
	\subfigure[{\tt $\sharp$iteration $r=5$, $m=3$, and $d=300$}]{ \includegraphics[width=0.43\linewidth]{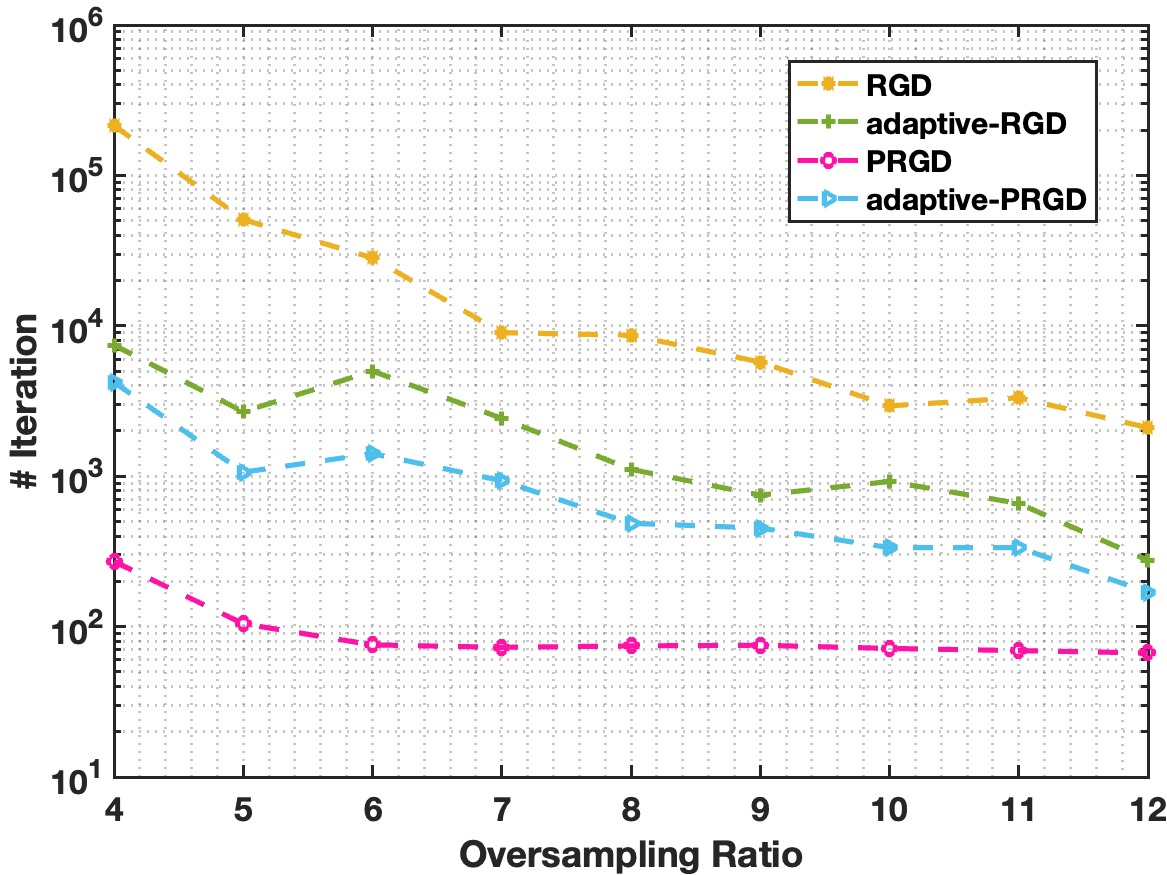}\label{subfig: Synthetic OS iteration} } 		\\[-2mm]	
	\caption{ Results of CPU-time (in seconds) and the number of iterations for TT-rank tensor completion on simulated data.}
    \label{res-os}
\end{figure}

\cref{subfig: Synthetic OS CPU} and \cref{subfig: Synthetic OS iteration} again confirm that PRGD outperforms the other algorithms. Especially when the OS is smaller, i.e., the problem is more difficult, PRGD with optimal constant step size significantly outperforms the other methods. For instance, when $\operatorname{OS} = 4$, PRGD with optimal constant step size is at around three orders of magnitude faster than other algorithms.

\textbf{Sensitivity to the TT-Rank.} Thirdly, we investigate the performance of PRGD under different TT ranks. We fix tensor size $d = 300$, $\operatorname{OS} = 7$ and varying the rank $r$ from $4$ to $10$. The results are shown in \cref{subfig: Synthetic rank CPU} and \cref{subfig: Synthetic rank iteration}.
\begin{figure}[H]
	\centering
	\subfigure[{\tt CPU-time(s) $\operatorname{OS}=7$, $m=3$, and $d=300$}]{ \includegraphics[width=0.43\linewidth]{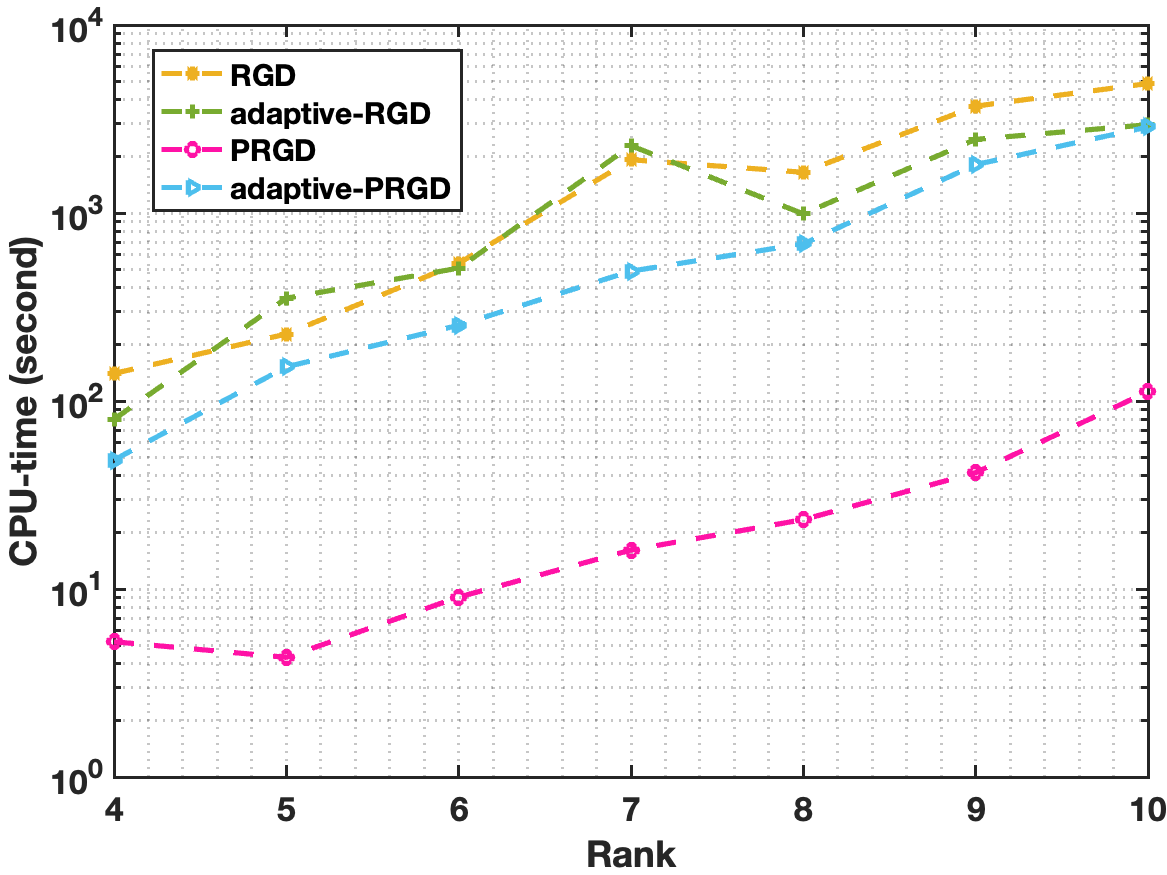} \label{subfig: Synthetic rank CPU}} 		
	\subfigure[{\tt $\sharp$iteration $\operatorname{OS}=7$, $m=3$, and $d=300$}]{ \includegraphics[width=0.43\linewidth]{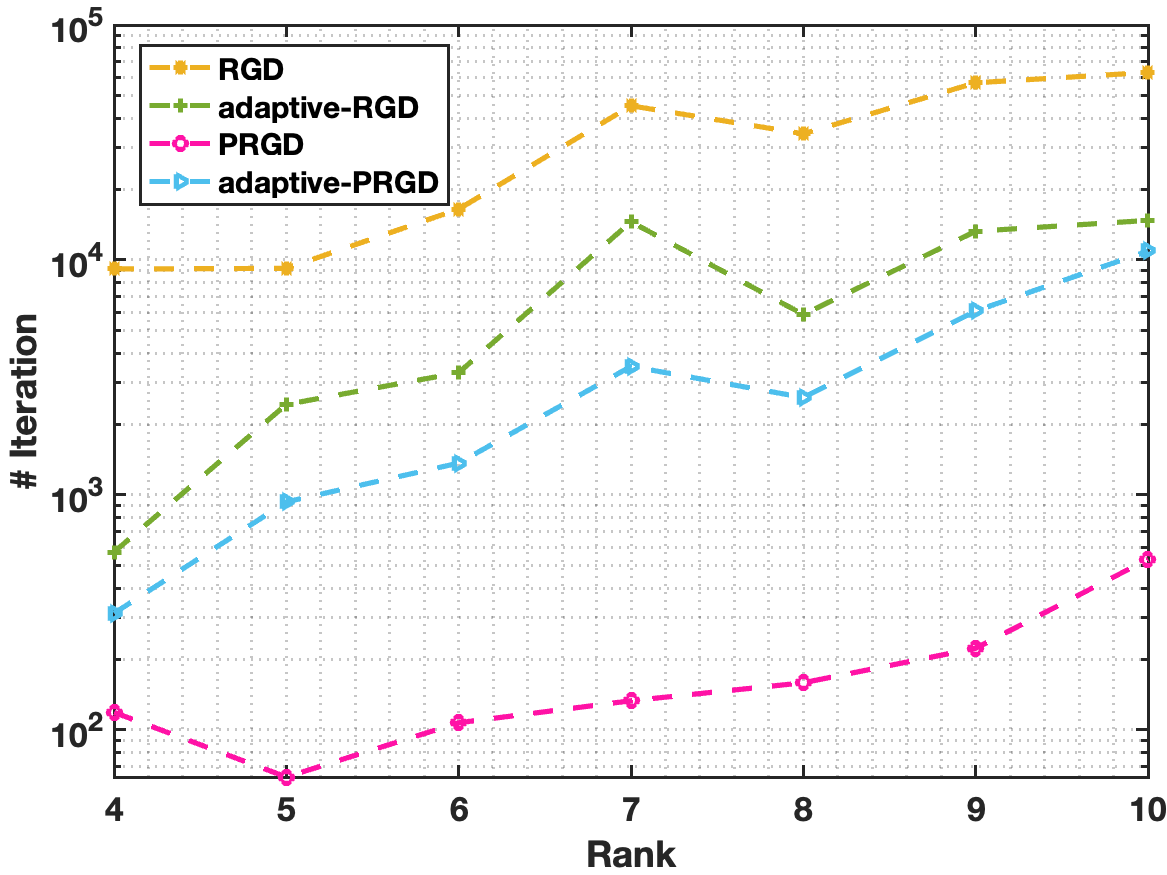} \label{subfig: Synthetic rank iteration}} 		\\[-2mm]	
	\caption{ Results of CPU-time (in seconds) and the number of iterations for TT-rank tensor completion on simulated data.}
    \label{res-rank}
\end{figure}

\cref{subfig: Synthetic rank CPU} and \cref{subfig: Synthetic rank iteration} show that, as the rank increases, the number of iterations and computation time required by all algorithms increase noticeably. While PRGD is consistently better than RGD and PRGD with optimal constant step size is better than RGD at around ten times.

\textbf{Robustness to Additive Noise.}
We consider a synthetic noisy tensor 
$$
\widehat{\mcal{T}}^{*} = \mcal{T}^{*} + \delta \cdot \mcal{E}
$$

where $\mcal{T}^*$ is the underlying low TT rank tensor, $\mcal{E}$ is a tensor with its entries being sampled from $\mcal{N}(0, 1)$. $\delta = \sigma \cdot \frac{\|\scr{P}_{\Omega}(\mcal{T}^*)\|_F}{\|\mcal{E}\|_F}$ represents the noise level. We stop the algorithm when $\|\mcal{T}_{t+1} - \mcal{T}_t\|_F\leq 10^{-5}\cdot\max(1, \|\mcal{T}_t\|_F)$. The robustness of the tested algorithm can be indicated by whether the relative error of stop point $\|\mcal{T}_l - \mcal{T}^*\|_F/\|\mcal{T}^*\|_F$ less than $\sigma$ or not.

We test all algorithms on tensor with $d = 100$, $r = 4$, $\operatorname{OS}=6$ and vary $\sigma = [1\times 10^{-4}, 5\times 10^{-4}, 1\times 10^{-3}, 5\times 10^{-3}, 1\times 10^{-2}, 5\times 10^{-2}, 1\times 10^{-1}]$. The results are shown in \cref{subfig: Robust_Iteration} and \ref{subfig: Robust_reler}. From \cref{subfig: Robust_reler}, it is shown that all tested algorithms are robust to the additive noise. The recovery error is proportional to the noise level. From \cref{subfig: Robust_Iteration}, it is shown that PRGD with optimal step size is around ten times faster than RGD algorithms.

\begin{figure}[H]
	\centering
	\subfigure[{\tt Relative error}]{ \includegraphics[width=0.43\linewidth]{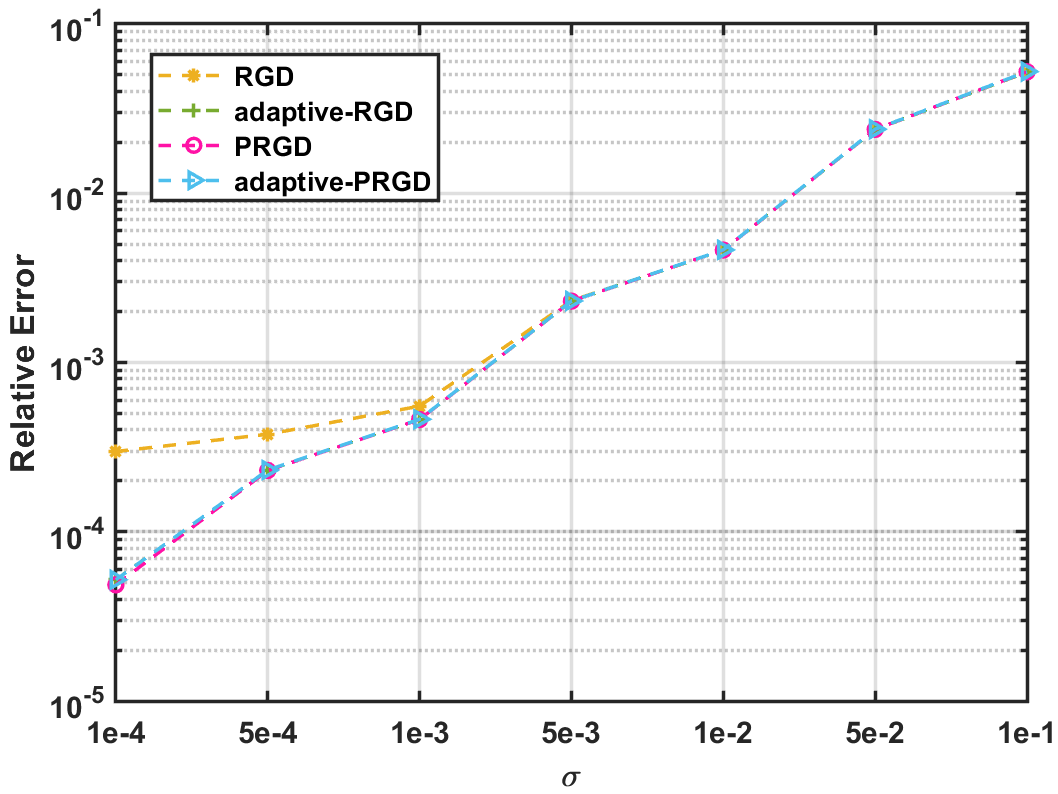}\label{subfig: Robust_Iteration}} 		
	\subfigure[{\tt $\sharp$iteration}]{ \includegraphics[width=0.43\linewidth]{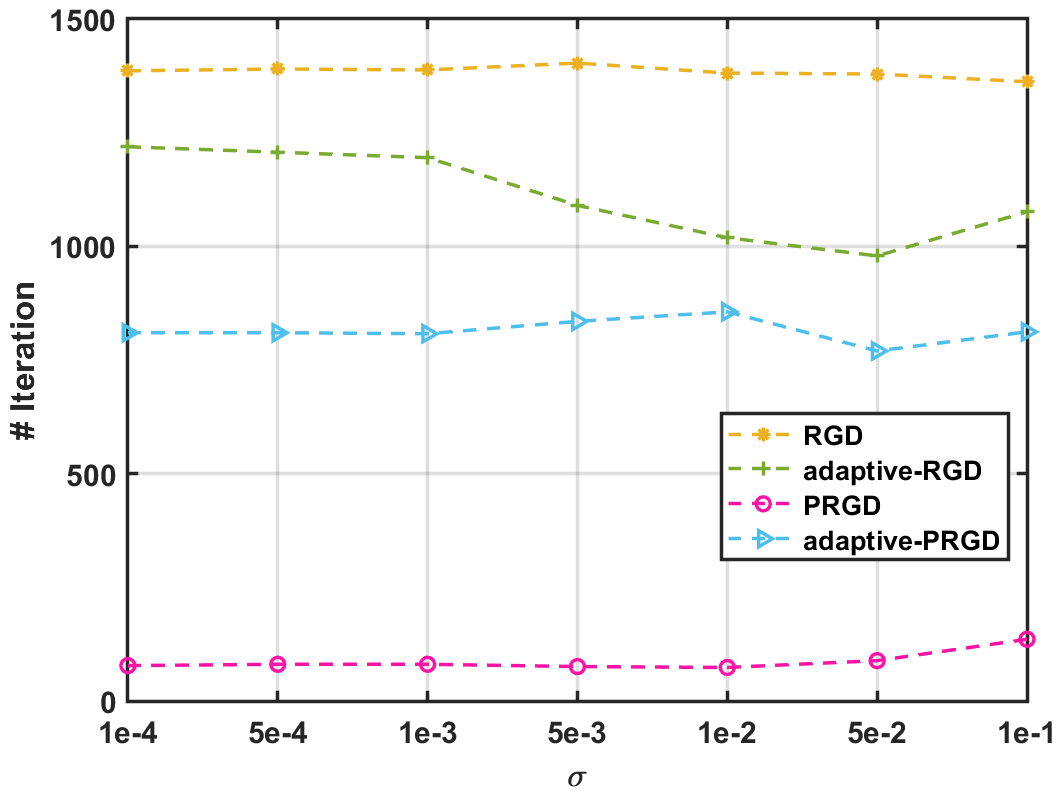}\label{subfig: Robust_reler} } 	\\[-2mm]	
	\caption{ Results of relative error and the number of iterations for TT-rank tensor completion on different noise levels.}
    \label{robust}
\end{figure}


\subsection{Hyperspectral Images}
We consider the low TT rank tensor completion on hyperspectral images where the dataset is from ``50 reduced hyperspectral reflectance images" \cite{foster2022colour}. The hyperspectral images from this dataset are three-order tensors with size $\mbf{d} = (d_1,d_2,d_3) = (250, 329, 33)$ where $(250, 329)$ is the size of images and $33$ represents the wavelength values from $[400, 410, \dots, 720]$. We select two images from this dataset as ground truths and conduct the low-rank tensor completion experiments on each image.

We use peak signal-to-noise-ratio (PSNR) to evaluate the image completion performance, defined by 
$$
\operatorname{PSNR}:=10\log_{10}\left(n_1 n_2 n_3\frac{\max(\mcal{T^*})^2}{\|\mcal{T-T^*}\|_F^2}\right)
$$
where $\max (\mcal{T}^*)$ denotes the highest pixel value of $\mcal{T}^*$. We set the TT rank $(1, 30, 10, 1)$ and vary $\operatorname{OS} = [4, 5, 6]$ to investigate the performance of PRGD under different sample levels. The results for completing two hyperspectral images are shown in \cref{fig: hsi}. 
\begin{figure}[H]
    \centering
    \includegraphics[width = 0.95\linewidth]{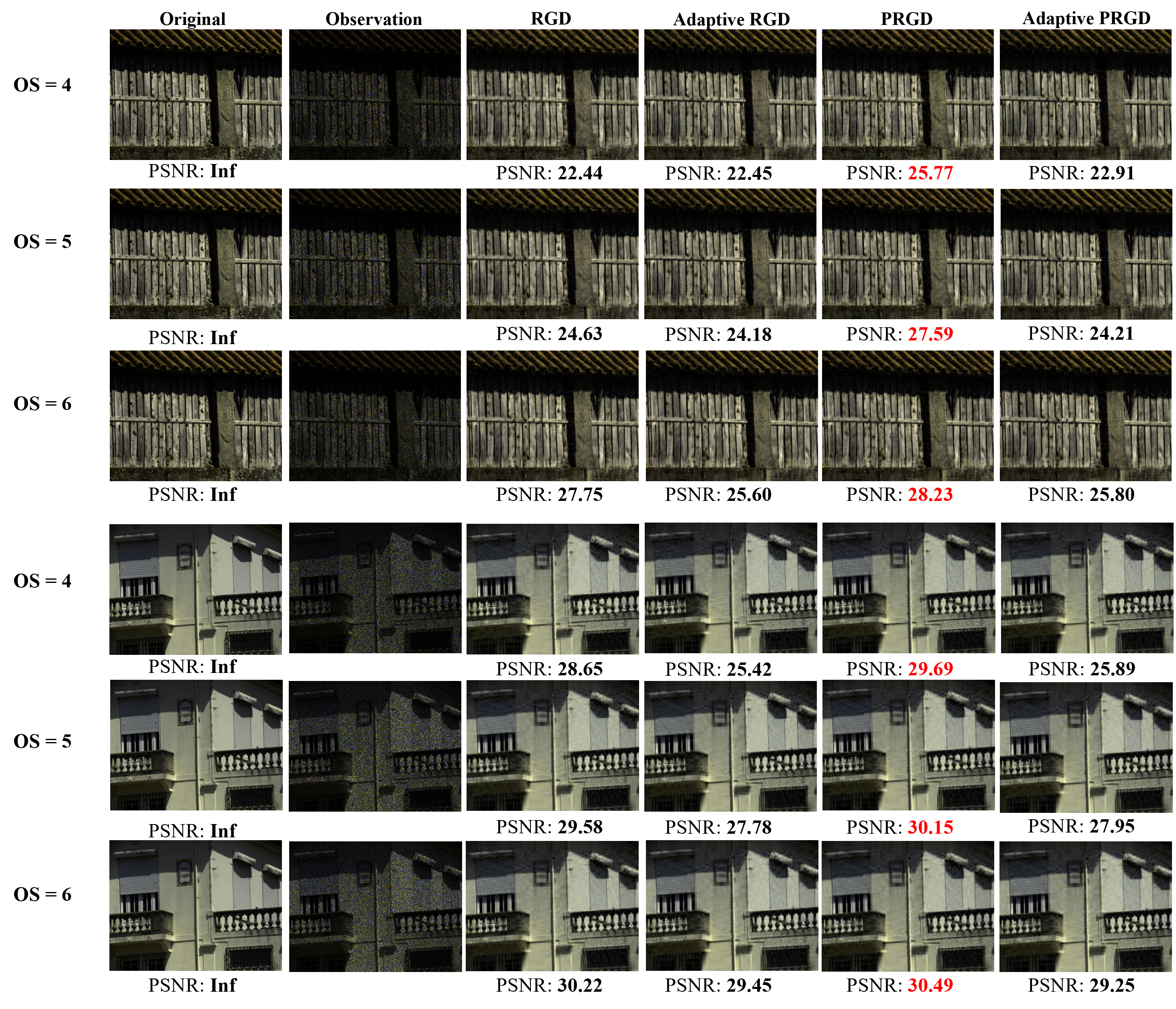}
    \caption{RGB representations of hyperspectral images.}
    \label{fig: hsi}
\end{figure}
It is shown that PRGD consistently outperforms the other algorithms across oversampling levels of $4,5,6$, especially when $\operatorname{OS}$ is small. For $\operatorname{OS} = 4$, PRGD with optimal constant step size reaches PSNR values of $25.77$ on the first image, the PSNR value is around 3 higher than other algorithms. The results indicate that PRGD is more effective at improving image reconstruction, particularly with low sampling levels.


\subsection{Quantum State Tomography}\label{subsec: qst}

We first formulate the quantum 
state tomography (QST) for matrix product operators (MPOs) under Pauli-measure as a low-TT-rank tensor completion problem, then test the PRGD algorithm on this completion problem. In a $n$-qubits quantum system, a quantum state can be fully described by a density matrix 
$\rho\in \mathbb{C}^{2^n\times 2^n}$, which is positive semidefinite and has unit trace. Specifically, such a density matrix is called a matrix product operator (MPO) if its elements can be expressed as the following matrix products:
$$
\rho(i_1\cdots i_n, j_n\cdots j_n) = X_1(i_1, j_1, :)X_2(:, i_2, j_2, :)\cdots X_n(:, i_n, j_n)
$$
where $X_k\in \mathbb{C}^{r_{k-1}\times 2\times 2\times  r_k}$ and $\mathbf{r} = (1, r_1, \dots, r_{n-1}, 1)$ is called the bond dimension of $\rho$, here we denote $\rho = [X_1, \dots, X_n]$. The task of QST is to recover the density matrix $\rho$ from a few measurements. A typical method is the Pauli measurement, which is the inner product of the Pauli basis and density matrix. We denote the Pauli basis set as $\{W_\mbf{a} := \sigma_{a_1}\otimes \sigma_{a_2}\otimes \cdots\otimes \sigma_{a_n}, \mathbf{a} = (a_1, \dots, a_n)\in [4]^n\}$, where

$$\sigma_1 = \left[\begin{array}{cc}
    1 & 0 \\
    0 & 1
\end{array}\right], \sigma_2 = \left[\begin{array}{cc}
    0 & 1 \\
    1 & 0
\end{array}\right],  \sigma_3 = \left[\begin{array}{cc}
    0 & -i \\
    i & 0
\end{array}\right], \sigma_4 = \left[\begin{array}{cc}
    1 & 0 \\
    0 & -1
\end{array}\right]$$
are $2\times 2$ Pauli matrices. All of the Pauli measurements construct a real tensor $\mcal{T}^*$ where $\mathcal{T}^*(\mbf{a}):= \langle \rho, W_\mbf{a}\rangle = \operatorname{Tr}(\rho W_\mbf{a})$. It is easy to verify that
$$
\mathcal{T}^*(\mbf{a}) = Y_1(a_1, :)Y_2(:, a_2, :)\cdots Y_n (:, a_n)
$$
where $Y_k(:, \alpha_k, :) = \sum_{i_k, j_k} X_k(:, i_k, j_k, :) \sigma_{\alpha_k}(i_k, j_k)$. It indicates that $\mcal{T}^*$ is an order-$n$ tensor with TT rank $\mbf{r}$. In QST, one needs to sample some measure matrices from $\{W_{\mbf{a}}\}$ and reconstruct $\rho$ from corresponding measurements. Denote $\Omega$ as the indices of sampled Pauli basis, then the measurement it is equal to $\scr{P}_{\Omega} (\mcal{T}^*)$. Thus we can conduct tensor completion of $\mcal{T}^*$ and obtain the reconstructed $\rho$ immediately after completion since $\{W_{\mbf{a}}\}$ are the orthogonal basis.

Regarding the generation of target MPO, we use the same method as in \cite{verstraete2004matrix,qin2024quantum}. We first generate a random matrix product state $u\in \mathbb{C}^{2^n\times 1}$ where $u=[U_1, \dots, U_n], U_k\in \mathbb{C}^{s_{k-1}\times 2\times s_k}$. Then the density matrix $\rho = u u^\dagger$ is an MPO with $X_k(:, i_k, j_k, :) = U_k(:, i_k, :)\otimes \overline{U_k(:, j_k, :)}$ and $r_k = s_k^2$. After that, we compute the corresponding measurement tensor $\mcal{T}^*$ and conduct the completion tasks. In the experiment, we set the qubit number $n=10$, the TT rank $r_i = 4, i=1, \dots, n-1$. The size of tensor $\mcal{T}^*$ grows exponentially with $n$, while the dimension of the manifold grows polynomially. Thus we need to set the $\operatorname{OS}$ ratio marginally large to make the problem solvable. We set $\operatorname{OS} = 200$ and the corresponding sampling ratio is around $7.6\%$. The numerical results are shown in \cref{subfig: QST_Iteration} and \cref{subfig: QST_CPU time}.

\begin{figure}[H]
    \centering 		
    \subfigure[{\tt $\sharp$iteration} $r = 4, n = 10, \operatorname{OS} = 200$]{ \includegraphics[width=0.43\linewidth]{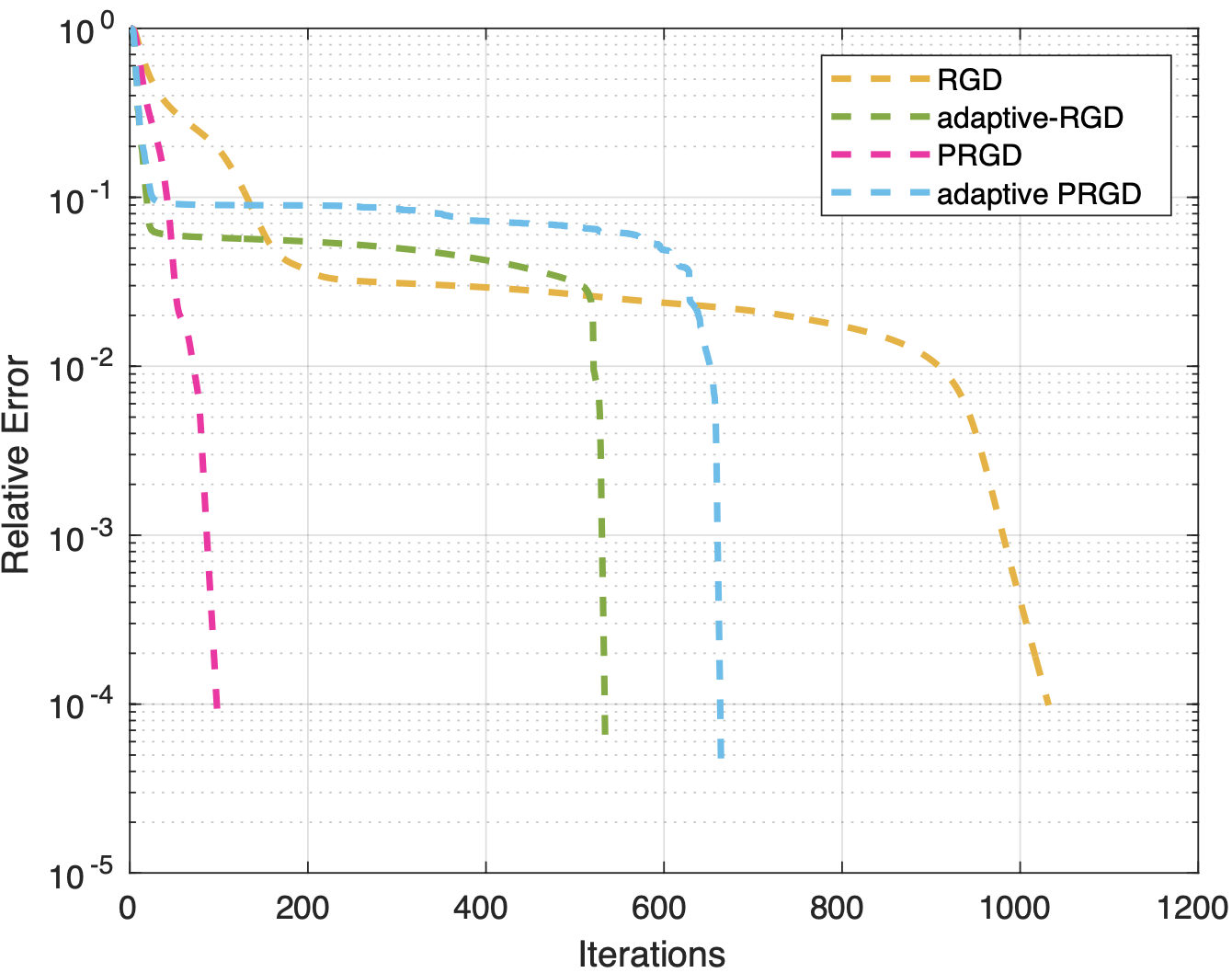}\label{subfig: QST_Iteration} } 		
    \subfigure[{\tt CPU-time (s)} $r = 4, n = 10, \operatorname{OS} = 200$]{ \includegraphics[width=0.43\linewidth]{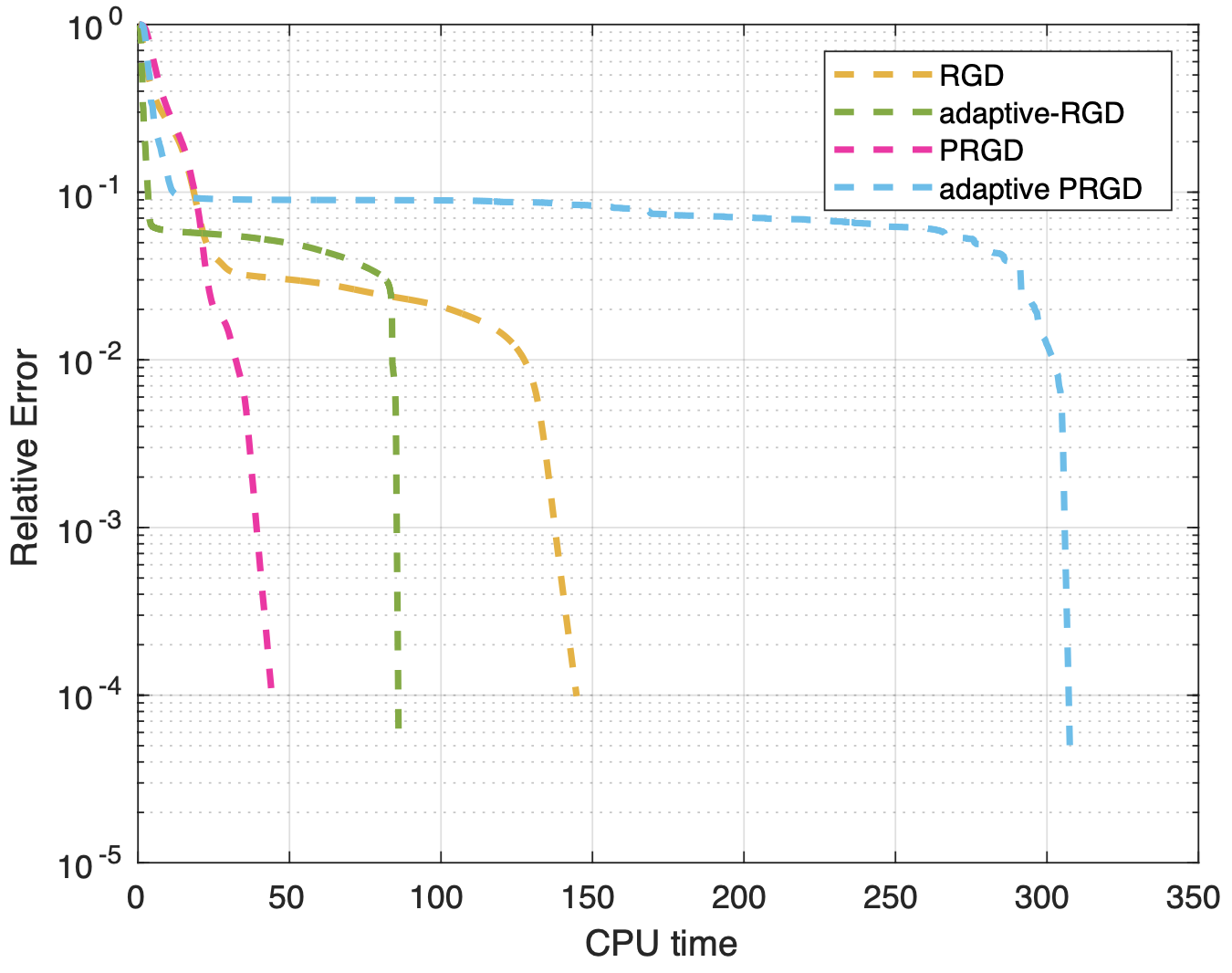}\label{subfig: QST_CPU time}} \\[-2mm]
    \label{fig: QST}
\end{figure}

As shown in \cref{subfig: QST_Iteration}, PRGD achieves convergence approximately $5$ and $10$ times faster than RGD with adaptive step size and optimal constant stepsize, respectively. However, when considering CPU time, as presented in \cref{subfig: QST_CPU time}, the advantage of PRGD with an optimal constant step size diminishes, and the adaptive step size version of PRGD becomes the slowest. This performance difference arises from the increased computational cost of the preconditioning step in PRGD, which grows with the tensor order, thereby making both the iterative process and adaptive step sizes computationally more expensive.

\section{Conclusion}
\label{sec:conclusion}
In this paper, we proposed a preconditioned Riemannian gradient descent algorithm (PRGD) for solving the low TT-rank tensor completion problem. This algorithm utilizes an adaptive data-driven preconditioner, which is computationally simple and efficient and can be seen as an extension of the preconditioner in matrix case \cite{bian2023preconditioned}. Theoretical analysis indicates that PRGD can converge linearly to the target tensor.  Notably, the convergence rate is independent of the condition number of the target tensor, meaning that the algorithm maintains efficient convergence performance even if the target tensor has a high condition number. To validate the effectiveness of the PRGD algorithm,  we conducted a series of numerical experiments on both synthetic and real datasets, including applications such as hyperspectral image completion and quantum state tomography. The experimental results show that compared to the Riemannian gradient descent algorithm \cite{cai2022provable}, the PRGD algorithm significantly improves computational efficiency, reducing computation time by two orders of magnitude. This fully substantiates the effectiveness of the proposed data-driven preconditioner in enhancing the performance of RGD for tensor completion.

\appendix
\section{Proofs of Lemma \ref{lemma:bound of tildet} and Lemma \ref{lemma: tilde delta norm}}
\label{appendixA}
Before providing the proof of these lemmas, we will review some lemmas from \cite{cai2022provable} for readability, as they will also be used in the following proof. We first introduce several events whose randomness stems from the sampling set $\Omega$, which will be useful in the proof. 
$$
\begin{aligned}
\bm{\mcal{E}}_1 &=\big\{\|\frac{d^*}{n} \scr{P}_{\bb{T}^*} \scr{P}_{\Omega} \scr{P}_{\bb{T}^*} - \scr{P}_{\bb{T}^*} \| \leq \frac{1}{2} \big\},\\
\bm{\mcal{E}}_2 &= \big\{ \max_{x \in [d^*]} \sum_{i=1}^n \scr{I}(\omega_i = x) \leq 2m \log(\bar{d})\big\},\\
\bm{\mcal{E}}_3 &= \big\{\| (\scr{P}_{\Omega} - \frac{n}{d^*} \scr{I})(\mcal{J}) \| \leq C_m \left( \sqrt{\frac{n \bar{d}}{d^*}} +1 \right)\log^{m+2}(\bar{d}) \big\},
\end{aligned}
$$
where $\mcal{J} \in \bb{R}^{d_1 \times \cdots \times d_m}$ is the tensor with all its entries one and $\scr{I}$ is the identity operator from $\bb{R}^{d_1 \times \cdots \times d_m}$ to $\bb{R}^{d_1 \times \cdots \times d_m}$. From Lemmas 31, 32, and 33 in \cite{cai2022provable}, we know that  $\bm{\mcal{E}}_1, \bm{\mcal{E}}_2$ and $\bm{\mcal{E}}_3$ all hold with probability exceeding $1 - \bar{d}^{-m}$. Consider the empirical process:
$$
\beta_n(\gamma_1, \gamma_2):= \sup_{\mcal{A}\in \mathbb{K}_{\gamma_1, \gamma_2}} \bigg|\langle \mcal{P}_{\Omega}\mcal{A}, \mcal{A} - \frac{n}{d^*}\| \mcal{A}\|_F^2 \bigg|,
$$
where $\mathbb{K}_{\gamma_1, \gamma_2}=\{\mcal{A} \in \mathbb{R}^{d_1 \times \dots \times d_m}: \|\mcal{A}\|_F \leq 1, \| \mcal{A}\|_{l_{\infty}} \leq \gamma_1, \|\mcal{A}\|_* \leq \gamma_2\}$. Denote
$$
\bm{\mcal{E}}_4 = \big\{\beta_n(\gamma_1, \gamma_2) \lesssim_{m} \gamma_1\gamma_2 \big(\sqrt{\frac{n\bar{d}}{d^*}} + 1) \log^{m+2}(\bar{d}) + \gamma_1 \sqrt{\frac{n\log(\bar{d})}{d^*}} + \gamma_1^2 \log(\bar{d})\big\},
$$
for all $\gamma_1 \in [1/d^*, 1]$ and $\gamma_2 \in [1, \bar{d}^{(m-1)/2}]$. From \cite{cai2022provable}, we know that the event $\bm{\mcal{E}}_4$ holds with probability exceeding $1 - (m+4) \bar{d}^{-m}$. Denote
$$
\bm{\mcal{E}}_5^{i} = \big\{\|\scr{P}^{(i)}(\scr{P}_{\Omega} - \frac{n}{d^*} \scr{I}) \scr{P}^{(i)} \| \leq C_m \sqrt{\frac{\mu^2 \bar{r}^2 \bar{d} n \log(\bar{d})}{(d^*)^2}}\big\} 
$$
for all $i \in [m]$. Here, $\scr{P}^{(i)}$ denotes a projection operator that is independent of the chosen left orthogonal representation of $\mcal{T}^{*}$. Defining $\bm{\mcal{E}}_5 = \cap_{i=1}^{m} \bm{\mcal{E}}_5^{i}$, it follows that $\bm{\mcal{E}}_5$ holds with a probability exceeding $1 - m \bar{d}^{-m}$ as  Lemma 11 in \cite{cai2022provable}. Here we denote $\bm{\mcal{E}} = \bm{\mcal{E}}_1 \cap \bm{\mcal{E}}_2 \cap \bm{\mcal{E}}_3 \cap \bm{\mcal{E}}_4 \cap \bm{\mcal{E}}_5$, then $\bm{\mcal{E}}$ holds with probability exceeding $1 - (m+4) \bar{d}^{-m}$. Here we assume that $\bm{\mcal{E}}$ holds throughout this paper. Next, we present the main theorem on exact matrix recovery and related lemmas.

\begin{lemma}[Corollary 12 in \cite{cai2022provable}]\label{coro12}
For any $i \in [m]$, set
$$
\mcal{A} =[T_1^*, \dots, T_{i-1}^*, A, T_{i+1}^*, \dots, T_m^*], ~~~ \mcal{B} =[T_1^*, \dots, T_{i-1}^*, B, T_{i+1}^*, \dots, T_m^*]
$$
for arbitrary $A, B \in \bb{R}^{r_{i-1} \times d_i \times r_i}$. Then under the event $\bm{\mcal{E}}_5$, we have
$$
\langle \mcal{A},~(\scr{P}_{\Omega} - \frac{n}{d^*}\scr{I}) \mcal{B} \rangle \leq C_{m} \sqrt{\frac{\mu^2 \bar{r}^2 \bar{d} n \log(\bar{d})}{(d^*)^2}} \|\mcal{A}\|_F\| \mcal{B} \|_F.
$$
\end{lemma}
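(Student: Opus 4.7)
The plan is to reduce the inner product to an operator-norm evaluation by observing that both $\mcal{A}$ and $\mcal{B}$ lie in a common low-dimensional subspace, and then appealing directly to the operator-norm bound that is built into event $\bm{\mcal{E}}_5$. Since this is labeled a corollary in \cite{cai2022provable}, the work is minimal; the content is really just choosing the right $\scr{P}^{(i)}$.

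First I would make the projection operator $\scr{P}^{(i)}$ concrete: it is the orthogonal projection onto the linear subspace
\[
\mathbb{V}_i := \{[T_1^*, \dots, T_{i-1}^*, X, T_{i+1}^*, \dots, T_m^*] : X \in \bb{R}^{r_{i-1}\times d_i \times r_i}\}\subseteq \bb{R}^{d_1\times\cdots\times d_m}.
\]
By construction, both $\mcal{A}$ and $\mcal{B}$ — obtained by plugging $A$ and $B$ into the $i$-th slot while freezing the remaining cores of $\mcal{T}^*$ — lie in $\mathbb{V}_i$, so $\scr{P}^{(i)}\mcal{A}=\mcal{A}$ and $\scr{P}^{(i)}\mcal{B}=\mcal{B}$. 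Using the self-adjointness of $\scr{P}^{(i)}$ one then rewrites
\[
\langle\mcal{A}, (\scr{P}_{\Omega} - \tfrac{n}{d^*}\scr{I})\mcal{B}\rangle = \langle\mcal{A}, \scr{P}^{(i)}(\scr{P}_{\Omega} - \tfrac{n}{d^*}\scr{I})\scr{P}^{(i)}\mcal{B}\rangle,
\]
and applies Cauchy--Schwarz followed by the operator-norm bound from $\bm{\mcal{E}}_5^{i}$ to conclude
\[
|\langle\mcal{A}, \scr{P}^{(i)}(\scr{P}_{\Omega} - \tfrac{n}{d^*}\scr{I})\scr{P}^{(i)}\mcal{B}\rangle|\leq \|\scr{P}^{(i)}(\scr{P}_{\Omega} - \tfrac{n}{d^*}\scr{I})\scr{P}^{(i)}\|\cdot\|\mcal{A}\|_F\|\mcal{B}\|_F\leq C_m\sqrt{\frac{\mu^2\bar{r}^2\bar{d}n\log(\bar{d})}{(d^*)^2}}\|\mcal{A}\|_F\|\mcal{B}\|_F.
\]

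The one subtlety — and the point that justifies calling this a corollary rather than a separate lemma — is the compatibility between how $\scr{P}^{(i)}$ is described in the definition of $\bm{\mcal{E}}_5^{i}$ (emphasising that it does not depend on the chosen left-orthogonal factorisation of $\mcal{T}^*$) and how it is used here. Verifying this amounts to observing that $\mathbb{V}_i$ is intrinsic to $\mcal{T}^*$: if two left-orthogonal TT decompositions of $\mcal{T}^*$ are related by a gauge transformation on the neighbouring cores, the induced subspaces coincide, because $\mathbb{V}_i$ can be characterised invariantly through the column space of $\mcal{T}^{*\langle i-1\rangle}$ and the row space of $\mcal{T}^{*\langle i\rangle}$. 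Assuming this invariance has been recorded in \cite{cai2022provable} (it is implicit in the phrasing ``independent of the chosen left orthogonal representation''), nothing else is needed and the corollary follows immediately from the three displayed lines above. I do not anticipate any substantive obstacle beyond unpacking this identification carefully.
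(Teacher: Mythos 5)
Your proposal is correct, and it is essentially the intended derivation: the paper itself imports this lemma from \cite{cai2022provable} without proof, and the way Corollary 12 follows from the concentration event $\bm{\mcal{E}}_5^{i}$ there is exactly your three-line argument --- identify $\scr{P}^{(i)}$ as the orthogonal projection onto the linear subspace obtained by varying the $i$-th core while freezing the others, note $\scr{P}^{(i)}\mcal{A}=\mcal{A}$ and $\scr{P}^{(i)}\mcal{B}=\mcal{B}$, and apply self-adjointness, Cauchy--Schwarz, and the operator-norm bound. Your remark on gauge-invariance of the subspace (it is determined by the column space of $\mcal{T}^{*\langle i-1\rangle}$ and the row space of $\mcal{T}^{*\langle i\rangle}$, not by the particular left-orthogonal factorization) is the right justification for the phrase ``independent of the chosen left orthogonal representation'' in the definition of $\bm{\mcal{E}}_5^{i}$, so no gap remains.
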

\begin{lemma}[Lemma 26 in \cite{cai2022provable}]\label{bound-ttsvd}
Let $\mcal{T}^* \in \bb{M}_{r}^{tt}$ be an $m$-way tensor and define $\underline{\sigma}:= \min_{i=1}^{m-1} \sigma_{\min}(\mcal{T}^*)^{\langle i \rangle}$. And we denote the tensor $\mcal{T} = \mcal{T}^* + \mcal{D}$. Then suppose $\underline{\sigma} \geq C_m \| \mcal{D} \|_F$ for some constant $C_m \geq 500m$ depending only on $m$, we have
$$
\| \operatorname{SVD}_{\mathbf{r}}^{tt}(\mcal{T}) - \mcal{T}^* \|_F^2 \leq \| \mcal{D} \|_{F}^2 + \frac{600m \| \mcal{D} \|_F^3}{\underline{\sigma}}.
$$
\end{lemma}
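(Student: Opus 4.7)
The statement is a stability/quasi-optimality bound for TT-SVD: when applied to a perturbation $\mcal{T} = \mcal{T}^* + \mcal{D}$ of a tensor of exact TT-rank $\mathbf{r}$ whose spectral gap $\underline{\sigma}$ dominates $\|\mcal{D}\|_F$, the TT-SVD output recovers $\mcal{T}^*$ with Frobenius-squared error at most $\|\mcal{D}\|_F^2$ plus a third-order correction $O(m\|\mcal{D}\|_F^3/\underline{\sigma})$. This is strictly sharper than the naive bound $\|\operatorname{SVD}_{\mathbf{r}}^{tt}(\mcal{T}) - \mcal{T}^*\|_F^2 \leq (1+\sqrt{m-1})^2 \|\mcal{D}\|_F^2$ obtained by combining the triangle inequality with Oseledets's standard quasi-optimality, and the tight coefficient $1$ on $\|\mcal{D}\|_F^2$ is what enables the contraction constant $0.3574$ in the proof of Theorem~\ref{lemma:convergence-guarantee}.

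My plan is as follows. Writing $\hat{\mcal{T}} := \operatorname{SVD}_{\mathbf{r}}^{tt}(\mcal{T})$, I would first establish the key algebraic identity
\[
\|\hat{\mcal{T}} - \mcal{T}^*\|_F^2 = \|\mcal{D}\|_F^2 - \|\mcal{T} - \hat{\mcal{T}}\|_F^2 + 2\langle \mcal{T} - \hat{\mcal{T}}, \mcal{T}^*\rangle.
\]
This follows by expanding $\|(\hat{\mcal{T}} - \mcal{T}) + \mcal{D}\|_F^2$ and using the Pythagorean relation $\langle \hat{\mcal{T}}, \mcal{T} - \hat{\mcal{T}}\rangle = 0$, which is built into TT-SVD because each of its $m-1$ sequential steps is an orthogonal projection onto the top-$r_i$ left singular subspace of the appropriate matricization. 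Since $-\|\mcal{T} - \hat{\mcal{T}}\|_F^2 \leq 0$, the proof reduces to bounding $|\langle \mcal{T} - \hat{\mcal{T}}, \mcal{T}^*\rangle|$ by $O(m\|\mcal{D}\|_F^3/\underline{\sigma})$.

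I would control the cross term by an induction on the TT-SVD steps. At step $i$, let $M_i = (\hat{T}^{\leq i-1}\otimes I_{d_i})^\top \mcal{T}^{\langle i\rangle}$ be the matrix whose top-$r_i$ left singular subspace $\hat{U}_i$ is extracted, and let $M_i^*$ be the analogous matrix formed with the true left-orthogonal factors of $\mcal{T}^*$. Under the gap condition $\underline{\sigma} \geq C_m\|\mcal{D}\|_F$, Wedin's sine-theta theorem yields $\|\sin\Theta(\hat{U}_i, U_i^*)\|_{op} = O(\|\mcal{D}\|_F/\underline{\sigma})$; inductively, $\|M_i - M_i^*\|_F = O(\|\mcal{D}\|_F)$ because earlier left-factor perturbations accumulate only by an $m$-dependent constant, not catastrophically. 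Writing $\mcal{T}^* = P^* \mcal{T}^*$ where $P^*$ is the composite true projection, and using that $\mcal{T} - \hat{\mcal{T}}$ is orthogonal to the range of the TT-SVD composite projection $\hat{P}$, the cross term reduces to a pairing of two mutually small quantities controlled by the sine-theta bounds. Summing the per-step estimates over $i = 1, \ldots, m-1$ yields $|\langle \mcal{T} - \hat{\mcal{T}}, \mcal{T}^*\rangle| \leq 300m\|\mcal{D}\|_F^3/\underline{\sigma}$, which plugged into the identity above gives the stated inequality with the explicit coefficient $600m$.

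The main obstacle is obtaining the cross-term bound \emph{without} introducing a condition-number factor $\bar{\sigma}/\underline{\sigma}$. A direct Cauchy--Schwarz estimate $|\langle \mcal{T} - \hat{\mcal{T}}, \mcal{T}^*\rangle| \leq \|\mcal{T} - \hat{\mcal{T}}\|_F\|\mcal{T}^*\|_F$ would scale like $\|\mcal{D}\|_F \cdot \bar{\sigma}\sqrt{r^*}$ and ruin the target bound, since $\|\mcal{T}^*\|_F$ can be far larger than $\underline{\sigma}$. The resolution is to exploit cancellation: because $\mcal{T} - \hat{\mcal{T}}$ sits in the complement of $\hat{P}$'s range while $\mcal{T}^*$ sits essentially in the range of the nearby true projection $P^*$, the inner product measures only the mismatch $\hat{P} - P^*$ acting on the tangent subspace, which is of order $\|\mcal{D}\|_F/\underline{\sigma}$ and couples with an $O(\|\mcal{D}\|_F)$ residual rather than with the full tensor. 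Making this cancellation rigorous through the sequential, adaptive structure of TT-SVD, and extracting the explicit constants $C_m \geq 500m$ and $600m$, is the most delicate part of the argument.
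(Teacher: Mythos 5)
First, a point of comparison: the paper itself does not prove this statement --- it is imported verbatim as Lemma~26 of \cite{cai2022provable} --- so there is no in-paper proof to measure your attempt against, and what follows assesses your outline on its own terms. Your skeleton is sound. The identity $\|\wht{\mcal{T}}-\mcal{T}^*\|_F^2=\|\mcal{D}\|_F^2-\|\mcal{T}-\wht{\mcal{T}}\|_F^2+2\langle\mcal{T}-\wht{\mcal{T}},\mcal{T}^*\rangle$ is valid, because the TT-SVD output satisfies $\wht{\mcal{T}}^{\langle m-1\rangle}=\wht{T}^{\le m-1}(\wht{T}^{\le m-1})^{\top}\mcal{T}^{\langle m-1\rangle}$ with $\wht{T}^{\le m-1}$ column-orthonormal, so $\wht{\mcal{T}}$ is genuinely an orthogonal projection of $\mcal{T}$ and your Pythagorean step holds; you also correctly identify that everything reduces to a cubic bound on the cross term and that naive Cauchy--Schwarz fails.

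The gap is that the mechanism you propose for the cross term, implemented as described, does not yield a condition-number-free bound. If you control $\|\sin\Theta(\wht{U},U^*)\|$ by $O(\|\mcal{D}\|_F/\underline{\sigma})$ via Wedin and let it ``couple with an $O(\|\mcal{D}\|_F)$ residual,'' the singular values of $\mcal{T}^*$ are never paid for: writing $\mcal{T}^{*\langle i\rangle}=U^*\Sigma^*V^{*\top}$, the cross term is $\sum_j\sigma_j^*\langle(\mbf{I}-\wht{U}\wht{U}^{\top})u_j^*,(\mcal{T}-\wht{\mcal{T}})^{\langle i\rangle}v_j^*\rangle$, and bounding $\|(\mbf{I}-\wht{U}\wht{U}^{\top})u_j^*\|$ by a sine-theta quantity leaves behind $\sum_j\sigma_j^*$, i.e.\ an unwanted factor of order $\bar{r}\,\overline{\sigma}(\mcal{T}^*)$ (equivalently $\|\mcal{T}^*\|_F$ or a condition number), which is precisely what the lemma forbids. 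The repair is to absorb each $\sigma_j^*$ into the vector it multiplies: $\sigma_j^*(\mbf{I}-\wht{U}\wht{U}^{\top})u_j^*=(\mbf{I}-\wht{U}\wht{U}^{\top})\mcal{T}^{*\langle i\rangle}v_j^*$, whose norm is at most $\|\mcal{T}-\wht{\mcal{T}}\|_F+\|\mcal{D}\|_F\le(\sqrt{m-1}+1)\|\mcal{D}\|_F$ by Oseledets quasi-optimality --- a residual-sized quantity independent of $\Sigma^*$ --- so that a \emph{single} sine-theta factor $O(\|\mcal{D}\|_F/\underline{\sigma})$ multiplies \emph{two} $O(\sqrt{m}\,\|\mcal{D}\|_F)$ factors, producing the cubic rate. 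You must also say where that one sine-theta factor comes from: the last TT-SVD step truncates only on the column-space side, so it has to be harvested from the nested subspaces $\wht{T}^{\le i}$ built in the earlier sweeps, and your assertion that those perturbations ``accumulate only by an $m$-dependent constant'' is exactly the per-step induction that remains to be carried out. As written, the proposal is a correct road map whose decisive estimate is still open.
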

\begin{lemma}[Lemma 13 in \cite{cai2022provable}]\label{Lema13}
Suppose that $\Omega$ is the set sampled uniformly with replacement with size $| \Omega | =n $. Then under the event $\bm{\mcal{E}}_3$, we have for any tensors $\mcal{A}, \mcal{B}$ with TT rank $(r_1, r_2, \dots, r_{m-1})$,
$$
\begin{aligned}
&|\langle (\scr{P}_{\Omega} - \frac{n}{d^*}\scr{I}) \mcal{A}, \mcal{B} \rangle|\\ 
&\leq C_m \big( \sqrt{\frac{n\bar{d}}{d^*}} +1 \big) \log^{m+2}(\bar{d}) \cdot \Pi_{i=1}^m
 \big(\max_{x_i} \| A_i (:, x_i, :) \|_F \cdot \| B_i \|_F \wedge \max_{x_i}\|B_i(:, x_i, :)\|_F \|A_i\|_F \big),
 \end{aligned}
$$
where $\mcal{A} =[A_1, \dots, A_m]$ and $\mcal{B}=[B_1, \dots, B_m]$ can be arbitrary decompositions such that $A_i, B_i \in \mathbb{R}^{r_{i-1}\times d_i \times r_{i}}$.
\end{lemma}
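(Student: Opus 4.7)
The plan is to exploit the TT-rank structure of $\mcal{A}$ and $\mcal{B}$ through their core factorizations and reduce the bound to a single application of the operator-norm control from event $\bm{\mcal{E}}_3$, which governs $\|\mcal{E}_\Omega\|$ for the centered sample-count tensor $\mcal{E}_\Omega := (\scr{P}_\Omega - \frac{n}{d^*}\scr{I})\mcal{J}$. Because the sampling projection acts entrywise,
\[
\langle(\scr{P}_\Omega - \tfrac{n}{d^*}\scr{I})\mcal{A},\mcal{B}\rangle = \sum_{\mathbf{x}} \mcal{E}_\Omega(\mathbf{x})\,\mcal{A}(\mathbf{x})\,\mcal{B}(\mathbf{x}),
\]
so the deviation is a triple tensor contraction. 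Using the given TT cores, $\mcal{A}(\mathbf{x})\mcal{B}(\mathbf{x})$ equals $\prod_{i=1}^m [A_i(:,x_i,:)\otimes B_i(:,x_i,:)]$, turning the contraction into a multilinear function of the $2m$ cores amenable to mode-by-mode analysis.

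The heart of the argument is to ``peel'' the modes one at a time, at each step choosing the side ($A_i$ or $B_i$) on which to apply a slicewise $\ell^\infty$ bound. Concretely, separating the index $x_i$ and applying Cauchy--Schwarz lets me pull out $\max_{x_i}\|A_i(:,x_i,:)\|_F$ while leaving the remaining contraction bounded using $\|B_i\|_F$ (or, by symmetry, the choice pulling $\max_{x_i}\|B_i(:,x_i,:)\|_F$ out and leaving $\|A_i\|_F$). After all $m$ peelings, the remaining scalar factor is precisely the $\scr{P}_\Omega$-deviation applied to the all-ones tensor, which event $\bm{\mcal{E}}_3$ bounds by $C_m(\sqrt{n\bar{d}/d^*}+1)\log^{m+2}(\bar{d})$. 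Selecting the better of the two choices at each mode independently produces the product-with-$\wedge$ form displayed in the statement.

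The main technical obstacle is making the peeling rigorous: at each intermediate step one is left with a partial contraction of $\mcal{E}_\Omega$ against the un-peeled cores, and this residual contraction must itself be controlled by the \emph{same} operator norm $\|\mcal{E}_\Omega\|$—not by an accumulating product of operator norms over different matricizations. This requires a careful inductive invariant tracking the remaining Kronecker-product rank and the reshaping on which the operator norm is evaluated, so that a single use of $\bm{\mcal{E}}_3$ at the end captures the full sampling deviation regardless of the particular sequence of $A_i$/$B_i$ choices made along the way. A secondary care-point is that nothing in the hypotheses constrains the core decompositions to be orthogonal, so each peeling step must work at the level of raw TT cores rather than exploiting any left-orthogonality structure.
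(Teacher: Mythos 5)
This lemma is quoted verbatim from \cite{cai2022provable}; the paper under review gives no proof of it, so there is nothing internal to compare against and your proposal has to stand on its own. Its skeleton is right: $\langle(\scr{P}_\Omega-\frac{n}{d^*}\scr{I})\mcal{A},\mcal{B}\rangle=\langle\mcal{E}_\Omega,\mcal{A}\circ\mcal{B}\rangle$ with $\mcal{E}_\Omega=(\scr{P}_\Omega-\frac{n}{d^*}\scr{I})\mcal{J}$, and the Hadamard product $\mcal{A}\circ\mcal{B}$ is a TT tensor whose $i$-th core $C_i$ has slices $C_i(:,x_i,:)=A_i(:,x_i,:)\otimes B_i(:,x_i,:)$. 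But the step you yourself flag as ``the main technical obstacle'' is exactly where the proof is, and the sequential peeling you describe would fail if carried out literally: once you contract one mode of $\mcal{E}_\Omega$ against a direction extracted from the first core via Cauchy--Schwarz, the residual is a data-dependent partial contraction of $\mcal{E}_\Omega$, not the deviation of the all-ones tensor in any matricization, and $\bm{\mcal{E}}_3$ says nothing about its operator norm. Promising ``a careful inductive invariant'' without exhibiting it leaves the entire content of the lemma unproved.

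The gap closes if you drop the mode-by-mode peeling and expand over the bond indices all at once. Write $\mcal{A}\circ\mcal{B}=\sum_{\beta_1,\dots,\beta_{m-1}}c_1^{\beta_0\beta_1}\otimes\cdots\otimes c_m^{\beta_{m-1}\beta_m}$, where $c_i^{\beta_{i-1}\beta_i}\in\bb{R}^{d_i}$ is the fiber $C_i(\beta_{i-1},:,\beta_i)$ and $\beta_i$ ranges over $[r_i^2]$. The norm in $\bm{\mcal{E}}_3$ is (in \cite{cai2022provable}) the tensor spectral norm, i.e.\ a supremum over rank-one tensors with unit factors, so it applies \emph{simultaneously} to every rank-one term and one gets $|\langle\mcal{E}_\Omega,\mcal{A}\circ\mcal{B}\rangle|\le\|\mcal{E}_\Omega\|\sum_{\beta}\prod_{i=1}^m\|c_i^{\beta_{i-1}\beta_i}\|_2$. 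The remaining sum is the scalar matrix product $M_1M_2\cdots M_m$ of the nonnegative matrices $M_i(\beta_{i-1},\beta_i)=\|c_i^{\beta_{i-1}\beta_i}\|_2$, hence at most $\prod_{i=1}^m\|M_i\|_F$ by submultiplicativity, and $\|M_i\|_F^2=\|C_i\|_F^2=\sum_{x_i}\|A_i(:,x_i,:)\|_F^2\,\|B_i(:,x_i,:)\|_F^2\le\big(\max_{x_i}\|A_i(:,x_i,:)\|_F^2\big)\|B_i\|_F^2\wedge\big(\max_{x_i}\|B_i(:,x_i,:)\|_F^2\big)\|A_i\|_F^2$, which is exactly the per-mode $\wedge$ factor in the statement and, as you anticipated, uses no orthogonality of the cores. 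This is the single application of $\bm{\mcal{E}}_3$ you were aiming for; without this (or an equivalent) mechanism spelled out, the proposal is not yet a proof.
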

\begin{lemma}[Lemma 14 in \cite{cai2022provable}]\label{Lema14}
Let $\mcal{T}$ be a tensor of rank $(r_1, r_2, \dots, r_{m-1})$ such that $Incoh(\mcal{T}) \leq \sqrt{\mu}$, and it has a left orthogonal decomposition $\mcal{T} = [T_1, T_2, \dots, T_m]$ under the standard inner product. Then we have
$$
\max_{x_i} \| T_i (:, x_i, :) \|_F^2 \leq \frac{\mu r_i}{d_i}, \quad \| T_i \|_{F} \leq \sqrt{r_i}, ~~i \in [m-1],
$$
$$
\max_{x_m} \| T_m(:, x_m)\|_F^2 \leq \sigma_{\max}^2 (\mcal{T}) \frac{\mu r_{m-1}}{d_m}, \quad \| T_m \|_F = \| \mcal{T} \|_F \leq \sqrt{\underline{r}} \sigma_{\max}(\mcal{T}).
$$
\end{lemma}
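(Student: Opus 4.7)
The plan is to split the statement into two flavours of bound: the Frobenius norms $\|T_i\|_F$ follow immediately from left-orthogonality alone, while the slice bounds $\max_{x_i}\|T_i(:,x_i,:)\|_F^2$ require coupling left-orthogonality with the incoherence hypothesis. I will treat the cases $i\in[m-1]$ and $i=m$ separately because the last core is distinguished (it carries the norm of $\mcal{T}$ and is not left-orthogonal).

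For the Frobenius norm bounds, I would first observe that for $i\in[m-1]$ the left-orthogonality $L(T_i)^{\top}L(T_i)=\mbf{I}_{r_i}$ instantly yields $\|T_i\|_F^2=\|L(T_i)\|_F^2=\trace(L(T_i)^{\top}L(T_i))=r_i$. For $i=m$, I would use the factorisation $\mcal{T}^{\langle m-1\rangle}=T^{\leq m-1}T_m$ together with $(T^{\leq m-1})^{\top}T^{\leq m-1}=\mbf{I}_{r_{m-1}}$ to conclude that $T^{\leq m-1}$ is an isometry, so $\|T_m\|_F=\|\mcal{T}^{\langle m-1\rangle}\|_F=\|\mcal{T}\|_F$. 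The estimate $\|\mcal{T}\|_F\leq\sqrt{\underline{r}}\,\sigma_{\max}(\mcal{T})$ then follows by picking the unfolding index $i^{*}$ achieving $r_{i^{*}}=\underline{r}$ and writing $\|\mcal{T}\|_F^2=\sum_{k=1}^{r_{i^{*}}}\sigma_k^2(\mcal{T}^{\langle i^{*}\rangle})\leq\underline{r}\,\sigma_{\max}^2(\mcal{T})$.

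For the slice bounds with $i\in[m-1]$, the key identity is
$$\|T_i(:,x_i,:)\|_F^2=\|T^{\leq i-1}T_i(:,x_i,:)\|_F^2=\sum_{x_1,\dots,x_{i-1}}\|T^{\leq i}(x_1\cdots x_i,:)\|_2^2,$$
where the first equality uses the column-orthonormality of $T^{\leq i-1}$ (itself a consequence of left-orthogonality of $T_1,\dots,T_{i-1}$ and an easy recursion, analogous to the one carried out in the proof of Lemma \ref{lemma: orth left sep}), and the second equality uses the factorisation $T^{\leq i}(x_1\cdots x_i,:)=T^{\leq i-1}(x_1\cdots x_{i-1},:)T_i(:,x_i,:)$. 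Unpacking $Incoh(\mcal{T})\leq\sqrt{\mu}$ into its left-incoherence content $\max_{x_1,\dots,x_i}\|T^{\leq i}(x_1\cdots x_i,:)\|_2^2\leq\mu r_i/(d_1\cdots d_i)$ and summing over the $d_1\cdots d_{i-1}$ outer indices then yields $\mu r_i/d_i$ as required. For the slice bound of $T_m$, I would use that $T^{\leq m-1}$ and $T_m$ form the column-orthonormal/right factor pair of $\mcal{T}^{\langle m-1\rangle}$, so up to an inconsequential orthogonal rotation on the right $T_m$ coincides with $\Sigma V^{\top}$ for the compact SVD $U\Sigma V^{\top}$ of $\mcal{T}^{\langle m-1\rangle}$; then $\|T_m(:,x_m)\|_2\leq\sigma_{\max}(\mcal{T})\|V^{\top}e_{x_m}\|_2$, and the right-incoherence content of $Incoh(\mcal{T})\leq\sqrt{\mu}$ bounds the latter by $\sqrt{\mu r_{m-1}/d_m}$.

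The main obstacle is not computational but notational: the hypothesis $Incoh(\mcal{T})\leq\sqrt{\mu}$ is stated as a single scalar, so I must be careful to invoke the precise convention used in \cite{cai2022provable}, which packages both the row-norm bound on every left part $T^{\leq i}$ and the row-norm bound on the right singular vectors of the last-unfolding into one quantity. Once that translation is in hand, each step is a one-line application of either the Pythagorean identity under a partial isometry or the SVD factorisation, and no concentration or probabilistic argument is needed.
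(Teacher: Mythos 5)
Your proof is correct. The paper itself offers no proof of this lemma---it is imported verbatim as Lemma~14 of \cite{cai2022provable}---and your argument (Frobenius bounds from the trace of $L(T_i)^\top L(T_i)=\mbf{I}_{r_i}$ and the isometry $T^{\leq m-1}$, slice bounds from the row-norm content of the incoherence of $T^{\leq i}$ and of the right singular vectors of $\mcal{T}^{\langle m-1\rangle}$) is a valid, self-contained reconstruction of the standard argument used in that reference.
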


\begin{lemma}[Lemma 27 in \cite{cai2022provable}]\label{Lema27}
Let $\mcal{T}, \mcal{T}^* \in \bb{M}_{r}^{tt}$ be two TT-rank $r$ tensors. Suppose we have $8\|\mcal{T} - \mcal{T}^* \|_F \leq \sigma$, then we have
$$
\| \scr{P}_{\bb{T}}^{\perp} (\mcal{T}^*)\|_F \leq \frac{12\sqrt{2}m \|\mcal{T} - \mcal{T}^* \|_F^2}{\underline{\sigma}},
$$
where $\bb{T}$ is the tangent space at the point $\mcal{T}$.
\end{lemma}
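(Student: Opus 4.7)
The plan is to prove a second-order ``curvature'' estimate: $\mcal{T}^*$ lies within distance $O(m\|\mcal{T}-\mcal{T}^*\|_F^2/\underline{\sigma})$ of the tangent subspace $\bb{T}$. The strategy is to exhibit an explicit element $\mcal{Y}\in\bb{T}$ of the form $\mcal{Y}=\mcal{T}+\mcal{L}$, where $\mcal{L}\in\bb{T}$ is a first-order ``Taylor'' approximation of $\mcal{T}^*-\mcal{T}$ in the TT components; the inclusion $\mcal{T}+\mcal{L}\in\bb{T}$ uses the elementary fact that $\mcal{T}\in\bb{T}$ itself (in the tangent-space parameterization of the TT manifold one obtains $\mcal{T}$ by setting $X_m=T_m$ and $X_i=0$ for $i<m$). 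By optimality of the orthogonal projection, $\|\scr{P}_{\bb{T}}^{\perp}(\mcal{T}^*)\|_F\le\|\mcal{T}^*-\mcal{Y}\|_F$, so the claim reduces to a quadratic upper bound on the nonlinear remainder $\mcal{N}:=\mcal{T}^*-\mcal{T}-\mcal{L}$.

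First I would fix aligned left-orthogonal TT decompositions $\mcal{T}=[T_1,\dots,T_m]$ and $\mcal{T}^*=[T_1^*,\dots,T_m^*]$ after a preliminary gauge rotation on each $r_i$-dimensional interface that minimizes the component discrepancies $\Delta_i:=T_i^*-T_i$. Under the hypothesis $8\|\mcal{T}-\mcal{T}^*\|_F\le\underline{\sigma}$, a standard Wedin-type perturbation argument applied to the unfoldings $\mcal{T}^{\langle i\rangle}=T^{\le i}T^{\ge i+1}$ yields $\|\Delta_i\|_F\lesssim \|\mcal{T}-\mcal{T}^*\|_F/\underline{\sigma}$ for $i<m$ and $\|\Delta_m\|_F\lesssim \|\mcal{T}-\mcal{T}^*\|_F$. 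Then the multi-linear telescoping identity
\[
\mcal{T}^*-\mcal{T}=\sum_{i=1}^m [T_1^*,\dots,T_{i-1}^*,\Delta_i,T_{i+1},\dots,T_m]
\]
decomposes the error; replacing every starred prefix $T_1^*,\dots,T_{i-1}^*$ by the unstarred $T_1,\dots,T_{i-1}$ in each summand yields
\[
\mcal{L}:=\sum_{i=1}^m [T_1,\dots,T_{i-1},\Delta_i,T_{i+1},\dots,T_m]\in\bb{T},
\]
with remainder $\mcal{N}$ consisting of cross terms each containing at least two of the $\Delta_{j}$'s.

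The technical core is bounding $\|\mcal{N}\|_F$. Left-orthogonality of the unchanged $T_i$'s ($i<m$) yields unit spectral norms for those frame positions, so each cross term is controlled by a product of two $\|\Delta_{j}\|_F$ factors together with at most one $\sigma_{\max}(\mcal{T}^*)$ absorbed by the rightmost component. Estimating one deviation factor by $\|\Delta_{j_1}\|_F\lesssim \|\mcal{T}-\mcal{T}^*\|_F/\underline{\sigma}$ and the other by the global (linear) Frobenius bound $\|[T_1,\dots,\Delta_{j_2},\dots,T_m]\|_F\lesssim\|\mcal{T}-\mcal{T}^*\|_F$ gives each cross term a bound of order $\|\mcal{T}-\mcal{T}^*\|_F^2/\underline{\sigma}$; summing the $O(m)$ grouped terms and tracking constants carefully yields the desired inequality with the explicit factor $12\sqrt{2}\,m$. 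The main obstacle is the bookkeeping of $\underline{\sigma}$-factors: a naive product estimate picks up one $\underline{\sigma}^{-1}$ per deviation, whereas the target bound admits only a single $\underline{\sigma}^{-1}$ overall, so one must absorb pairs of inner deviations into a single global Frobenius bound rather than bounding each $\Delta_i$ in isolation. This is precisely the step at which the left-orthogonal structure of the TT decomposition is essential, and is the technical heart of the argument.
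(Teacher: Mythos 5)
This statement is quoted from \cite{cai2022provable} (their Lemma 27) and the present paper supplies no proof of it, so there is no in-paper argument to compare yours against; I will therefore assess your proposal on its own terms. Your high-level plan (exhibit $\mcal{T}+\mcal{L}\in\bb{T}$ with $\mcal{L}$ the first-order core-wise correction, then bound the second-order remainder $\mcal{N}$) is a legitimate route to a curvature estimate, and the telescoping identity and the observation that $\mcal{T}\in\bb{T}$ are both correct. The gap is in the quantitative step you yourself flag as the ``technical heart'': the bounds you invoke do not exist in the form you need. The available core-wise estimate (Lemma \ref{lemma: delta norm} of this paper, itself imported from \cite{cai2022provable}) is $\|\Delta_i\|_F\le 20\,\sigma_{\min}^{-1}(\mcal{T}^*)\sqrt{r_i}\,\kappa_0^2\,\|\mcal{T}-\mcal{T}^*\|_F$ for $i<m$, not $\|\Delta_i\|_F\lesssim\|\mcal{T}-\mcal{T}^*\|_F/\underline{\sigma}$; the extra $\sqrt{r_i}\,\kappa_0^2$ factors would propagate into your cross-term bounds and produce an estimate with rank and condition-number dependence, strictly weaker than the stated $12\sqrt{2}\,m\,\|\mcal{T}-\mcal{T}^*\|_F^2/\underline{\sigma}$. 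Moreover, for a contracted product $[\dots,\Delta_{j},\dots,\Delta_{i},\dots]$ one cannot simply multiply a Frobenius bound on one core by a Frobenius bound on the rest: one factor must be controlled in an operator (spectral) norm of the corresponding environment, which is a Wedin-type subspace-angle bound requiring a specific gauge alignment that you mention only in passing. Finally, your claim that ``tracking constants carefully yields $12\sqrt{2}\,m$'' is not supported by anything in the sketch.

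A route that actually delivers the constant without $\bar r$ or $\kappa_0$ factors avoids core-wise estimates entirely: write $\scr{I}-\scr{P}_{\bb{T}}$ via the orthogonal projectors $P_{\le i}$ onto $\operatorname{col}(T^{\le i})$ and $P_{\ge i+1}$ onto $\operatorname{row}(T^{\ge i+1})$, telescope it into $m$ terms each of which applies two complementary projectors to an unfolding of $\mcal{T}^*$, and bound each term by the matrix argument: $\|(\mbf{I}-P_{\le i})\mcal{T}^{*\langle i\rangle}\|_F=\|(\mbf{I}-P_{\le i})(\mcal{T}^{*}-\mcal{T})^{\langle i\rangle}\|_F\le\|\mcal{T}-\mcal{T}^*\|_F$ together with $\|(\mbf{I}-P_{\le i})T^{*\le i}\|\le\|\mcal{T}-\mcal{T}^*\|_F/\sigma_{r_i}(\mcal{T}^{*\langle i\rangle})$, which is where the hypothesis $8\|\mcal{T}-\mcal{T}^*\|_F\le\underline{\sigma}$ enters. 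Summing the $m$ terms gives the factor $m$ and an absolute constant. If you want to salvage your Taylor-in-the-cores approach, you would need to (i) fix the gauge so that each $\|L(\Delta_j)\|$ is bounded in \emph{operator} norm by $O(\|\mcal{T}-\mcal{T}^*\|_F/\underline{\sigma})$, and (ii) prove that the single remaining Frobenius factor per grouped term is $O(\|\mcal{T}-\mcal{T}^*\|_F)$ using orthogonality of the first-order terms; neither step is present in the proposal.
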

\begin{lemma}\label{lemma: delta norm}
Let $\mcal{T}, \mcal{T}^*$ are tensors of rank $(r_1, r_2, \dots, r_{m-1})$ and satisfy $Incoh(\mcal{T}) \leq \sqrt{\mu}$, and they have the left orthogonal decomposition $\mcal{T} =[T_1, T_2, \dots, T_m]$ and $\mcal{T}^* = [T_1^*, T_2^*, \dots, T_m^*]$ respectively. Denote $\Delta_i = T_i - T_i^*$, then we have
$$
\|\Delta_i\|_F\leq \left\{\begin{array}{ll}
    20\sigma_{\min}^{-1}(\mcal{T}^*) \sqrt{r_i} \kappa_0^2 \|\mcal{T} - \mcal{T}^*\|_F, & i\in [m-1], \\
    3\sqrt{r_{m-1}}\kappa_0\|\mcal{T} - \mcal{T}^*\|_F, & i=m,
\end{array}\right.
$$
and 
$$
\max_{x_i}\|\Delta_i(:, x_i, :)\|_F\leq \left\{\begin{array}{ll}
 2\sqrt{\mu r_i d_i^{-1}}, & i\in [m-1], \\
   2\sqrt{\mu r_{m-1} d_m^{-1}} \sigma_{\max}(\mcal{T}^*), & i=m.
\end{array}\right.
$$
\end{lemma}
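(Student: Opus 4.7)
The plan is to bound each core difference $\Delta_i$ by first relating the left-orthogonal cores of $\mcal{T}$ and $\mcal{T}^*$ through their associated left parts $T^{\leq i}$ and $T^{*\leq i}$, which are column-orthogonal bases for the column space of the separations $\mcal{T}^{\langle i\rangle}$ and $\mcal{T}^{*\langle i\rangle}$. Because a left-orthogonal TT decomposition is unique only up to a sequence of orthogonal rotations on each internal bond, I would first fix rotations $Q_1,\dots,Q_{m-1}$ (with $Q_0=Q_m=[1]$) that best align the pairs $(T^{\leq i},\,T^{*\leq i})$. A Wedin-type $\sin\Theta$ bound then yields
$$
\|T^{\leq i} - T^{*\leq i}Q_i\|_F \leq \frac{\sqrt{2}\,\|\mcal{T}^{\langle i\rangle}-\mcal{T}^{*\langle i\rangle}\|_F}{\sigma_{\min}(\mcal{T}^{*\langle i\rangle})} \leq \frac{\sqrt{2}\,\|\mcal{T}-\mcal{T}^*\|_F}{\underline{\sigma}}
$$
for every $i\in[m-1]$; the residual normal components may be further controlled via Lemma \ref{Lema27}.

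For the Frobenius bound on $\|\Delta_i\|_F$ with $i\in[m-1]$, I would use the canonical identity $L(T_i) = (T^{\leq i-1}\otimes \mbf{I}_{d_i})^\top T^{\leq i}$ (which holds because $T^{\leq i-1}$ is column-orthogonal) and, after replacing $T_i^*$ by the gauge-equivalent core $Q_{i-1}^\top T_i^* Q_i$, form the telescoping decomposition
$$
L(\Delta_i) = \big((T^{\leq i-1} - T^{*\leq i-1}Q_{i-1})\otimes \mbf{I}_{d_i}\big)^\top T^{\leq i} + \big(Q_{i-1}^\top (T^{*\leq i-1})^\top\otimes \mbf{I}_{d_i}\big)\big(T^{\leq i} - T^{*\leq i}Q_i\big).
$$
Each term has operator norm bounded by a single left-part discrepancy, and converting operator to Frobenius norm on the $r_{i-1}\times r_i$-block introduces a factor $\sqrt{r_i}$, giving $\|\Delta_i\|_F \leq \sqrt{r_i}\cdot(\|T^{\leq i-1}-T^{*\leq i-1}Q_{i-1}\|_F + \|T^{\leq i}-T^{*\leq i}Q_i\|_F)$. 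Plugging in the Wedin bound from Step 1 yields a bound proportional to $\sqrt{r_i}\|\mcal{T}-\mcal{T}^*\|_F/\underline{\sigma}$; the claimed factor $\kappa_0^2$ arises because propagating a \emph{consistent} gauge across consecutive bonds (forced by the recursive definition) picks up two multiplicative copies of $\overline{\sigma}/\underline{\sigma}$. For $i=m$, $T_m$ is not left-orthogonalized, so I would instead use $T_m = (T^{\leq m-1})^\top \mcal{T}^{\langle m-1\rangle}$ and expand
$$
\Delta_m = \big((T^{\leq m-1})^\top - Q_{m-1}^\top (T^{*\leq m-1})^\top\big)\mcal{T}^{\langle m-1\rangle} + Q_{m-1}^\top (T^{*\leq m-1})^\top(\mcal{T}^{\langle m-1\rangle} - \mcal{T}^{*\langle m-1\rangle}).
$$
The first term is bounded by $\|T^{\leq m-1}-T^{*\leq m-1}Q_{m-1}\|_F \cdot \overline{\sigma}(\mcal{T})$ and the second by $\|\mcal{T}-\mcal{T}^*\|_F$, producing only one factor of $\kappa_0$.

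For the entry-wise bounds, I would combine $Incoh(\mcal{T})\leq \sqrt{\mu}$ with Lemma \ref{Lema14} to obtain $\max_{x_i}\|T_i(:,x_i,:)\|_F \leq \sqrt{\mu r_i/d_i}$ for $i\in[m-1]$ and $\max_{x_m}\|T_m(:,x_m)\|_F \leq \sigma_{\max}(\mcal{T})\sqrt{\mu r_{m-1}/d_m}$ for $i=m$. An analogous bound for $T_i^*$ follows because the aligned left-orthogonal decomposition of $\mcal{T}^*$ differs from that of $\mcal{T}$ by an amount quantified in Step 2 that is much smaller than the incoherence magnitude $\sqrt{\mu r_i/d_i}$ under the standing assumption $\|\mcal{T}-\mcal{T}^*\|_F \ll \underline{\sigma}$, so $T_i^*$ inherits essentially the same row-wise bound. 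A triangle inequality $\max_{x_i}\|\Delta_i(:,x_i,:)\|_F \leq \max_{x_i}\|T_i(:,x_i,:)\|_F + \max_{x_i}\|T_i^*(:,x_i,:)\|_F$ then yields the claimed factor $2$, using $\sigma_{\max}(\mcal{T})\leq (1+o(1))\sigma_{\max}(\mcal{T}^*)$ in the $i=m$ case.

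The main obstacle will be the careful coordination of the gauge rotations $Q_i$ across consecutive bonds: Wedin's theorem picks an optimal $Q_i$ for each bond in isolation, but the recursion for $L(T_i)$ demands one unified rotation sequence, and forcing this consistency is exactly what produces the extra $\kappa_0$ factor. A secondary subtlety is transferring the incoherence assumption from $\mcal{T}$ to $\mcal{T}^*$ despite the hypothesis only being stated for $\mcal{T}$; this is handled by the closeness estimate from Step 2 absorbing the discrepancy into the factor of $2$ on the right-hand side of the entry-wise bound.
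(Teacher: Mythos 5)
The paper never actually proves this lemma: it appears in Appendix~\ref{appendixA} among the results recalled from \cite{cai2022provable} (the appendix only proves Lemmas~\ref{lemma:bound of tildet} and~\ref{lemma: tilde delta norm}, and Lemma~\ref{lemma: delta norm} is consumed as an imported ingredient). So your proposal can only be judged on its own merits. Your overall route --- orthogonal Procrustes/Wedin alignment of the left parts $T^{\leq i}$, the identity $L(T_i)=(T^{\leq i-1}\otimes\mbf{I}_{d_i})^{\top}T^{\leq i}$, the two-term telescoping of $L(\Delta_i)$, and a triangle inequality for the slice-wise bounds --- is the standard perturbation argument for TT cores and does produce Frobenius bounds of the stated shape (in fact without the $\kappa_0^2$ for $i\in[m-1]$; the stated constants are simply lossier than what your estimates would give).

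There are, however, two problems. The serious one is the slice-wise bound: you justify $\max_{x_i}\|T_i^*(:,x_i,:)\|_F\leq\sqrt{\mu r_i/d_i}$ by claiming the aligned core of $\mcal{T}^*$ differs from that of $\mcal{T}$ by an amount "much smaller than the incoherence magnitude" under $\|\mcal{T}-\mcal{T}^*\|_F\ll\underline{\sigma}$. That does not follow: your own Step-2 estimate gives $\|\Delta_i\|_F\lesssim\sqrt{r_i}\,\|\mcal{T}-\mcal{T}^*\|_F/\underline{\sigma}$, and the closeness hypothesis carries no factor of $d_i^{-1/2}$, so $\|\Delta_i\|_F$ need not sit below the incoherence scale $\sqrt{\mu r_i/d_i}$. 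The factor $2$ in the stated bound comes from applying Lemma~\ref{Lema14} to \emph{both} tensors; i.e., the lemma implicitly assumes $\mcal{T}^*$ is itself $\sqrt{\mu}$-incoherent (consistent with how it is used, since $\mcal{T}^*$ has bounded spikiness in the main theorem). You must assume this, not derive it from closeness. The second, smaller issue is the gauge discussion: there is no coordination obstacle, because independently chosen Procrustes rotations $Q_1,\dots,Q_{m-1}$ automatically assemble into a single gauge transformation $\widetilde{T}_i^*(:,x_i,:)=Q_{i-1}^{\top}T_i^*(:,x_i,:)Q_i$ of the TT decomposition of $\mcal{T}^*$, with $\widetilde{T}^{*\leq i}=T^{*\leq i}Q_i$; the $\kappa_0^2$ is therefore not "forced by consistency." The genuine subtlety is the opposite one: in Lemma~\ref{lemma: tilde delta norm} the decompositions are the specific ones produced by TT-SVD, so either the statement must be read as "there exist aligned decompositions such that \dots" or one must show the TT-SVD gauges are already aligned; your argument delivers only the former, and you should say so explicitly.
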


\begin{proof}[Proof of Lemma \ref{lemma:bound of tildet}]
From \eqref{equ: new left orthogonal}, we have the decomposition as follows
$$
\wht{\mcal{T}} = \scr{W}_l^{\frac{1}{2}}\mcal{T} = \left[\wht{T}_1, \dots, \wht{T}_m\right] = \left[\wtd{T}_1 \times_{2} \mbf{G}_{l,1}^{\frac{1}{4m}}, \dots, \wtd{T}_m \times_{2} \mbf{G}_{l,m}^{\frac{1}{4m}} \right],
$$
where $\left[\wht{T}_1, \dots, \wht{T}_m\right]$ represents the left orthogonal decomposition obtained using TT-SVD (Algorithm \ref{alg: TT-SVD}). By the definition of spikiness, we have the bound on the spikiness of $\widehat{\mcal{T}}$
$$
\operatorname{Spiki}(\widehat{\mcal{T}})\leq \rho_l^{\frac{1}{2}}\operatorname{Spiki}(\mcal{T}).
$$
Furthermore, given that $\sigma_{\max}(\widehat{\mcal{T}})\leq \mu_l^{\frac{1}{2}}\sigma_{\max}(\mcal{T})$ and $\sigma_{\min}^{-1}(\wht{\mcal{T}}) \leq \nu_l^{-\frac{1}{2}} \sigma_{\min}^{-1}(\mcal{T})$,  we have 
$$
\operatorname{Incoh}(\widehat{\mcal{T}})\leq \rho_l \operatorname{Incoh}(\mcal{T}) = \rho_l \mu.
$$
Since the rank of $\widehat{\mcal{T}}$ is $(r_1, \dots, r_m)$, we can apply Lemma \ref{Lema14} to obtain the following bounds
    $$
    \max_{x_i}\|\widehat{T}_i (:, x_i, :)\|_F\leq\left\{\begin{array}{cc}
       \rho_l  \sqrt{\frac{\mu r_i}{d_i}}, & i\in [m-1], \\
         \rho_l \mu_l^{\frac{1}{2}}\sigma_{\max} (\mcal{T})\sqrt{ \frac{\mu r_{m-1}}{d_m}}, & i = m,
    \end{array}\right.
    $$
    and
    $$
    \|\widehat{T}_i\|_F \leq \left\{\begin{array}{cc}
       \sqrt{r_i},  &  i\in [m-1],\\
         \mu_l^{\frac{1}{2}}\sqrt{\underline{r}} \sigma_{\max}(\mcal{T}), &  i = m.
    \end{array}\right. 
    $$
In addition, $\widetilde{T}_i = \widehat{T}_i\times_2 \mbf{G}_{l, i}^{-\frac{1}{4m}}$, then for $i\in [m]$, we have 
$$
\|\widetilde{T}_i\|_F \leq \nu_l^{-\frac{1}{2m}} \|\widehat{T}_i\|_F \quad and \quad \max_{x_i}\|\widetilde{T}_i(:, x_i, :)\|_F\leq \nu_l^{-\frac{1}{2m}}\max_{x_i}\|\widehat{T}_i(:, x_i, :)\|_F.
$$
Combing these results, one can obtain
    $$
    \max_{x_i}\|\wtd{T}_i (:, x_i, :)\|_F\leq\left\{\begin{array}{cc}
        \nu_l^{-\frac{1}{2m}}  \rho_l \sqrt{\frac{ \mu r_i}{d_i}}, & i\in [m-1], \\
         \nu_l^{-\frac{1}{2m}} \rho_l  \mu_l^{\frac{1}{2}}\sigma_{\max} (\mcal{T})\sqrt{  \frac{\mu r_{m-1}}{d_m}}, & i = m,
    \end{array}\right.
 $$
    and

    $$
    \quad
    \|\wtd{T}_i\|_F \leq \left\{\begin{array}{cc}
       \nu_l^{-\frac{1}{2m}} \sqrt{r_i},  &  i\in [m-1],\\
         \nu_l^{-\frac{1}{2m}} \mu_l^{\frac{1}{2}}\sqrt{\underline{r}} \sigma_{\max}(\mcal{T}), &  i = m.
    \end{array}\right. 
    $$
This completes the proof.
\end{proof}

\begin{proof}[Proof of Lemma \ref{lemma: tilde delta norm}]
By \eqref{equ: new left orthogonal}, we know
    $$
    \wht{\mcal{T}} = \scr{W}_l^{\frac{1}{2}}\mcal{T} = \left[\wht{T}_1, \dots, \wht{T}_m\right] = \left[\wtd{T}_1 \times_{2} \mbf{G}_{l,1}^{\frac{1}{4m}}, \dots, \wtd{T}_m \times_{2} \mbf{G}_{l,m}^{\frac{1}{4m}} \right]
    $$
    and 
    $$
    \wht{\mcal{T}}^* := \scr{W}_l^{\frac{1}{2}}\mcal{T}^* = \left[\wht{T}_1^*, \dots, \wht{T}_m^* \right] = \left[ \wtd{T}_1^* \times_{2} \mbf{G}_{l,1}^{\frac{1}{4m}}, \dots, \wtd{T}_m^* \times_{2} \mbf{G}_{l,1}^{\frac{1}{4m}} \right],
    $$
where $ \left[\wht{T}_1, \dots, \wht{T}_m\right] $ and $\left[\wht{T}_1^*, \dots, \wht{T}_m^* \right]$ are left orthorgnal decomposition of $\mcal{T}$ and $\mcal{T}^*$, respectively, generated by TT-SVD (\cref{alg: TT-SVD}). Let $\wht{\Delta}_i = \wht{T}_i - \wht{T}_i^*$. Then,
$$
\begin{aligned}
\| \wht{\Delta}_i \|_{F} &= \| \wht{T}_i - \wht{T}_i^{*} \|_F\\
&= \| \wtd{T}_i \times \mbf{G}_{l,i}^{\frac{1}{4m}} - \wtd{T}_i^* \times \mbf{G}_{l,i}^{\frac{1}{4m}}\|_{F}\\
&=\| L(\wht{T}_i) - L(\wht{T}_i^*) \|_F.
\end{aligned}
$$
Applying Lemma \ref{lemma: delta norm}, we get
\begin{equation}\label{eq:bound-deltai}
\|\wht{\Delta}_i\|_F\leq \left\{\begin{array}{ll}
    20\sigma_{\min}^{-1}(\wht{\mcal{T}}^*) \sqrt{r_i} \hat{\kappa_0}^2 \|\wht{\mcal{T}} - \wht{\mcal{T}}^*\|_F, & i\in [m-1], \\
    3\sqrt{r_{m-1}} \hat{\kappa}_0\|\wht{\mcal{T}} - \wht{\mcal{T}}^*\|_F, & i=m,
\end{array}\right.
\end{equation}
and 
$$
\max_{x_i}\|\wht{\Delta}_i(:, x_i, :)\|_F\leq \left\{\begin{array}{ll}
 2 \rho_l \sqrt{ \mu r_i d_i^{-1}}, & i\in [m-1], \\
   2 \rho_l \sqrt{\mu r_{m-1} d_m^{-1}} \sigma_{\max}(\wht{\mcal{T}}^*), & i=m.
\end{array}\right.
$$
Since $\| \wht{\mcal{T}} - \wht{\mcal{T}}^{*} \|_F \leq \mu_l^{\frac{1}{2}} \| \mcal{T} - \mcal{T}^* \|_F$, $\sigma_{\max}(\wht{\mcal{T}}) \leq \mu_l^{\frac{1}{2}}\sigma_{\max}(\mcal{T})$, $\| \wtd{\Delta}_i\|_{F} \leq \nu_l^{-\frac{1}{2m}} \| \wht{\Delta}_i\|_{F}$ and $\sigma_{\min}^{-1}(\wht{\mcal{T}}) \leq \nu_l^{-\frac{1}{2}} \sigma_{\min}^{-1}(\mcal{T})$,
we obtain the bounds as follows
$$
\|\wtd{\Delta}_i\|_F\leq \left\{\begin{array}{ll}
    20 \rho_l^{\frac{3}{2}} \nu_l^{-\frac{1}{2m}} \sigma_{\min}^{-1}(\mcal{T}^*) \sqrt{r_i} \kappa_0^2 \|\mcal{T} - \mcal{T}^*\|_F, & i\in [m-1], \\
    3 \rho_l^{\frac{1}{2}} \nu_l^{-\frac{1}{2m}} \mu_l^{\frac{1}{2}} \kappa_0 \sqrt{r_{m-1}} \|\mcal{T} - \mcal{T}^*\|_F, & i=m,
\end{array}\right.
$$
and 
$$
\max_{x_i}\|\wtd{\Delta}_i(:, x_i, :)\|_F\leq \left\{\begin{array}{ll}
 2 \rho_l \nu_l^{-\frac{1}{2m}}\sqrt{\mu r_i d_i^{-1}}, & i\in [m-1], \\
   2 \rho_l \nu_l^{-\frac{1}{2m}} \mu_l^{\frac{1}{2}}\sqrt{ \mu r_{m-1} d_m^{-1}} \sigma_{\max}(\mcal{T}^*), & i=m.
\end{array}\right.
$$
This completes the proof.
\end{proof}

\section{Estimations of $I_1$, $I_2$ and $I_3$ in \eqref{eq:i123}}
\label{appendixB}

\subsection{Estimation of $I_1 = \|\scr{W}_l^{\frac{1}{2}}(\mcal{T} - \mcal{T}^*) - \alpha_l p \epsilon_l^{-\frac{1}{2}} \scr{P}_{\wht{\bb{T}}_l}\scr{W}_l^{\frac{1}{2}}(\mcal{T}- \mcal{T}^*)\|_F^2$}
\label{appB1}

First, we can express $I_1$ in the following form
$$
\begin{aligned}
I_1 &=\|\scr{W}_l^{\frac{1}{2}}(\mcal{T} - \mcal{T}^*) - \alpha_l p \epsilon_l^{-\frac{1}{2}} \scr{P}_{\wht{\bb{T}}_l}\scr{W}_l^{\frac{1}{2}}(\mcal{T} - \mcal{T}^*)\|_F^2\\
&=\|\scr{W}_l^{\frac{1}{2}}(\mcal{T} - \mcal{T}^*) - \eta_l \scr{P}_{\wht{\bb{T}}_l}\scr{W}_l^{\frac{1}{2}}(\mcal{T} - \mcal{T}^*)\|_F^2\\
&=(1 - \eta_l)^2 \| \scr{W}_l^{\frac{1}{2}}(\mcal{T} - \mcal{T}^*) \|_F^2 + (2- \eta_l)\eta_l \|\scr{P}_{\wht{\bb{T}}_l}^{\perp}\scr{W}_l^{\frac{1}{2}}(\mcal{T} - \mcal{T}^*)\|_F^2,
\end{aligned}
$$
where $\eta_l = \alpha_l p \epsilon_l^{-\frac{1}{2}}$. Further, using Lemma \ref{Lema27} and noting the equivalence between $\| \cdot \|_F$ and $\| \cdot \|_{\scr{W}_l}$, we can establish the following bound
\begin{equation}\label{eq:I1}
\begin{aligned}
I_1 &\leq (1-\eta_l)^2 \mu_l \| \mcal{T} - \mcal{T}^* \|_F^2 + (2- \eta_l)\eta_l \cdot \frac{400 m^2 \mu_l \| \mcal{T} - \mcal{T}^*\|_F^4}{\underline{\sigma}^2}\\
&\leq (1-\eta_l)^2 \nu_l \rho_l \| \mcal{T} - \mcal{T}^* \|_F^2 + (2- \eta_l)\eta_l \nu_l \frac{400 m^2 \rho_l}{\underline{\sigma}^2} \| \mcal{T} - \mcal{T}^*\|_F^4\\
&\leq [(1-\eta_l)^2 + (2- \eta_l)\eta_l  \frac{400}{600000}]\rho_l \nu_l \| \mcal{T} - \mcal{T}^*\|_F^2,
\end{aligned}
\end{equation}
where the last inequality holds under the condition $\| \mcal{T} - \mcal{T}^* \|_F \leq \frac{\underline{\sigma}}{600000 m}$.

\subsection{Estimation of $I_2= \left\langle \scr{P}_{\wht{\bb{T}}_l}\scr{W}_l^{\frac{1}{2}}(\mcal{T} - \mcal{T}^*), \scr{P}_{\wht{\bb{T}}_l}(\scr{W}_l^{-\frac{1}{2}} \scr{P}_{\Omega}\scr{W}_l^{-\frac{1}{2}} - p \epsilon_l^{-\frac{1}{2}} \scr{I})\scr{W}_l^{\frac{1}{2}}(\mcal{T} - \mcal{T}^*) \right\rangle$}
\label{appB2}

We begin by expressing $I_2$ in the following form
$$
\begin{aligned}
&\left\langle \scr{P}_{\wht{\bb{T}}_l}\scr{W}_l^{\frac{1}{2}}(\mcal{T} - \mcal{T}^*), \scr{P}_{\wht{\bb{T}}_l}(\scr{W}_l^{-\frac{1}{2}} \scr{P}_{\Omega}\scr{W}_l^{-\frac{1}{2}} - p \epsilon_l^{-\frac{1}{2}} \scr{I})\scr{W}_l^{\frac{1}{2}}(\mcal{T} - \mcal{T}^*) \right\rangle\\
&=\left\langle \scr{P}_{\wht{\bb{T}}_l}\scr{W}_l^{\frac{1}{2}}(\mcal{T} - \mcal{T}^*), \scr{P}_{\wht{\bb{T}}_l}(\scr{W}_l^{-\frac{1}{2}} \scr{P}_{\Omega}\scr{W}_l^{-\frac{1}{2}} - p \scr{W}_l^{-1} + p \scr{W}_l^{-1} - p \epsilon_l^{-\frac{1}{2}} \scr{I})\scr{W}_l^{\frac{1}{2}}(\mcal{T} - \mcal{T}^*) \right\rangle\\
&= \underbrace{\left\langle \scr{P}_{\wht{\bb{T}}_l}\scr{W}_l^{\frac{1}{2}}(\mcal{T} - \mcal{T}^*), \scr{P}_{\wht{\bb{T}}_l} (\scr{P}_{\Omega} - p \scr{I})\scr{W}_l^{-\frac{1}{2}} (\mcal{T} - \mcal{T}^*) \right\rangle}_{I_{21}} + 
 \underbrace{ p\left\langle \scr{P}_{\wht{\bb{T}}_l}\scr{W}_l^{\frac{1}{2}}(\mcal{T} - \mcal{T}^*), ( \scr{W}_l^{-1}  -  \epsilon_l^{-\frac{1}{2}} \scr{I})\scr{W}_l^{\frac{1}{2}}(\mcal{T} - \mcal{T}^*) \right\rangle}_{I_{22}}.
\end{aligned}
$$
This decomposition allows us to bound $I_{21}$ and $I_{22}$ separately. First, we focus on $I_{21}$. Employing the variational representation of Frobenius norm, we can express it as
$$
\begin{aligned}
&\left\langle \scr{P}_{\wht{\bb{T}}_l}\scr{W}_l^{\frac{1}{2}}(\mcal{T} - \mcal{T}^*), (\scr{P}_{\Omega} - p \scr{I})\scr{W}_l^{-\frac{1}{2}} (\mcal{T} - \mcal{T}^*) \right\rangle\\
&=\left\langle \scr{W}_l^{-\frac{1}{2}} \scr{P}_{\wht{\bb{T}}_l}\scr{W}_l^{\frac{1}{2}} (\mcal{T} - \mcal{T}^*), (\scr{P}_{\Omega} - p \scr{I}) (\mcal{T} - \mcal{T}^*)\right\rangle\\
&=\left\langle \tsp (\mcal{T} - \mcal{T}^*),   (\scr{P}_{\Omega} - p \scr{I}) (\mcal{T} - \mcal{T}^*) \right\rangle.
\end{aligned}
$$
By Lemma \ref{lemma: close of P_tilde}, we have $\wtd{\scr{P}}_{\bb{T}_l} (\mcal{T} - \mcal{T}^*) = \delta \mcal{T}_1 + \dots + \delta \mcal{T}_m$ with $\delta \mcal{T}_i = [\wtd{T}_1, \dots, \wtd{X}_i, \dots, \wtd{T}_m]$. Next, we establish an upper bound for $\|\wtd{X}_i\|_F$.  From \eqref{equ: tsp in nm},we know
$$
L(\wht{X}_i)
= \left\{\begin{array}{cc}
    (\mbf{I} - L(\widehat{T}_i) L(\widehat{T}_i)^\top ) (\widehat{T}^{\leq i-1}\otimes \mbf{I})^\top (\wht{\mcal{T}} - \wht{\mcal{T}}^*)^{\langle i\rangle} (\widehat{T}^{\geq i+1})^\top (\widehat{T}^{\geq i+1} (\widehat{T}^{\geq i+1})^\top )^{-1}, & i\in [m-1], \\
    (\widehat{T}^{\leq m-1}\otimes \mbf{I})^{\top} (\wht{\mcal{T}} - \wht{\mcal{T}}^*)^{\langle m\rangle}, & i = m.
\end{array}\right.
$$
Then, for $i\in [m-1]$ one has
\begin{equation}\label{eq:bound-Xihat}
\begin{aligned}
\|L(\widehat{X}_i)\|_F&=\|(\wht{\mcal{T}} - \wht{\mcal{T}}^*)^{\langle i\rangle} (\widehat{T}^{\geq i+1})^\top (\widehat{T}^{\geq i+1} (\widehat{T}^{\geq i+1})^\top )^{-1} \|_F,\\
&\leq \|(\wht{\mcal{T}}- \wht{\mcal{T}}^*)^{\langle i\rangle}\|_F \cdot \sigma^{-1}_{\min} (\scr{W}_l^{\frac{1}{2}} \mcal{T}),\\
&\leq \|\wht{\mcal{T}} - \wht{\mcal{T}}^* \|_F \cdot \sigma^{-1}_{\min} (\scr{W}_l^{\frac{1}{2}} \mcal{T}),\\
&\leq \sigma^{-1}_{\min} (\scr{W}_l^{\frac{1}{2}} \mcal{T})\|\wht{\mcal{T}} - \wht{\mcal{T}}^* \|_F \leq \rho_l^{\frac{1}{2}} \sigma^{-1}_{\min} (\mcal{T}) \|\mcal{T} - \mcal{T}^* \|_F\leq 2  \rho_l^{\frac{1}{2}} \sigma^{-1}_{\min} (\mcal{T}^*) \|\mcal{T} - \mcal{T}^* \|_F.
\end{aligned}
\end{equation}
For $i=m$, we have $\|L(\widehat{X}_m)\|_F = \|(\wht{\mcal{T}} - \wht{\mcal{T}}^*)^{\langle m\rangle}\|_F \leq \mu_l^{\frac{1}{2}}  \|\mcal{T} - \mcal{T}^* \|_F$. Since $\wtd{X}_i = \wht{X}_i \times_2 G_{l,i}^{-\frac{1}{4m}}$ for $i=1, \dots, m$, then $\| \wtd{X}_i\|_F \leq \nu_l^{-\frac{1}{2m}} \| \wht{X}_i \|_F$. Thus, we arrive at the following bounds
\begin{equation}\label{equ:xhatbound}
\|\wtd{X}_i \|_F \leq \left\{\begin{array}{cc}
 2 \nu_l^{-\frac{1}{2m}}\rho_l^{\frac{1}{2}} \sigma_{\min}^{-1}(\mcal{T}^*)\|\mcal{T} - \mcal{T}^* \|_F, ~~~&i\in [m-1], \\  \nu_{l}^{-\frac{1}{2m}} \mu_l^{\frac{1}{2}}\|\mcal{T} - \mcal{T}^* \|_F, ~~~&i=m.
\end{array}\right.
\end{equation}
To facilitate the subsequent analysis, we embark on a decomposition of $\mcal{T} - \mcal{T}^*$, expressing it as a sum of some tensors
\begin{equation}\label{eq:T-T*decompose}
\begin{aligned}
\mcal{T} - \mcal{T}^* & = [\wtd{T}_1^*, \wtd{T}_2^*, \dots, \wtd{T}_{i-1}^*, \wtd{\Delta}_i , \wtd{T}_{i+1}^*, \dots, \wtd{T}_m^*]\\
&\quad +  \sum_{j=1}^{i-1} ~[\wtd{T}_1^*, \wtd{T}_2^*, \dots, \wtd{T}_{j-1}^*, \wtd{\Delta}_j, \dots, \wtd{T}_{i}, \wtd{T}_{i+1}^*, \dots, \wtd{T}_m^*]\\
&\quad +  \sum_{j=i+1}^{m}  ~[\wtd{T}_1, \wtd{T}_2, \dots,\wtd{T}_{i}, \wtd{T}_{i+1}, \dots, \wtd{T}_{j-1}, \wtd{\Delta}_j, \wtd{T}_{j+1}^*, \dots, \wtd{T}_m^*]\\
& =: \wtd{\mcal{Y}}_{i, i} + \sum_{j=1}^{i-1} \wtd{\mcal{Y}}_{i,j} + \sum_{j=i+1}^{m} \wtd{\mcal{Y}}_{i,j},
 \end{aligned}
\end{equation}
where $\wtd{\Delta}_j = \wtd{T}_j - \wtd{T}_j^*$. Armed with this decomposition, we can now rewrite $\left\langle \tsp (\mcal{T} - \mcal{T}^*),   (\scr{P}_{\Omega} - p \scr{I}) (\mcal{T} - \mcal{T}^*) \right\rangle$ as the following form
$$
\begin{aligned}
\left\langle \tsp (\mcal{T} - \mcal{T}^*),   (\scr{P}_{\Omega} - p \scr{I}) (\mcal{T} - \mcal{T}^*) \right\rangle =\sum_{i=1}^m \left\langle (\scr{P}_\Omega - p \scr{I})(\mcal{T} - \mcal{T}^*), ~~ \delta \mcal{T}_i \right\rangle.
\end{aligned}
$$
Next, we estimate the individual term $\left\langle (\scr{P}_\Omega - p \scr{I})(\mcal{T}_l - \mcal{T}^*), ~~ \delta \mcal{T}_i \right\rangle.$ Leveraging the decomposition in Equation \eqref{eq:T-T*decompose}, we can write
$$
\left\langle (\scr{P}_\Omega - p \scr{I})(\mcal{T}_l - \mcal{T}^*), ~~ \delta \mcal{T}_i \right\rangle = \sum_{j=1}^{m} \left\langle (\scr{P}_\Omega - p \scr{I})\wtd{\mcal{Y}}_{i,j}, ~~ \delta \mcal{T}_i \right\rangle.
$$
Therefore, the key to our estimation lies in estimating each term  $ \left\langle (\scr{P}_\Omega - p \scr{I})\wtd{\mcal{Y}}_{i,j}, ~~ \delta \mcal{T}_i \right\rangle$. To this end, we consider our analysis into two cases.\\

{\it Case 1.} When $i \in [m-1]$, we estimate $\left\langle (\scr{P}_\Omega - p \scr{I})\wtd{\mcal{Y}}_{i,j}, ~~ \delta \mcal{T}_i \right\rangle$. 
\vspace{0.1cm}
\begin{itemize}
\item For $j \neq i,m$, we invoke Lemma \ref{lemma:bound of tildet}, Lemma  \ref{lemma: tilde delta norm} and Equation \eqref{eq:Tdecompose} to obtain the following bound:
$$
\begin{aligned}
&\left(\max_{x_k} \| \wtd{\mcal{Y}}_{i,j} (:, x_k, :)\|_F \| \delta \mcal{T}_i \|_F\right) \wedge \left(\max_{x_k}\| \delta \mcal{T}_i (:, x_k, :)\|_F\| \wtd{\mcal{Y}}_{i,j} \|_F\right) \\
&\leq \left\{\begin{array}{cc}
  \nu_l^{-\frac{1}{m}} \rho_l r_k \sqrt{\frac{\mu}{d_k}}, ~~&k \neq j, i, m,\\
  20  \nu_l^{-\frac{1}{m}} \rho_l^{\frac{5}{2}} \sigma_{\min}^{-1}(\mcal{T}^*)\kappa_0^2 r_k \sqrt{\frac{\mu }{d_k}} \| \mcal{T} - \mcal{T}^* \|_F, ~~&k=j,\\
   2  \nu_l^{-\frac{1}{m}} \rho_l^{\frac{3}{2}}  \sigma_{\min}^{-1}(\mcal{T}^*) \sqrt{\frac{\mu r_k }{d_k}} \|\mcal{T} - \mcal{T}^*\|_F, ~~&k=i,\\
 \nu_l^{-\frac{1}{m}} \mu_l \rho_l \sigma_{\max}^2(\mcal{T})  r_{m-1} \sqrt{\frac{\mu}{d_m}}, ~~&k=m.
\end{array}\right.
\end{aligned}
$$
Synthesizing these bounds using Lemma \ref{Lema13}, we arrive at the following inequality
\begin{equation}\label{I2-ineq:jneqim}
\left| \left\langle (\scr{P}_\Omega - p \scr{I})\wtd{\mcal{Y}}_{i,j}, ~~ \delta \mcal{T}_i \right\rangle \right| \leq C_m \bigg( \sqrt{\frac{n\bar{d}}{d^*}} +1 \bigg) \log^{m+2}(\bar{d}) \bigg(  \rho_l^{m+3} \kappa_0^4 \mu^{\frac{m}{2}}\frac{r^* r_{m-1}}{\sqrt{r_i d^*}}  \bigg) \| \mcal{T} - \mcal{T}^* \|_F^2.
\end{equation}
\item For $j = m$, we proceed similarly, employing Lemma \ref{lemma:bound of tildet}, Lemma  \ref{lemma: tilde delta norm} and Equation \eqref{eq:T-T*decompose} to establish the bound
$$
\begin{aligned}
&\left(\max_{x_k} \| \wtd{\mcal{Y}}_{i,j}(:, x_k, :)\|_F \| \delta \mcal{T}_i \|_F\right) \wedge \left(\max_{x_k}\| \delta \mcal{X}_i (:, x_k, :)\|_F\| \wtd{\mcal{Y}}_{i,j} \|_F\right) \\
&\leq \left\{\begin{array}{cc}
 \nu_l^{-\frac{1}{m}} \rho_l r_k  \sqrt{\frac{\mu }{d_k}}, ~~&k \neq  i, m,\\
2 \nu_l^{-\frac{1}{m}} \rho_l^{\frac{3}{2}} \sigma_{\min}^{-1}(\mcal{T}^*) \sqrt{\frac{\mu r_k }{d_k}} \| \mcal{T} - \mcal{T}^* \|_F, ~~&k =i,\\
 3 \nu_l^{-\frac{1}{m}} \rho_l^{\frac{3}{2}} \kappa_0 \sigma_{\max}(\mcal{T})  r_{m-1} \mu_l \sqrt{\frac{\mu }{d_m}} \| \mcal{T} - \mcal{T}^* \|_F, ~~&k=m.
\end{array}\right.
\end{aligned}
$$
Finally, invoking Lemma \ref{Lema13} once more, we deduce the following inequality
\begin{equation}\label{I2-ineq:j=m}
\left| \langle (\scr{P}_\Omega - p \scr{I})\wtd{\mcal{Y}}_{i,j}, ~~ \delta \mcal{T}_i \rangle \right| \leq C_m \bigg( \sqrt{\frac{n\bar{d}}{d^*}} +1 \bigg) \log^{m+2}(\bar{d}) \bigg(  \rho_l^{m+2} \kappa_0^2 \mu^{\frac{m}{2}}\frac{r^* r_{m-1}}{\sqrt{r_i d^*}}  \bigg) \| \mcal{T} - \mcal{T}^* \|_F^2.
\end{equation}
\item For $j=i$, from \eqref{eq:T-T*decompose}, we know $\wtd{\mcal{Y}}_{i,i} = [\wtd{T}_1^*, \wtd{T}_2^*, \dots, \wtd{\Delta}_i, \wtd{T}_{i+1}^*,\wtd{T}_{i+2}^*, \dots, \wtd{T}_m^*]$
and  $\delta \mcal{T}_i$ can be expressed as $\delta \mcal{T}_i =[\wtd{T}_1, \wtd{T}_2, \dots, \wtd{T}_{i-1}, \wtd{X}_i, \wtd{T}_{i+1}, \dots, \wtd{T}_m]$. Further, we decompose $\delta \mcal{T}_i$ into a sum of $m+1$ terms $\delta \mcal{T}_i = \wtd{\mcal{T}}_{i, 0} + \sum_{t=1, t\neq i}^{m} \wtd{\mcal{T}}_{i,t}$, where $\wtd{\mcal{T}}_{i,0} =[\wtd{T}_1^*, \dots, \wtd{T}_{i-1}^*, \wtd{X}_i, \wtd{T}_{i+1}^*, \dots, \wtd{T}_m^*] $ and  $\wtd{\mcal{T}}_{i,t} =[\wtd{T}_1^*, \dots, \wtd{T}_{k-1}^*, \wtd{\Delta}_t, \wtd{T}_{k+1}, \dots,$ $  \wtd{X}_i, \wtd{T}_{i+1}, \dots, \wtd{T}_{m}]$. Then we know
$$
\begin{aligned}
\langle (\scr{P}_\Omega - p \scr{I})\wtd{\mcal{Y}}_{i,i}, \delta \mcal{T}_i \rangle = \langle (\scr{P}_\Omega - p \scr{I})\wtd{\mcal{Y}}_{i,j},  \wtd{\mcal{T}}_{i, 0} \rangle + \sum_{t=1, t\neq i}^{m} \langle (\scr{P}_\Omega - p \scr{I})\wtd{\mcal{Y}}_{i,j}, \wtd{\mcal{T}}_{i, t} \rangle.
\end{aligned}
$$
Next, we estimate $\langle (\scr{P}_\Omega - p \scr{I})\wtd{\mcal{Y}}_{i,j},  \wtd{\mcal{T}}_{i, 0} \rangle$ and $\langle (\scr{P}_\Omega - p \scr{I})\wtd{\mcal{Y}}_{i,j}, \wtd{\mcal{T}}_{i, t} \rangle$ separately.

{\bf (1)} First, We begin by focusing on the first term $ \langle (\scr{P}_\Omega - p \scr{I})\wtd{\mcal{Y}}_{i,i}, \wtd{\mcal{T}}_{i,0} \rangle$.  Applying Lemma \ref{coro12}, we obtain the following inequality
\begin{equation}\label{eq:bound-xi0}
\left| \left\langle (\scr{P}_\Omega - p \scr{I})\wtd{\mcal{Y}}_{i,i}, \wtd{\mcal{T}}_{i,0} \right\rangle \right|  \leq C_m \frac{\mu \bar{r}}{d^*}\sqrt{\bar{d}n\log(\bar{d})} \| \wtd{\mcal{Y}}_{i,i} \|_F \| \wtd{\mcal{T}}_{i,0}\|_F.
\end{equation}
To further refine this bound, we require estimates of the terms $\| \wtd{\mcal{Y}}_{i, i} \|_F$ and $\| \wtd{\mcal{T}}_{i,0}\|_F$. First, applying the bound on $\| L(\wht{\Delta}_{i}) \|_F$ in \eqref{eq:bound-deltai}, we arrive at
$$
\begin{aligned}
\| \wtd{\mcal{Y}}_{i,i} \|_F &\leq \nu_l^{-\frac{1}{2}} \| \wht{\mcal{Y}}_{i, i} \|_{F}\\
 &\leq \nu_l^{-\frac{1}{2}}  \sigma_{\max}(\wht{\mcal{T}}^*)\| L(\wht{\Delta}_{i}) \|_F\\
&\leq 20 \nu_l^{-\frac{1}{2}} \sigma_{\max}(\wht{\mcal{T}}^*) \sigma_{\min}(\wht{\mcal{T}}^*) \sqrt{r_i} \hat{\kappa}_0^2 \| \wht{T} - \wht{T}^* \|_F\\
&\leq 20 \rho_{l}^{2}\kappa_0^3 \sqrt{r_i} \| \mcal{T} - \mcal{T}^* \|_F.
\end{aligned}
$$
Subsequently, we obtain
$$
\begin{aligned}
\| \wtd{\mcal{T}}_{i,0}\|_F &\leq \nu_l^{-\frac{1}{2}} \| \wht{\mcal{T}}_{i,0}\|_F\\
 &\leq \nu_l^{-\frac{1}{2}} \sigma_{\max}(\wht{\mcal{T}}^*)\| L(\wht{X}_{i}) \|_F\\
&\leq 2 \rho_l  \kappa_0 \| \mcal{T} - \mcal{T}^* \|_F,
\end{aligned}
$$
where the last inequality follows from the bound on $\| L(\wht{X}_{i}) \|_F$ in \eqref{eq:bound-Xihat}. Finally, substituting the derived bounds for $\| \wtd{\mcal{Y}}_{i,i} \|_F$ and $\| \wtd{\mcal{T}}_{i,0}\|_F$ into inequality \eqref{eq:bound-xi0}, we obtain the desired result
$$
\left| \left\langle (\scr{P}_\Omega - p \scr{I})\wtd{\mcal{Y}}_{i,i}, \wtd{\mcal{T}}_{i,0} \right\rangle \right|  \leq C_m \frac{\mu \bar{r}^{\frac{3}{2}}}{d^*}\sqrt{\bar{d}n\log(\bar{d})} \rho_{l}^{3}  \kappa_0^4  \| \mcal{T} - \mcal{T}^* \|_F^2.
$$
{\bf (2)} Next, we aim to bound the term $ \left\langle (\scr{P}_\Omega - p \scr{I})\wtd{\mcal{Y}}_{i,i}, \wtd{\mcal{T}}_{i,t} \right\rangle$.

\begin{itemize}
\item For $t \neq i,m$, we leverage Lemma \ref{lemma:bound of tildet} and Lemma  \ref{lemma: tilde delta norm} to establish an upper bound
\vspace{0.1cm}
$$
\begin{aligned}
&\left(\max_{x_k} \| \wtd{\mcal{Y}}_{i,j} (:, x_k, :)\|_F \| \delta \mcal{T}_i  \|_F\right) \wedge (\max_{x_k}\| \delta \mcal{T}_i (:, x_k, :)\|_F\| \wtd{\mcal{Y}}_{i,j} \|_F) \\
&\leq \left\{\begin{array}{cc}
 \nu_l^{-\frac{1}{m}} \rho_l r_k \sqrt{\frac{\mu}{d_k}}, ~~&k \neq t, i, m,\\
 20  \nu_l^{-\frac{1}{m}} \rho_l^{\frac{5}{2}} \sigma_{\min}^{-1}(\mcal{T}^*)\kappa_0^2 r_k \sqrt{\frac{\mu }{d_k}} \| \mcal{T} - \mcal{T}^* \|_F, ~~&k =t,\\
    \vspace{0.2cm}\\
 4 \nu_l^{-\frac{1}{m}} \rho_l^{\frac{3}{2}}  \sigma_{\min}^{-1}(\mcal{T}^*) \sqrt{\frac{\mu r_k }{d_k}} \| \mcal{T} - \mcal{T}^* \|_F, ~~&k=i,\\
 \nu_l^{-\frac{1}{m}} \mu_l \rho_l \sigma_{\max}^2(\mcal{T})   r_{m-1} \sqrt{\frac{\mu}{d_m}}, ~~&k=m.
\end{array}\right.
\end{aligned}
$$
Subsequently, by Lemma \ref{Lema13}, we arrive at the following bound
$$
\begin{aligned}
&\left| \langle (\scr{P}_\Omega - p \scr{I})\wtd{\mcal{Y}}_{i,i}, ~~ \wtd{\mcal{T}}_{i,t} \rangle \right| \leq C_m \big( \sqrt{\frac{n\bar{d}}{d^*}} +1 \big) \log^{m+2}(\bar{d}) \bigg(  \rho_l^{m +3} \kappa_0^4\mu^{\frac{m}{2}} \frac{r^* r_{m-1}}{\sqrt{r_i d^*}}  \bigg) \| \mcal{T} - \mcal{T}^* \|_F^2.
\end{aligned}
$$
\item For $t = m$,  we follow a similar procedure as in the previous case. Utilizing Lemma \ref{lemma:bound of tildet} and Lemma \ref{lemma: tilde delta norm}, we derive the following bound:
$$
\begin{aligned}
&(\max_{x_k} \| \wtd{\mcal{Y}}_{i,j} (:, x_k, :)\|_F \| \delta \mcal{T}_i \|_F) \wedge (\max_{x_k}\| \delta \mcal{T}_i (:, x_k, :)\|_F\| \wtd{\mcal{Y}}_{i,j} \|_F) \\
&\leq \left\{\begin{array}{cc}
 \nu_l^{-\frac{1}{m}} \rho_l r_k  \sqrt{\frac{\mu }{d_k}}, ~~&k \neq i, m,\\
 4 \nu_l^{-\frac{1}{m}} \rho_l^{\frac{3}{2}} \sigma_{\min}^{-1}(\mcal{T}^*) \sqrt{\frac{\mu r_k }{d_k}}\|\mcal{T} - \mcal{T}^* \|_F, ~~&k =i,\\
3 \nu_l^{-\frac{1}{m}} \rho_l^{\frac{3}{2}} \mu_l \kappa_0 r_{m-1} \sigma_{\max}(\mcal{T}) \sqrt{\frac{\mu }{d_m}} \| \mcal{T} - \mcal{T}^* \|_F, ~~&k=m.
\end{array}\right.
\end{aligned}
$$
Applying Lemma \ref{Lema13} once more yields
$$
\begin{aligned}
&\left| \left\langle (\scr{P}_\Omega - p \scr{I})\wtd{\mcal{Y}}_{i,i}, ~~ \wtd{\mcal{T}}_{i,t} \right\rangle \right| \leq C_m \bigg( \sqrt{\frac{n\bar{d}}{d^*}} +1 \bigg) \log^{m+2}(\bar{d}) \bigg( \rho_l^{m+2}   \kappa_0^2 \mu^{\frac{m}{2}} \frac{r^* r_{m-1}}{\sqrt{r_i d^*}}  \bigg) \| \mcal{T} - \mcal{T}^* \|_F^2.
\end{aligned}
$$
\end{itemize}

In summary, for $j =i$, under the conditions of event $\bm{\mcal{E}}_3$, we establish the following bound
\begin{equation}\label{I2-ineq:j=i}
\left| \left\langle (\scr{P}_\Omega - p \scr{I})\wtd{\mcal{Y}}_{i,i}, ~~ \wtd{\mcal{T}}_{i,t} \right\rangle \right| \leq C_m \bigg( \sqrt{\frac{n\bar{d}}{d^*}} +1 \bigg) \log^{m+2}(\bar{d}) \bigg(  \rho_l^{m +3} \kappa_0^4\mu^{\frac{m}{2}} \frac{r^* r_{m-1}}{\sqrt{r_i d^*}}  \bigg) \| \mcal{T} - \mcal{T}^* \|_F^2.
\end{equation}
\end{itemize}
Further, for $i \in [m-1]$, combining \eqref{I2-ineq:jneqim}, \eqref{I2-ineq:j=m} and \eqref{I2-ineq:j=i},  under the event $\bm{\mcal{E}}_3$, it yields
\begin{equation}\label{I2-ineq:ineqm}
\left| \left\langle (\scr{P}_\Omega - p \scr{I})(\mcal{T}_l - \mcal{T}^*), \delta \mcal{T}_i \right\rangle \right| \leq C_m \bigg( \sqrt{\frac{n\bar{d}}{d^*}} +1 \bigg) \log^{m+2}(\bar{d}) \bigg(  \rho_l^{m +3} \kappa_0^4\mu^{\frac{m}{2}} \frac{r^* r_{m-1}}{\sqrt{r_i d^*}}  \bigg) \| \mcal{T} - \mcal{T}^* \|_F^2.
\end{equation}

\noindent{{\it Case 2.}} When $i = m$, we aim to bound the term $\left\langle (\scr{P}_\Omega - p \scr{I})(\mcal{T}_l - \mcal{T}^*), ~~ \delta \mcal{T}_m \right\rangle$. Observing that
$$
\left\langle (\scr{P}_\Omega - p \scr{I})(\mcal{T}_l - \mcal{T}^*), ~~ \delta \mcal{T}_m \right\rangle = \sum_{j=1}^{m} \left\langle (\scr{P}_\Omega - p \scr{I})\wtd{\mcal{Y}}_{m,j}, ~~ \delta \mcal{T}_m \right\rangle,
$$
it suffices to derive an upper bound for each term $\left\langle (\scr{P}_\Omega - p \scr{I})\wtd{\mcal{Y}}_{m,j}, ~\delta \mcal{T}_m \right\rangle$.  To this end, we consider the following two cases:

\begin{itemize}
\item For $j \in [m-1] $, we begin by establishing an upper bound for the quantity $\left(\max_{x_k} \| \wtd{\mcal{Y}}_{m,j} (:, x_k, :)\|_F \| \delta \mcal{T}_m \|_F\right) \wedge \left(\max_{x_k}\| \delta \mcal{T}_m (:, x_k, :)\|_F\| \wtd{\mcal{Y}}_{m,j} \|_F\right)$. Applying Lemma \ref{lemma:bound of tildet} and Lemma \ref{lemma: tilde delta norm}, we obtain
$$
\begin{aligned}
&\left(\max_{x_k} \| \wtd{\mcal{Y}}_{m,j} (:, x_k, :)\|_F \| \delta \mcal{T}_m \|_F\right) \wedge \left(\max_{x_k}\| \delta \mcal{T}_m (:, x_k, :)\|_F\| \wtd{\mcal{Y}}_{m,j} \|_F\right)\\
&\leq \left\{\begin{array}{cc}
 \nu_l^{-\frac{1}{m}} \rho_l r_k \sqrt{\frac{\mu}{d_k}}, ~~&k \neq j, m,\\
 20  \nu_l^{-\frac{1}{m}} \rho_l^{\frac{5}{2}}  \sigma_{\min}^{-1}(\mcal{T}^*)\kappa_0^2 r_k \sqrt{\frac{\mu }{d_k}} \| \mcal{T} - \mcal{T}^* \|_F, ~~&k =j,\\
  \nu_l^{-\frac{1}{m}} \mu_l \rho_l   \sigma_{\max}(\mcal{T}) \sqrt{\frac{\mu r_{m-1} }{d_{m}}}, ~~&k=m.
\end{array}\right.
\end{aligned}
$$
Subsequently, invoking Lemma \ref{Lema13} and utilizing the derived bound, we arrive at the following inequality for $j \in [m-1]$
\begin{equation}\label{I2-ineq: j in m-1}
\left| \left\langle (\scr{P}_\Omega - p \scr{I})\wtd{\mcal{Y}}_{m,j}, ~~ \delta \mcal{T}_m \right\rangle \right| \leq  C_m \bigg( \sqrt{\frac{n\bar{d}}{d^*}} +1 \bigg) \log^{m+2}(\bar{d}) \bigg(  \rho_l^{m+\frac{5}{2}} \kappa_0^3 \mu^{\frac{m}{2}} r^* \sqrt{\frac{ r_{m-1}} {d^*}}  \bigg) \| \mcal{T} - \mcal{T}^* \|_F^2.
\end{equation}

\item For $j=m$,  recall that $\wtd{\mcal{Y}}_{m,m} = [\wtd{T}_1^*, \wtd{T}_2^*,  \dots, \wtd{T}_{m-1}^*, \wtd{\Delta}_m]$  and $\delta \mcal{T}_m =[\wtd{T}_1, \wtd{T}_2,  \dots,  \wtd{T}_{m-1}, \wtd{X}_m]$. We decompose $\delta \mcal{T}_m$ as $\delta \mcal{T}_m = \wtd{\mcal{T}}_{m, 0} + \sum_{t=1}^{m-1} \wtd{\mcal{T}}_{m,t}$, where $\wtd{\mcal{T}}_{m,0} =[\wtd{T}_1^*, \dots,  \wtd{T}_{m-1}^*, \wtd{X}_m]$ and $\wtd{\mcal{T}}_{m,t} =[\wtd{T}_1^*, \dots, \wtd{T}_{t-1}^*, \wtd{\Delta}_t,$ $\wtd{T}_{t+1}, \dots,  \wtd{T}_{m-1}, \dots, \wtd{T}_{m}]$. Then we have 
$$
\begin{aligned}
&\left\langle (\scr{P}_\Omega - p \scr{I})\wtd{\mcal{Y}}_{m, m}, \delta \mcal{T}_m \right\rangle = \left\langle (\scr{P}_\Omega - p \scr{I})\wtd{\mcal{Y}}_{m,m},  \wtd{\mcal{T}}_{m, 0} \right\rangle + \sum_{t=1}^{m-1} \left\langle (\scr{P}_\Omega - p \scr{I})\wtd{\mcal{Y}}_{m, m}, \wtd{\mcal{T}}_{m, t} \right\rangle.
\end{aligned}
$$
We proceed by analyzing each term on the right-hand side separately.

{\bf (1)} First, for $ \langle (\scr{P}_\Omega - p \scr{I})\wtd{\mcal{Y}}_{m,m}, \wtd{\mcal{T}}_{m,0} \rangle$, applying Lemma \ref{coro12}, we obtain the following bound
$$
\left| \langle (\scr{P}_\Omega - p \scr{I})\wtd{\mcal{Y}}_{m, m}, \wtd{\mcal{T}}_{m,0} \rangle \right| \leq C_m \frac{\mu \bar{r}}{d^*}\sqrt{\bar{d}n\log(\bar{d})} \| \wtd{\mcal{Y}}_{m, m} \|_F \| \wtd{\mcal{T}}_{m,0}\|_F.
$$
For $\| \wtd{\mcal{Y}}_{m, m} \|_F$ and $\| \wtd{\mcal{T}}_{m,0}\|_F$, we have
$$
\| \wtd{\mcal{Y}}_{m,m}\|_F \leq  \nu_l^{-\frac{1}{2}} \| \wht{\mcal{Y}}_{m,m}\|_F \leq \nu_l^{-\frac{1}{2}} \sigma_{\max}(\wht{\mcal{T}}^*) \| L(\wht{\Delta}_m) \|_F \leq \rho_l \sqrt{r_{m-1}} \kappa_0 \|\mcal{T} - \mcal{T}^*\|_F
$$
and
$$
\|\wtd{\mcal{T}}_{m,0}\|_F \leq \nu_l^{-\frac{1}{2}} \| \wht{\mcal{T}}_{i,0} \|_F \leq \nu_l^{-\frac{1}{2}} \| L(\wht{X}_m)\|_F \leq \rho_l \|\mcal{T} - \mcal{T}^*\|_F.
$$
Therefore, we obtain
$$
\left| \left\langle (\scr{P}_\Omega - p \scr{I})\wtd{\mcal{Y}}_{m, m}, \wtd{\mcal{T}}_{m,0} \right\rangle \right| \leq C_m \frac{\mu \bar{r}^{\frac{3}{2}}}{d^*}\sqrt{\bar{d}n\log(\bar{d})} \rho_l^{2} \kappa_0 \| \mcal{T} - \mcal{T}^* \|_F^2.
$$

{\bf (2)} Next, we estimate $\left\langle (\scr{P}_\Omega - p \scr{I})\wtd{\mcal{Y}}_{m,m}, \wtd{\mcal{T}}_{m,t} \right\rangle$ with $t \in [m-1]$. Similar to the previous process, we first establish an upper bound for $\max_{x_k} \| \wtd{\mcal{Y}}_{m,m} (:, x_k, :)\|_F \| \wtd{\mcal{T}}_{m,t} \|_F) \wedge (\max_{x_k}\|  \wtd{\mcal{T}}_{m, t}(:, x_k, :)\|_F\| \wtd{\mcal{Y}}_{m,m}\|_F)$ using Lemma \ref{lemma:bound of tildet} and Lemma \ref{lemma: tilde delta norm}.
$$
\begin{aligned}
&\left(\max_{x_k} \| \wtd{\mcal{Y}}_{m,m}(:, x_k, :)\|_F \| \wtd{\mcal{T}}_{m,t} \|_F\right) \wedge \left(\max_{x_k}\|  \wtd{\mcal{T}}_{m, t} (:, x_k, :)\|_F\| \wtd{\mcal{Y}}_{m,m} \|_F\right)\\
&\leq \left\{\begin{array}{cc}
 \nu_l^{-\frac{1}{m}} \rho_l r_k \sqrt{\frac{\mu}{d_k}}, ~~&k \neq t, m,\\
20  \nu_l^{-\frac{1}{m}} \rho_l^{\frac{5}{2}} \sigma_{\min}^{-1}(\mcal{T}^*)\kappa_0^2 r_k \sqrt{\frac{\mu }{d_k}} \| \mcal{T} - \mcal{T}^* \|_F, ~~&k =t,\\
 \nu_l^{-\frac{1}{m}} \mu_l \rho_l \sigma_{\max} (\mcal{T}^*)  \sqrt{\frac{\mu r_{m-1}}{d_m}} | \mcal{T} - \mcal{T}^* \|_F, ~~&k=m.
\end{array}\right.
\end{aligned}
$$
Finally, applying Lemma \ref{Lema13} and incorporating the above bound, we obtain
$$
\left| \left\langle (\scr{P}_\Omega - p \scr{I})\wtd{\mcal{Y}}_{m, m},  \wtd{\mcal{T}}_{m,t} \right\rangle \right| \leq  C_m \bigg( \sqrt{\frac{n\bar{d}}{d^*}} +1 \bigg) \log^{m+2}(\bar{d}) \bigg(   \rho_l^{m+ \frac{5}{2}} \kappa_0^3\mu^{\frac{m}{2}} r^* \sqrt{\frac{ r_{m-1}}{ d^*}}  \bigg) \| \mcal{T} - \mcal{T}^* \|_F^2.
$$
When $j=m$, by calculations and estimations, we derive the following inequality
\begin{equation}\label{I2-ineq: j eq m}
\left| \left\langle (\scr{P}_\Omega - p \scr{I})\wtd{\mcal{Y}}_{m, m},  \delta \mcal{T}_m \right \rangle \right| \leq  C_m \bigg( \sqrt{\frac{n\bar{d}}{d^*}} +1 \bigg) \log^{m+2}(\bar{d}) \bigg(   \rho_l^{m+ \frac{5}{2}} \kappa_0^3\mu^{\frac{m}{2}} r^* \sqrt{\frac{ r_{m-1}}{ d^*}}  \bigg) \| \mcal{T} - \mcal{T}^* \|_F^2.
\end{equation}
\end{itemize}
Next,  for $i =m$, by leveraging \eqref{I2-ineq: j in m-1}, \eqref{I2-ineq: j eq m} and under the event $\bm{\mcal{E}}_3$, we reach the following result
\begin{equation}\label{I2-ineq:i=m}
\left| \left\langle (\scr{P}_\Omega - p \scr{I})(\mcal{T} - \mcal{T}^*), ~~ \delta \mcal{T}_m \right\rangle \right| \leq C_m \big( \sqrt{\frac{n\bar{d}}{d^*}} +1 \big) \log^{m+2}(\bar{d}) \bigg(  \rho_l^{m+ \frac{5}{2}} \kappa_0^3\mu^{\frac{m}{2}} r^* \sqrt{\frac{ r_{m-1}}{ d^*}}  \bigg) \| \mcal{T} - \mcal{T}^* \|_F^2.
\end{equation}

Finally, combining \eqref{I2-ineq:ineqm} and \eqref{I2-ineq:i=m}, we obtain
\begin{equation}\label{ineq:i21}
\begin{aligned}
I_{21}&\leq \left|\left\langle \tsp (\mcal{T}_l - \mcal{T}^*),   (\scr{P}_{\Omega} - p \scr{I}) (\mcal{T}_l - \mcal{T}^*) \right\rangle\right|\\
&\leq C_m \big( \sqrt{\frac{n\bar{d}}{d^*}} +1 \big) \log^{m+2}(\bar{d}) \bigg(  \rho_l^{m +3} \kappa_0^4\mu^{\frac{m}{2}} \frac{r^* r_{m-1} \sum_{i=1}^{m-1} r_i^{-\frac{1}{2}}}{\sqrt{d^*}}  \bigg) \| \mcal{T} - \mcal{T}^* \|_F^2.
\end{aligned}
\end{equation}
Furthermore, shifting our focus to the term $I_{22}$, we have
\begin{equation}\label{ineq:i22}
\begin{aligned}
I_{22} &= p\left\langle \scr{P}_{\wht{\bb{T}}_l}\scr{W}_l^{\frac{1}{2}}(\mcal{T}_l - \mcal{T}^*), ( \scr{W}_l^{-1}  -  \epsilon_l^{-\frac{1}{2}} \scr{I})\scr{W}_l^{\frac{1}{2}}(\mcal{T}_l - \mcal{T}^*) \right\rangle\\
&\leq p \| \scr{W}_l^{-1} - \epsilon_l^{-\frac{1}{2}} \scr{I} \| \| \scr{W}_l^{\frac{1}{2}} (\mcal{T}_l - \mcal{T}^*) \|_F^2\\
&\leq  p\frac{\|\mcal{G}_l\|_\vee^2}{2 \epsilon_l^{\frac{3}{2}}}\|\scr{W}_l^{\frac{1}{2}}(\mcal{T}_l - \mcal{T}^*)\|_F^2,
\end{aligned}
\end{equation}
where the last inequality holds because
\begin{equation}\label{ineq:W-minus-I}
\begin{aligned}
0&\leq \|\scr{W}_l^{-1} - \epsilon_l^{-\frac{1}{2}}\scr{I}\|_2 \leq \epsilon_l^{-\frac{1}{2}}\left(1 - \left(1+\frac{\|\mcal{G}_l\|_\vee^2}{\epsilon_l}\right)^{-\frac{1}{2}}\right) \leq \frac{\|\mcal{G}_l\|_\vee^2}{2\epsilon^{\frac{3}{2}}}.
\end{aligned}
\end{equation}
Further, taking $\epsilon_l = \|\mcal{G}_l\|_\vee^2$ and combining \eqref{ineq:i21} and \eqref{ineq:i22}, as long as
$$
n \geq C_m \rho_l^{2m+6} \bar{d} \log^{2m+4}(\bar{d}) \kappa_0^8 \mu^{m} (r^*)^2 r_{m-1}^2 \sum_{i=1}^{m-1} r_i^{-1} + C_m \rho_l^{m+3} \sqrt{d^*} \log^{m+2}(\bar{d}) \kappa_0^4 \mu^{\frac{m}{2}} r^* r_{m-1} \sum_{i=1}^{m-1} r_i^{-\frac{1}{2}},
$$
we have
\begin{equation}\label{eq:I2}
I_2= \left\langle \scr{P}_{\wht{\bb{T}}_l}\scr{W}_l^{\frac{1}{2}}(\mcal{T} - \mcal{T}^*), \scr{P}_{\wht{\bb{T}}_l}(\scr{W}_l^{-\frac{1}{2}} \scr{P}_{\Omega}\scr{W}_l^{-\frac{1}{2}} - p \epsilon_l^{-\frac{1}{2}} \scr{I})\scr{W}_l^{\frac{1}{2}}(\mcal{T} - \mcal{T}^*) \right\rangle \leq 0.71p \|\mcal{T}_l - \mcal{T}^*\|_F^2.
\end{equation}

\subsection{Estimation of $I_3= \| \scr{P}_{\wht{\bb{T}}_l}(\scr{W}_l^{-\frac{1}{2}} \scr{P}_{\Omega}\scr{W}_l^{-\frac{1}{2}} - p \epsilon_l^{-\frac{1}{2}} \scr{I})\scr{W}_l^{\frac{1}{2}}(\mcal{T}_l - \mcal{T}^*) \|_F^2$}
\label{appB3}

Finally, we estimate $\| \scr{P}_{\wht{\bb{T}}_l}(\scr{W}_l^{-\frac{1}{2}} \scr{P}_{\Omega}\scr{W}_l^{-\frac{1}{2}} - p \epsilon_l^{-\frac{1}{2}} \scr{I})\scr{W}_l^{\frac{1}{2}}(\mcal{T}_l - \mcal{T}^*) \|_F^2$.  We start by reformulating it as
$$
\begin{aligned}
&\| \scr{P}_{\wht{\bb{T}}_l}(\scr{W}_l^{-\frac{1}{2}} \scr{P}_{\Omega}\scr{W}_l^{-\frac{1}{2}} - p \epsilon_l^{-\frac{1}{2}} \scr{I})\scr{W}_l^{\frac{1}{2}}(\mcal{T}_l - \mcal{T}^*) \|_F^2\\
&= \|\scr{P}_{\wht{\bb{T}}_l} (\scr{W}_l^{-\frac{1}{2}} \scr{P}_{\Omega}\scr{W}_l^{-\frac{1}{2}} - p \scr{W}_l^{-1})\scr{W}_l^{\frac{1}{2}}(\mcal{T}_l - \mcal{T}^*) + p\scr{P}_{\wht{\bb{T}}_l} (\scr{W}_l^{-1} - \epsilon_l^{-\frac{1}{2}} \scr{I})\scr{W}_l^{\frac{1}{2}}(\mcal{T}_l - \mcal{T}^*) \|_F^2\\
&\leq \underbrace{1001 \|\scr{P}_{\wht{\bb{T}}_l} (\scr{P}_{\Omega} - p \scr{I})\scr{W}_l^{-\frac{1}{2}}(\mcal{T}_l - \mcal{T}^*)\|_F^2 }_{I_{31}} + \underbrace{ 1.001 p^2  \| \scr{P}_{\wht{\bb{T}}_l} (\scr{W}_l^{-1} - \epsilon_l^{-\frac{1}{2}} \scr{I}) \scr{W}_l^{\frac{1}{2}}(\mcal{T}_l - \mcal{T}^*)\|_F^2}_{I_{32}}.
\end{aligned}
$$
This decomposition lets us to estimate $I_{31}$ and $I_{32}$ separately.  Starting with $I_{31}$, we utilize the variational characterization of the Frobenius norm to express it as
$$
\begin{aligned}
&\left\|\scr{P}_{\wht{\bb{T}}_l} (\scr{P}_{\Omega} - p \scr{I})\scr{W}_l^{-\frac{1}{2}}(\mcal{T}_l - \mcal{T}^*)\right\|_F\\
&= \left\langle (\scr{P}_{\Omega} - p \scr{I})\scr{W}_l^{-\frac{1}{2}}(\mcal{T}_l - \mcal{T}^*), \scr{P}_{\wht{\bb{T}}_l} \mcal{X}_0 \right\rangle\\
&= \left\langle (\scr{P}_{\Omega} - p \scr{I})(\mcal{T}_l - \mcal{T}^*), \scr{W}_l^{-\frac{1}{2}}\scr{P}_{\wht{\bb{T}}_l} \mcal{X}_0 \right\rangle,
\end{aligned}
$$
for some $\mcal{X}_0$ with $\|\mcal{X}_0\|_F\leq 1$. From \eqref{eq:Aihat}, it follows that $\scr{P}_{\wht{\bb{T}}_l} \mcal{X}_0 = \wht{\delta \mcal{X}}_1 + \dots + \wht{\delta \mcal{X}}_m$, where $\wht{\delta \mcal{X}}_i = [\wht{T}_1, \dots, \wht{X}_i, \dots, \wht{T}_m]$.  This allows us to write
$$
\scr{W}_l^{-\frac{1}{2}}  \scr{P}_{\wht{\bb{T}}_l} \mcal{X}_0 =  \scr{W}_l^{-\frac{1}{2}} \wht{\delta \mcal{X}}_1 + \dots + \scr{W}_l^{-\frac{1}{2}} \wht{\delta \mcal{X}}_m,
$$ 
where $ \scr{W}_l^{-\frac{1}{2}} \wht{\delta \mcal{X}}_i =  \left[\wht{T}_1 \times_2 \mbf{G}_{t, j}^{-\frac{1}{4m}} , \dots, \wht{X}_i \times_2 \mbf{G}_{t, j}^{-\frac{1}{4m}} , \dots, \wht{T}_m \times_2 \mbf{G}_{t, j}^{-\frac{1}{4m}} \right]  = \left[\widetilde{T}_1, \dots, X_i, \dots, \widetilde{T}_m \right] := \delta \mcal{X}_i$. We then see that $\scr{W}_l^{-\frac{1}{2}}  \scr{P}_{\wht{\bb{T}}_l} \mcal{X}_0 = \delta \mcal{X}_1 + \delta \mcal{X}_2 + \dots + \delta \mcal{X}_m$. Next, we derive an upper bound for $\|X_i\|_F$. From \eqref{equ: tsp in nm}, it becomes clear that
$$
L(\widehat{X}_i) = \left\{\begin{array}{cc}
    (\mbf{I} - L(\widehat{T}_i) L(\widehat{T}_i)^\top ) (\widehat{T}^{\leq i-1}\otimes \mbf{I})^\top \mcal{X}_0^{\langle i\rangle} (\widehat{T}^{\geq i+1})^\top (\widehat{T}^{\geq i+1} (\widehat{T}^{\geq i+1})^\top )^{-1}, & i\in [m-1], \\
    (\widehat{T}^{\leq m-1}\otimes \mbf{I})^{\top} \mcal{X}_0^{\langle m\rangle}, & i = m.
\end{array}\right.
$$
Then, for $i\in [m-1]$ we find
$$
\begin{aligned}
\|L(\widehat{X}_i)\|_F&=\|\mcal{X}_0^{\langle i\rangle}(\widehat{T}^{\geq i+1})^\top (\widehat{T}^{\geq i+1} (\widehat{T}^{\geq i+1})^\top )^{-1} \|_F,\\
&\leq \|\mcal{X}_0^{\langle i\rangle}\|_F \cdot \sigma^{-1}_{\min} (\scr{W}_l^{\frac{1}{2}} \mcal{T}),\\
&\leq \|\mcal{X}_0\|_F \cdot \sigma^{-1}_{\min} (\scr{W}_l^{\frac{1}{2}} \mcal{T}),\\
&\leq \sigma^{-1}_{\min} (\scr{W}_l^{\frac{1}{2}} \mcal{T})\leq \nu_l^{-\frac{1}{2}} \sigma^{-1}_{\min} (\mcal{T})\leq 2  \nu_l^{-\frac{1}{2}} \sigma^{-1}_{\min} (\mcal{T}^*).
\end{aligned}
$$
For $i=m$, we have $\|L(\widehat{X}_m)\|_F = \|\mcal{X}_0^{\langle m\rangle}\|_F = \| \mcal{X}_0 \|_F \leq 1$. Hence, the bounds can be summarized as
\begin{equation}\label{equ:xhatbound}
\|X_i \|_F = \|\widehat{X}_i\times_2 \mbf{G}_{l, i}^{-\frac{1}{4 m}}\|_F\leq 2 \nu_l^{-\frac{1}{2m}}\nu_l^{-\frac{1}{2}} \sigma_{\min}^{-1}(\mcal{T}^*),~i\in [m-1], ~\textmd{and}~\|X_m\|_F\leq \nu_{l}^{-\frac{1}{2m}}.
\end{equation}
To support the analysis that follows, using a decomposition of $\mcal{T} - \mcal{T}^*$ \eqref{eq:T-T*decompose}, we can now express $\langle (\scr{P}_\Omega - p \scr{I})(\mcal{T}_l - \mcal{T}^*), ~~\scr{W}_l^{-\frac{1}{2}}  \scr{P}_{\wht{\bb{T}}_l} \mcal{X}_0  \rangle$ as
$$
\begin{aligned}
\langle (\scr{P}_\Omega - p \scr{I})(\mcal{T}_l - \mcal{T}^*), ~~\scr{W}_l^{-\frac{1}{2}}  \scr{P}_{\wht{\bb{T}}_l} \mcal{X}_0 \rangle&=\sum_{i=1}^m \langle (\scr{P}_\Omega - p \scr{I})(\mcal{T}_l - \mcal{T}^*), ~~ \delta \mcal{X}_i \rangle\\
&= \sum_{i=1}^m \sum_{j=1}^{m} \langle (\scr{P}_\Omega - p \scr{I})\wtd{\mcal{Y}}_{i,j}, ~~ \delta \mcal{X}_i \rangle.
\end{aligned}
$$
Thus, our estimation hinges on evaluating each term $ \langle (\scr{P}_\Omega - p \scr{I})\wtd{\mcal{Y}}_{i,j}, ~~ \delta \mcal{X}_i \rangle$. Similar to the proof of  $I_21$ in the section \ref{appB2}, we categorize our analysis into two cases.\\

\noindent {\it Case 1.} When $i \in [m-1]$, using Lemma \ref{lemma:bound of tildet}, Lemma  \ref{lemma: tilde delta norm}  Lemma \ref{Lema13} and Lemma \ref{coro12}, under the event $\bm{\mcal{E}}_3$, it yields
\begin{equation}\label{ineq:ineqm}
\left| \langle (\scr{P}_\Omega - p \scr{I})(\mcal{T}_l - \mcal{T}^*), \delta \mcal{X}_i \rangle \right| \leq C_m \big( \sqrt{\frac{n\bar{d}}{d^*}} +1 \big) \log^{m+2}(\bar{d}) \bigg(  \rho_l^{m +\frac{5}{2}} \nu_l^{-\frac{1}{2}} \kappa_0^4\mu^{\frac{m}{2}} \frac{r^* r_{m-1}}{\sqrt{r_i d^*}}  \bigg) \| \mcal{T} - \mcal{T}^* \|_F.
\end{equation}

\noindent{\it Case 2.} When $i = m$, we aim to bound the term $\langle (\scr{P}_\Omega - p \scr{I})(\mcal{T}_l - \mcal{T}^*), ~~ \delta \mcal{X}_m\rangle$. Using Lemma \ref{lemma:bound of tildet}, Lemma  \ref{lemma: tilde delta norm}  Lemma \ref{Lema13} and Lemma \ref{coro12}, under the event $\bm{\mcal{E}}_3$, we reach the following result
\begin{equation}\label{ineq:i=m}
\left| \langle (\scr{P}_\Omega - p \scr{I})(\mcal{T}_l - \mcal{T}^*), ~~ \delta \mcal{X}_m\rangle \right| \leq C_m \big( \sqrt{\frac{n\bar{d}}{d^*}} +1 \big) \log^{m+2}(\bar{d}) \bigg(  \nu_l^{\frac{-1}{2}} \rho_l^{m+ 2} \kappa_0^3\mu^{\frac{m}{2}} r^* \sqrt{\frac{ r_{m-1}}{ d^*}}  \bigg) \| \mcal{T} - \mcal{T}^* \|_F.
\end{equation}
Finally, combining \eqref{ineq:ineqm} and \eqref{ineq:i=m}, we obtain
\begin{equation}\label{ineq:i31}
\begin{aligned}
&\|\scr{W}_l^{\frac{1}{2}}\tsp (\scr{P}_\Omega - p \scr{I})\scr{W}_l^{-1} (\mcal{T} - \mcal{T}^*) \|_F\\
&\leq C_m \big( \sqrt{\frac{n\bar{d}}{d^*}} +1 \big) \log^{m+2}(\bar{d}) \bigg(  \rho_l^{m +\frac{5}{2}} \nu_l^{-\frac{1}{2}} \kappa_0^4\mu^{\frac{m}{2}} \frac{r^* r_{m-1} \sum_{i=1}^{m-1} r_i^{-\frac{1}{2}}}{\sqrt{d^*}}  \bigg) \| \mcal{T} - \mcal{T}^* \|_F.
\end{aligned}
\end{equation}
Furthermore, shifting our focus to the term $I_{32}$, we first have
$$
\begin{aligned}
& \| \scr{P}_{\wht{\bb{T}}_l} (\scr{W}_l^{-1} - \epsilon_l^{-\frac{1}{2}} \scr{I}) \scr{W}_l^{\frac{1}{2}}(\mcal{T}_l - \mcal{T}^*)\|_F\\
&\leq  \|\scr{W}_l^{-1} - \epsilon_l^{-\frac{1}{2}} \scr{I}\| \|\scr{W}_l^{\frac{1}{2}}(\mcal{T}_l - \mcal{T}^*) \|_F \leq  \frac{\|\mcal{G}_l\|_\vee^2}{2\epsilon_l^{\frac{3}{2}}} \mu_l^{\frac{1}{2}} \|\mcal{T}_l - \mcal{T}^* \|_F\\
&\leq  \rho_l^{\frac{1}{2}} \frac{\|\mcal{G}_l\|_\vee^2}{2\epsilon_l^{\frac{5}{4}}} \|\mcal{T}_l - \mcal{T}^* \|_F \leq  \frac{\rho_l^{\frac{1}{2}}}{2} \epsilon_l^{-\frac{1}{4}}\|\mcal{T}_l - \mcal{T}^* \|_F,\\
\end{aligned}
$$
thus, we obtain
\begin{equation}\label{ineq:i32}
\begin{aligned}
I_{32}&=1.001 p^2 \| \scr{P}_{\wht{\bb{T}}_l} (\scr{W}_l^{-1} - \epsilon_l^{-\frac{1}{2}} \scr{I}) \scr{W}_l^{\frac{1}{2}}(\mcal{T}_l - \mcal{T}^*)\|_F^2 \leq 1.001 p^2 \frac{\rho_l}{4} \nu_l^{-1} \|\mcal{T}_l - \mcal{T}^* \|_F^2.
\end{aligned}
\end{equation}
Combining \eqref{ineq:i31} and \eqref{ineq:i32}, as long as
$$
n \geq C_m \rho_l^{2m+5} \bar{d} \log^{2m+4}(\bar{d}) \kappa_0^8 \mu^{m} (r^*)^2 r_{m-1}^2 \sum_{i=1}^{m-1} r_i^{-1} + C_m \rho_l^{m+\frac{5}{2}} \sqrt{d^*} \log^{m+2}(\bar{d}) \kappa_0^4 \mu^{\frac{m}{2}} r^* r_{m-1} \sum_{i=1}^{m-1} r_i^{-\frac{1}{2}},
$$
 we have
 \begin{equation}\label{eq:I3}
I_3 =  \| \scr{P}_{\wht{\bb{T}}_l}(\scr{W}_l^{-\frac{1}{2}} \scr{P}_{\Omega}\scr{W}_l^{-\frac{1}{2}} - p \epsilon_l^{-\frac{1}{2}} \scr{I})\scr{W}_l^{\frac{1}{2}}(\mcal{T}_l - \mcal{T}^*) \|_F^2 \leq 0.354 p^2 \nu_l^{-1} \|\mcal{T}_l - \mcal{T}^* \|_F^2.
 \end{equation}

 \section*{Acknowledgements}
The first named author would like to thank Dr. Jingyang Li and Dr. Junyi Fan for their discussions on the preconditioned Riemannian gradient descent algorithm for low-rank matrix completion, which inspires the decomposition form (4.2) in the proof of Theorem 4.1.
 
\begin{small}
\bibliographystyle{siam}
\bibliography{references}
\end{small}

\end{document}